\documentclass{article}

\usepackage{iclr2026_conference,times}

\usepackage{microtype}
\usepackage{graphicx}
\usepackage{booktabs}

\usepackage{xcolor}
\usepackage{url}
\usepackage{hyperref}
\hypersetup{
  colorlinks = true,
  linkcolor  = {blue!60!black},
  citecolor  = {blue!60!black},
  urlcolor   = {blue!60!black},
  breaklinks = true,
  pdftitle   = {One Inverse Step is a Convex Program: Bayes-Limit Calibration of Diffusion Inversion},
  pdfauthor  = {Gordei Verbii},
  pdfsubject = {arXiv preprint},
  pdfkeywords= {diffusion models, DDIM inversion, convex optimisation, manifold hypothesis, score Jacobian, measurement validity, calibration}
}

\usepackage{amsmath}
\usepackage{amssymb}
\usepackage{mathtools}
\usepackage{amsthm}
\usepackage{array}

\theoremstyle{plain}
\newtheorem{theorem}{Theorem}
\newtheorem{proposition}[theorem]{Proposition}
\theoremstyle{definition}
\newtheorem{assumption}{Assumption}
\theoremstyle{remark}
\newtheorem{remark}[theorem]{Remark}

\newcommand{\R}{\mathbb{R}}
\newcommand{\E}{\mathbb{E}}
\newcommand{\N}{\mathcal{N}}
\newcommand{\M}{\mathcal{M}}
\newcommand{\tr}{\operatorname{tr}}
\newcommand{\dist}{\operatorname{dist}}
\newcommand{\reach}{\operatorname{reach}}
\newcommand{\eps}{\varepsilon}
\newcommand{\epsb}{\eps_\phi}
\newcommand{\epss}{\eps^{\star}}
\newcommand{\abar}{\bar\alpha}
\newcommand{\xo}{x^{\circ}}
\newcommand{\Jb}{J^{\star}}
\newcommand{\rhog}{\rho_{g}}
\newcommand{\rhoe}{\rho_{\mathrm{emp}}}
\newcommand{\Sc}{\mathsf{S}}
\newcommand{\rank}{\operatorname{rank}}

\title{One Inverse Step is a Convex Program:\\
Bayes-Limit Calibration of Diffusion Inversion}

\author{Gordei Verbii \\
Independent Researcher \\
\texttt{scigverbii@gmail.com}}

\iclrfinalcopy

\begin{document}

\maketitle

\lhead{Preprint: Diffusion Inverse Step as a Strongly Convex Program}

\begin{abstract}
One implicit DDIM inversion step is the cheapest probe of whether a pretrained diffusion model encodes local manifold geometry. It is the stationarity condition of an explicit potential, $x-G(x)=\nabla\Psi_t(x)$, strongly convex at the Bayes limit with modulus exactly $e^{-h_t}$ for the step's log-SNR gap $h_t$ --- for every data law, schedule and point, with no manifold, reach or unimodality hypothesis. Three consequences must be kept apart. (i) The \emph{solution} is unique at the Bayes limit; a second one requires the trained score to violate the posterior-covariance bound by $1/(1-e^{-h_t})$, a hypothesis-free certificate of model error; the same bound makes contraction a schedule constant, $\rhog^{\star}=1-e^{-h_t}<0.326$ throughout the standard DDPM schedule. (ii) The \emph{solver} can still fail: Picard iteration is unit-step gradient descent on $\Psi_t$, unstable wherever $\lambda_{\max}(\nabla^2\Psi_t)>2$, so oscillation certifies nothing; damping below $2/\lambda_{\max}$ cures it. (iii) The geometry lives in the \emph{convergence domain}: on the scale-free depth $w=r\kappa_{\max}$ the oscillation shell sits at $w=\tfrac12$, schedule-free, and the divergence shell at $w=1/(1+\rhog^{\star})$, with a measured finite-noise correction in $\|\mathrm{II}\|^2$. Exact scores reproduce both to within $0.54\%$ on three classes; no trained score we probe shows a shell --- a derived limitation, not a null result: the Fermi window conflicts with the model's own training support by $3.6$--$5.6\times$, and the trained Hessian-Lipschitz constant is $2$--$12\%$ of the curvature the law reads, $0$ on a ReLU net. Finally the unconditional ceiling $\sigma_t\lambda_{\max}(\mathrm{sym}\,J)\le1$, from $\mathrm{Cov}(x_0\mid x_t)\succeq0$ alone, holds for the exact score to $3\times10^{-7}$ but is violated in all DDPM CIFAR-10/CelebA-HQ-256 settings, by $1.26$--$4.66\times$.
\end{abstract}

%-----------------------------------------------------------------------
\section{Introduction}
\label{sec:intro}

The manifold hypothesis holds that natural data $x_0\in\R^D$ concentrate near a submanifold $\M\subset\R^D$ of local dimension $d\ll D$~\citep{pope2021intrinsic}. Diffusion models estimate the score of noise-convolved data~\citep{ho2020denoising,song2021scorebased} and so encode the local normal bundle, and a productive line of work reads local intrinsic dimension (LID) off pretrained models~\citep{stanczuk2024diffusion,kamkari2024geometric,horvat2024gauge,yeats2025connection}. The deterministic DDIM map~\citep{song2020denoising} makes the cheapest such experiment available: solve one implicit fixed point, differentiate the network once, read the geometry off the Jacobian.

This paper asks what that probe is a measurement \emph{of}, and observes that the question has a closed-form answer. Were the network the Bayes-optimal denoiser of a $d$-manifold, every quantity the probe reports would take a value computable in advance from $(d,D,\sigma_t)$ and the mean curvature. That converts the probe from an estimator into an \emph{instrument with a calibration curve}, and supplies its validity conditions for free: each readout is meaningful only inside a window of noise scales, bounded above by the reach and below by the model's own calibration.

The act of solving for it has a closed-form answer too, and that is what organises this paper. One implicit inversion step is the stationarity condition of an explicit potential, strongly convex at the Bayes limit with a modulus fixed by the schedule alone, which separates three objects the literature runs together. The \emph{solution set} is a single point, unconditionally, so a second solution certifies model error with no geometric hypothesis. The \emph{solver's orbit} can nonetheless be pathological at the Bayes limit --- Picard's step size is pinned at $1$ by the algebra of the map, not chosen --- so oscillation and non-convergence certify nothing. And the geometry the probe was built to find is in neither: it is in the size of the \emph{convergence domain}, whose boundary sits at a fixed fraction of the manifold's focal radius, set by the schedule alone.

\paragraph{Contributions.}
\begin{enumerate}\itemsep0pt
\item \textbf{One inverse step is a convex program} (Thm.~\ref{thm:potential}): $F=x-G(x)=\nabla\Psi_t$ with $\nabla^2\Psi_t=e^{-h_t}I+\rhog^{\star}C_{t+1}/\sigma_{t+1}^2\succeq e^{-h_t}I$, so the solution is unique for every data law and every schedule, Picard is unit-step gradient descent, and the condition number is $e^{h_t}$ --- independent of the data, of $d$, of $D$ and of curvature.
\item \textbf{A contraction theorem, with its constant} (Thm.~\ref{thm:contract}): $\rhog\le\rhog^{\star}=|B_t|/\sigma_{t+1}=1-e^{-h_t}<1$ for every schedule with $\abar_T>0$ under Assumption~\ref{ass:unimodal}, the constant itself being unconditional and below $0.326$ on the standard DDPM schedule, and $\rhog=\rhog^{\star}\Sc_{t+1}$ separating schedule from model.
\item \textbf{The convergence domain is a curvature} (Thm.~\ref{thm:focal}): both Picard stability boundaries are fixed fractions of the manifold's focal radius, $w_{\mathrm{osc}}=1/2$ schedule-free and $w_{\mathrm{div}}=1/(1+\rhog^{\star})$, with the measured finite-noise correction $\tfrac12\tilde\sigma_{t+1}^2\|\mathrm{II}\|^2$ --- free of the data law, the density, the dimension and the codimension. Exact scores put both boundaries where the theorem does; on trained scores they are structurally unmeasurable by this probe, and we give the arithmetic.
\item \textbf{A Bayes-limit calibration table} (\S\ref{sec:exp-calib}) giving the exact value of every readout of a one-step probe, verified on three integrable manifolds to machine precision in codimension $1$ and $2$ (Table~\ref{tab:calib}), with two corrections to values assumed in this literature: the Bayes residual at the noiseless point is $(\sigma_t/2)\|H\|/\sqrt{\abar_t}$ under a uniform measure, not $0$ --- also obtained, with a proof, by \citet[Thm.~3.4]{chen2026riemannian} --- and the tangential scaling exponent is $-1$, not $0$.
\item \textbf{Three hypothesis-free falsifications, with an oracle control} (\S\ref{sec:exp-trained}) that separates a broken estimator from a broken model: $\sigma_t\lambda_{\max}(\mathrm{sym}\,J)\le1$, $\mathrm{skew}\,Q=0$ and $\sigma_t\tr J\le D$ follow from $C_t\succeq0$ alone; the exact score attains the first to $3\times10^{-7}$ and the second to $2\times10^{-6}$, and the trained scores cross the first in $10$ of $110$ exactly-computed cloud settings and $45$ of $45$ image settings.
\item \textbf{A negative result about a positive result} (\S\ref{sec:exp-theta}): the descent-based estimator of the {\L}ojasiewicz exponent returns $\tfrac12$ with $R^2=1$ whenever one scale governs the fit window, in particular under the divergent step the published protocol implies. We give the field-based estimator that does carry information, calibrate it, and measure it on CelebA-HQ-256.
\end{enumerate}

%-----------------------------------------------------------------------
\section{Related Work}
\label{sec:related}

\paragraph{Reading geometry off a score.} \citet{stanczuk2024diffusion} prove that at low noise the score points orthogonally toward $\M$ and that the score matrix has rank $D-d$; \citet{kamkari2024geometric} derive the Fokker--Planck estimator, \citet{horvat2024gauge} analyse conservativity, \citet{yeats2025connection} connect the DSM loss to LID, image estimates are due to \citet{pope2021intrinsic,tempczyk2022lidl}, and \citet{ventura2024manifolds,lu2023mathematical} describe the spectral phases and the blow-up rate on $\M$. The differential geometry is classical --- reach and the eiconal property~\citep{federer1959curvature}, the tube volume density~\citep{gray2004tubes}, the ray structure of $\nabla^2\dist^2$~\citep{ambrosio1998distance} --- as is the {\L}ojasiewicz reading we correct in Rem.~\ref{rem:mb}~\citep{witten1982supersymmetry,lojasiewicz1963propriete}. All of it scores the probe as an estimator against a ground-truth dimension; we read it as an instrument and ask what it returns on an input whose answer is known, which is what makes the negatives below attributable.

\paragraph{Inversion as a fixed point, and as an optimisation problem.} DDIM~\citep{song2020denoising} admits an implicit fixed-point inversion whose solvers fall into two families the literature does not separate. Picard~\citep{garibi2024renoise}, Anderson~\citep{anderson1965iterative,pan2023effective} and Newton--Krylov iteration solve one residual with one root --- over $70$ settings they agree to $1.7\times10^{-4}$ of the local noise scale and every readout is solver-independent to $1.6\times10^{-6}$ relative --- whereas the regularised and averaged variants~\citep{samuel2025regularized} solve a \emph{different} residual, whose root is displaced by up to $24.5$ noise scales. Convergence of the Picard branch is studied by \citet{hong2024exact,hang2024exploring} and the deep-equilibrium reading of sampling by \citet{bai2021stabilizing,pokle2022deep}; none writes a potential, and the contraction constant is left free. Reading the step as gradient descent on a strongly convex potential puts it in classical territory: existence and a certified radius follow from Kantorovich's theorem~\citep{kantorovich1982} with no contraction assumption, and the spectral-radius escape of \citet{ostrowski1973solution} is closed here, where $\Jb$ is a posterior covariance and $\rho(B_t\Jb)$ equals its norm. Smale's $\gamma$-theory~\citep{smale1986newton,wang1989dominating,blum1998complexity} needs a constant with no closed form even at Bayes, so App.~\ref{app:smale} uses the Kantorovich triple instead.

\paragraph{What is ours, and what is not.} Four antecedents bound our claims and we claim novelty for none: \citet[\S4]{farghly2025manifold}, whose normal block gives the $1/\sigma_t$ constant of \eqref{eq:normal} and who argue before us that the expressed geometry is a property of the model; the Tweedie--Miyasawa identity \eqref{eq:Jcov}~\citep{miyasawa1961empirical,kadkhodaie2024generalization}; \citeauthor{wenliang2023score}'s $\sigma_t^2$-normalised reading of a trained spectrum; and the codimension-scaled window of \citet[Thm.~1]{niyogi2011topological}. What is new is the potential and its modulus, the schedule constant $\rhog^{\star}$, the convergence-domain law with its finite-noise correction, the ceiling assembled from $C_t\succeq0$, the sampled-point value $D-\delta_t/2$, the $\delta_t$ identity with its exact remainder, the Hamilton--Jacobi constant, and Assumption~\ref{ass:unimodal} made checkable. App.~\ref{app:attrib} sets this out claim by claim, and places \citet{diepeveen2025scorebased} --- whose Riemannian structure is complementary to ours rather than rival, the Fisher metric being affine in $J$~\citep{raskutti2015information} --- and \citet{azeglio2025whats}.

%-----------------------------------------------------------------------
\section{Preliminaries and Problem Setup}
\label{sec:prelim}

Data $x_0\in\R^D$ lie near a manifold $\M\subset\R^D$ of local dimension $d$ and reach $\reach(\M)$, and the DDPM forward process~\citep{sohl2015deep,ho2020denoising} uses a decreasing schedule $\abar_t\in(0,1)$:
\begin{equation}
x_t=\sqrt{\abar_t}\,x_0+\sigma_t\eps,\qquad \sigma_t:=\sqrt{1-\abar_t},\ \ \eps\sim\N(0,I_D).
\label{eq:forward}
\end{equation}
$p_t$ is the law of $x_t$ and $\nabla\log p_t$ its score. The trained noise predictor is $\epsb(x,t)$, its Jacobian $J(x,t):=\partial\epsb/\partial x$, and $\epss(x):=\E[\eps\mid x_t=x]$ the Bayes-optimal denoiser with Jacobian $\Jb$; a star denotes Bayes optimality and nothing else. Two evaluation points must be distinguished: the \emph{sampled} point $x_t$ of \eqref{eq:forward}, and the \emph{noiseless} point $\xo:=\sqrt{\abar_t}\,x_0$. We write $s_1(A)\ge s_2(A)\ge\cdots$ for singular values, $H$ for the mean-curvature vector, $\kappa$ for a principal curvature, $K$ for the sample budget of a rank estimator, $N$ for the number of probed inputs, $n$ for the number of points in a point cloud, and
\begin{equation}
L_t:=\E\|\epsb-\eps\|^2,\quad \Pi_t:=\E\|\epsb\|^2,\quad \delta_t:=D-\Pi_t-L_t .
\label{eq:LPidelta}
\end{equation}
$\theta$ is the {\L}ojasiewicz exponent of the score energy $E_t:=\tfrac12\|\epsb\|^2$ and $\nu_j$ a scaling exponent. Differentiating $p_t$ under the integral gives the Tweedie--Miyasawa identities~\citep{robbins1956empirical,miyasawa1961empirical,efron2011tweedie}
\begin{equation}
\epss(x)=-\sigma_t\nabla\log p_t(x),\ \
\E[\sqrt{\abar_t}x_0\mid x_t]=x_t-\sigma_t\epss(x_t).
\label{eq:tweedie}
\end{equation}

\paragraph{The schedule, and the inversion fixed point.} Write $\lambda_t:=\log(\sqrt{\abar_t}/\sigma_t)$ for the log-SNR, $h_t:=\lambda_t-\lambda_{t+1}>0$ for the gap of the step $t{+}1\to t$, and $\tilde\sigma_t:=\sigma_t/\sqrt{\abar_t}$. With $A_t:=\sqrt{\abar_{t+1}/\abar_t}$ and $B_t:=\sigma_{t+1}-A_t\sigma_t$, the identity $A_t\sigma_t=e^{-h_t}\sigma_{t+1}$ gives, with no expansion in the step size,
\begin{equation}
\rhog^{\star}:=\frac{|B_t|}{\sigma_{t+1}}=1-e^{-h_t},\qquad c:=e^{h_t}-1 .
\label{eq:rhostar}
\end{equation}
Every schedule constant below depends on $h_t$ alone, so a translation of the log-SNR axis --- VP against VE, SD3/Flux against DDPM --- changes none of them. Inverting the generative step $t{+}1\to t$ from a target $y=x_t$ exactly means solving
\begin{equation}
x=G(x):=A_ty+B_t\,\epsb(x,t{+}1)
\label{eq:fixedpoint}
\end{equation}
for $x=x_{t+1}$, by Picard~\citep{garibi2024renoise}, Anderson~\citep{anderson1965iterative,pan2023effective} or Newton-type~\citep{samuel2025regularized} iteration. The network is evaluated at the \emph{unknown, noisier} latent, at index $t{+}1$: that index makes the residual a gradient field (\S\ref{sec:method-potential}), and every constant we quote uses it. Substituting $\epsb(x,t)$ gives a map whose fixed point is not the DDIM preimage and whose Lipschitz factor $|B_t|/\sigma_t=A_tc$ is unbounded in the stride.

\begin{assumption}\label{ass:tube}
$\M$ is a closed $C^2$ submanifold, $p_0$ has a $C^1$ density $\rho_0$ on $\M$ bounded away from $0$, and $\sigma_t<\reach(\M)$, so the normal tube is embedded and Fermi coordinates exist~\citep{federer1959curvature,gray2004tubes}.
\end{assumption}

%-----------------------------------------------------------------------
\section{Proposed Method}
\label{sec:method}

Throughout this section $\epsb=\epss$ and Assumption~\ref{ass:tube} holds, except where a statement says otherwise: Thm.~\ref{thm:potential} and Prop.~\ref{prop:ceiling-sym} need no geometric hypothesis at all, which on natural images makes them the only readings we can trust. \S\ref{sec:exp} verifies each identity numerically and measures the deviation on trained models.

\subsection{The normal-bundle law and a stiffness certificate}
\label{sec:method-normal}

Write $r=\dist(x,\sqrt{\abar_t}\M)$, let $\nu$ be the outward unit normal and $P$ the tangential projector. In Fermi coordinates the tube volume density is $\det(I-rA_\nu)$ with $A_\nu$ the shape operator~\citep{gray2004tubes}, so a Laplace expansion of $p_t$ gives $\epss(x)=(r/\sigma_t)\nu-\tfrac{\sigma_t}{2}H+O(\sigma_tr)$ and
\begin{equation}
\Jb(\xo,t)=\underbrace{\frac{I-P}{\sigma_t}}_{\substack{\text{normal block: the }1/\sigma_t\\ \text{stiffness of the noise tube}}}+\underbrace{O(\sigma_t)}_{\substack{\text{tangential block:}\\ \text{curvature and density}}} .
\label{eq:normal}
\end{equation}
Differentiating \eqref{eq:tweedie} once more gives the second-order Tweedie--Miyasawa identity: exactly, and for every $p_0$,
\begin{equation}
\begin{aligned}
\Jb(x,t)&=\frac{1}{\sigma_t}\Big(I-\frac{C_t(x)}{\sigma_t^{2}}\Big),\\
C_t(x)&:=\operatorname{Cov}\!\big(\sqrt{\abar_t}x_0\,\big|\,x_t{=}x\big)\succeq0,
\end{aligned}
\label{eq:Jcov}
\end{equation}
the form stated for the variance-exploding parametrisation by \citet[Eq.~(9)]{kadkhodaie2024generalization}; we claim no novelty for the identity, only for the singular-value reading. Since $s_1(\Jb)=\sigma_t^{-1}\max_i|1-c_i/\sigma_t^{2}|$ for $c_i$ the eigenvalues of $C_t$, $s_1(\Jb)=1/\sigma_t$ exactly whenever the extremal direction carries $c_i=0$ --- true of the $D-d$ normal directions of an affine $\M$~\citep[Eq.~(10)]{farghly2025manifold}. On a curved $\M$ that variance is positive and $s_1(\Jb)<1/\sigma_t$ strictly; conversely $s_1(\Jb)>1/\sigma_t$ as soon as some $c_i>2\sigma_t^{2}$, which Assumption~\ref{ass:tube} does \emph{not} exclude: on the unit circle with $\rho_0\propto e^{23\cos\theta}$ and $\sigma_t=0.2<\reach(\M)=1$ the Bayes value at the antipode is $s_1(\Jb)=48.6$ against $1/\sigma_t=5$. We therefore add
\begin{assumption}[Posterior unimodality at the probe]\label{ass:unimodal}
$\lambda_{\max}(C_t(\xi))\le2\sigma_t^{2}$ for every $\xi$ on the Picard segment.
\end{assumption}
This is a checkable curvature condition, not an opaque constant: inside the tube of a uniform $\M$ the exact Fermi value is $\lambda_{\max}(C_t)=\sigma_t^{2}/(1-r\kappa_{\max})$, so it holds iff $r\le1/(2\kappa_{\max})$, half the \emph{focal} radius, and with margin $2$ for every log-concave law~\citep{brascamp1976extensions}. It excludes loss of log-concavity, not proximity to the medial axis --- the counterexample above sits \emph{on} $\M$, at $r=0$ --- and it is not a proof convenience but exactly the no-oscillation condition of the solver (Thm.~\ref{thm:focal}), of which it is the coarse-schedule limit, conservative by $2/(1+\rhog^\star)=1.51$ at the first step of the standard DDPM schedule. This motivates the dimensionless
\begin{equation}
\begin{aligned}
\Sc_t&:=\sigma_t\,s_1\big(J(\xo,t)\big),\\
\Sc^{\star}_t&=1-\frac{\sigma_t^{2}}{2\abar_t}\min_{\|\nu\|=1}\|A_\nu\|_F^{2}+O(\sigma_t^{4}),
\end{aligned}
\label{eq:stiff-cert}
\end{equation}
computable from one power iteration, with $A_\nu$ the shape operator in the normal direction $\nu$. Its Bayes value is \emph{not} $1$ at finite noise: the deficit is a curvature term and also the expansion parameter, so with $g_t:=1-\Sc^\star_t$ the residual is $O(g_t^{2})$ and \eqref{eq:stiff-cert} is \emph{exact} on $S^2$. It is a Bayes reference only inside a two-sided window --- $g_t\le0.1$ and $\tilde\sigma_t\le\tfrac12\reach(\M)$, an edge $g_t$ cannot see since $g_t\equiv0$ in codimension $2$ --- which voids $34$ of our $55$ exact-score settings, where we abstain rather than quote a reference out of regime; and for a flattened product the reference takes the \emph{minimum} of $\|A_\nu\|_F^2$ over the probe points, not its mean (App.~\ref{app:hyp}). It vanishes whenever some normal direction has vanishing shape operator, automatic in high codimension --- at $D{=}3072$ for every $d\le76$ --- so the deficit is measurable only in low codimension (App.~\ref{app:method-extra}). We report $\Sc_t$ against $\Sc^\star_t$, never against $1$.

\subsection{One inverse step is a convex program}
\label{sec:method-potential}

\begin{theorem}[The inversion potential and its modulus]
\label{thm:potential}
Let $\epsb=\epss$. The residual of \eqref{eq:fixedpoint} is a gradient field,
\begin{gather*}
F(x):=x-G(x)=\nabla\Psi_t(x),\\
\Psi_t=\tfrac12\|x-A_ty\|^{2}+B_t\sigma_{t+1}\log p_{t+1}(x),
\end{gather*}
and for every data law, every schedule and every $x$,
\[
\begin{aligned}
\nabla^2\Psi_t&=I-B_t\Jb(x,t{+}1)\\
&=e^{-h_t}I+\rhog^{\star}\frac{C_{t+1}(x)}{\sigma_{t+1}^{2}}\ \succeq\ e^{-h_t}I .
\end{aligned}
\]
So $\Psi_t$ is strongly convex with modulus exactly $\mu=e^{-h_t}=1-\rhog^{\star}$, and Picard iteration is gradient descent on $\Psi_t$ with step size exactly $1$. Inside the tube of a uniform $\M$ the spectrum of $\nabla^2\Psi_t$ is $\{1\,(\times d),\,e^{-h_t}(\times D{-}d)\}$: the condition number is $e^{h_t}$, and $s_{\min}(I-B_t\Jb)=e^{-h_t}$ exactly. \emph{(Proof: App.~\ref{app:proofs}.)}
\end{theorem}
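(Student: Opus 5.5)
The plan is to reduce everything to the Tweedie--Miyasawa identities \eqref{eq:tweedie} and \eqref{eq:Jcov}, both of which hold for every $p_0$. The only schedule input is the identity $A_t\sigma_t=e^{-h_t}\sigma_{t+1}$. It follows from $A_t\sigma_t/\sigma_{t+1}=\sqrt{\abar_{t+1}/\abar_t}\,\sigma_t/\sigma_{t+1}=e^{\lambda_{t+1}-\lambda_t}$. Since $h_t>0$, it gives $B_t=\sigma_{t+1}(1-e^{-h_t})>0$, so $B_t/\sigma_{t+1}=\rhog^{\star}$ with no absolute value needed.

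\emph{Step 1 (gradient structure).} Differentiate $\Psi_t$ directly. The quadratic term contributes $x-A_ty$. The log-density term contributes $B_t\sigma_{t+1}\nabla\log p_{t+1}(x)=-B_t\epss(x,t{+}1)$ by the first identity in \eqref{eq:tweedie}, taken at index $t{+}1$. Hence $\nabla\Psi_t=x-A_ty-B_t\epss(x,t{+}1)=x-G(x)=F(x)$. This is exactly where evaluating the network at the unknown latent's index $t{+}1$ matters: with index $t$, the score would not match the $\sigma_{t+1}$ weighting and the residual would not integrate to $\Psi_t$.

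\emph{Step 2 (Hessian and modulus).} Differentiate once more to get $\nabla^2\Psi_t=I-B_t\Jb(x,t{+}1)$. Then substitute \eqref{eq:Jcov} at $t{+}1$:
\[
I-\frac{B_t}{\sigma_{t+1}}\Big(I-\frac{C_{t+1}}{\sigma_{t+1}^2}\Big)=(1-\rhog^{\star})I+\rhog^{\star}\frac{C_{t+1}}{\sigma_{t+1}^2}.
\]
Here $1-\rhog^{\star}=e^{-h_t}$. Because $C_{t+1}\succeq0$ and $\rhog^{\star}>0$, this gives $\nabla^2\Psi_t\succeq e^{-h_t}I$ pointwise, for every data law.

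The modulus is \emph{exactly} $e^{-h_t}$ (as an infimum rather than merely a lower bound) because $\lambda_{\min}(C_{t+1})$ can be made arbitrarily small. For instance, take a normal direction of an affine $\M$, or any law concentrated near a lower-dimensional set. I would state the bound as holding everywhere and being attained in that case.

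The Picard claim is then a one-liner: $x_{k+1}=G(x_k)=x_k-F(x_k)=x_k-1\cdot\nabla\Psi_t(x_k)$, which is gradient descent with step size $1$.

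\emph{Step 3 (tube spectrum).} This is the step I expect to be the main obstacle. It needs the eigenstructure of $C_{t+1}$ inside the tube of a uniform $\M$: eigenvalue $\sigma_{t+1}^2$ on the $d$ tangential directions and $0$ on the $D-d$ normal ones. Plugging in gives Hessian eigenvalues $e^{-h_t}+\rhog^{\star}=1$ with multiplicity $d$ and $e^{-h_t}$ with multiplicity $D-d$. The condition number is therefore $e^{h_t}$, and $s_{\min}(I-B_t\Jb)=e^{-h_t}$.

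For an affine $\M$ with uniform measure this is exact. The posterior of $\sqrt{\abar}x_0$ is Gaussian along the tangent space with covariance $\sigma^2P$ and is degenerate normally, consistent with \eqref{eq:normal}.

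On a curved $\M$, I would use the Fermi-coordinate Laplace expansion behind \eqref{eq:normal} and the value $\lambda_{\max}(C_t)=\sigma_t^2/(1-r\kappa_{\max})$ quoted after Assumption~\ref{ass:unimodal}. These show that the tangential block is $\sigma^2$ up to curvature corrections and that the normal posterior variance is of higher order. So the stated spectrum is the leading-order, flat-tube form. The exact statement that survives curvature is the lower bound $s_{\min}\ge e^{-h_t}$ from Step 2, with equality along any direction of vanishing posterior variance.

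I would present the tube claim in that form: exact in the affine case, and leading order in the curved case. I would not try to prove exact multiplicities on a curved $\M$.
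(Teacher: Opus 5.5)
Your proposal is correct and follows the paper's proof step for step. It uses Tweedie--Miyasawa at index $t{+}1$ for the gradient structure, the second-order identity \eqref{eq:Jcov} together with $A_t\sigma_t=e^{-h_t}\sigma_{t+1}$ for the Hessian and its modulus, and the Fermi values of $C_{t+1}$ for the tube spectrum. Your hedging in Step 3 is, if anything, more careful than the paper's proof, which likewise uses only the leading-order Fermi values and evaluates them at $r=0$; you should state that restriction explicitly, because at signed depth $r$ the tangential eigenvalues are $e^{-h_t}+\rhog^\star/(1-r\kappa_i)$, an $O(r\kappa)$ departure from $1$ rather than a small-$\sigma$ correction.
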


Three readings follow at no cost. Picard's step size is \emph{pinned} at $1$ by the algebra of $G$, so its failures are step-size failures, cured by damping below $2/\Lambda_t$ and optimally at $\eta^\star=2/(\mu+\Lambda_t)$, where $\Lambda_t:=\lambda_{\max}(\nabla^2\Psi_t)$ throughout (not the DSM loss $L_t$). The conditioning of one inverse step is $e^{h_t}$, set by the gap between the indices and not by the data, $d$, $D$ or curvature. And the distance to ill-posedness is a schedule constant, so the Kantorovich constant $a=\|(\nabla^2\Psi_t)^{-1}\|\le e^{h_t}$~\citep{kantorovich1982} is an equality in the normal block --- measured $a/e^{h_t}=0.9947$ on ten exact-score settings --- where the Neumann bound $(1-\rhog)^{-1}$ is loose and needs Assumption~\ref{ass:unimodal} besides.

\begin{theorem}[Uniqueness at Bayes, and the multiplicity certificate]
\label{thm:unique}
For every data law and every schedule with $\abar_T>0$ the fixed point \eqref{eq:fixedpoint} is unique at the Bayes limit, and a trained score admits a second solution only if
\[
\sigma_{t+1}\lambda_{\max}\big(\operatorname{sym}J(\xi,t{+}1)\big)\ \ge\ \frac{1}{\rhog^{\star}}=\frac{1}{1-e^{-h_t}}
\]
for some $\xi$ on the segment. A fold is then the only available bifurcation, and the roots born at it carry multiplier $\rhog^{\star}\sigma_{t+1}\lambda_{\max}>1$, so they are repelling whenever they exist: multiplicity is a Newton observable and is invisible to Picard. \emph{(Proof: App.~\ref{app:proofs}.)}
\end{theorem}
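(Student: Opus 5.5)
The plan is to derive everything from Thm.~\ref{thm:potential} plus a mean-value argument along the segment. For \emph{uniqueness at Bayes}, note that $F=\nabla\Psi_t$ and $\nabla^2\Psi_t\succeq e^{-h_t}I$ everywhere, so $F$ is strongly monotone. For any two roots $x_1,x_2$, I would write
\[
0=\langle F(x_1)-F(x_2),x_1-x_2\rangle=\int_0^1\langle\nabla^2\Psi_t(\xi_s)(x_1-x_2),x_1-x_2\rangle\,ds\ \ge\ e^{-h_t}\|x_1-x_2\|^2,
\]
with $\xi_s=x_2+s(x_1-x_2)$. This forces $x_1=x_2$. The hypothesis $\abar_T>0$ enters only to make $\sigma_{t+1}>0$ and $h_t<\infty$ at every step, so the modulus is positive. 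Existence follows from strong convexity and coercivity of $\Psi_t$, but it is not needed for the uniqueness claim.

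For the \emph{certificate} with a trained score, I would repeat the same identity with $F=x-A_ty-B_t\epsb(x,t{+}1)$. For two distinct roots with $v=x_1-x_2$, this gives $0=\|v\|^2-B_t\int_0^1\langle J(\xi_s,t{+}1)v,v\rangle\,ds$, and only $\operatorname{sym}J$ survives in the quadratic form. Since $B_t<0$ by \eqref{eq:rhostar} ($B_t=-\rhog^{\star}\sigma_{t+1}$), the relation reads $\|v\|^2=-\rhog^{\star}\sigma_{t+1}\int\langle\operatorname{sym}J\,v,v\rangle$. That sign is what the Bayes case exploits, because $\sigma\,\operatorname{sym}\Jb=I-C/\sigma^2\preceq I$. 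I need to check the sign conventions carefully here: with $B_t<0$ the solvability condition becomes $\int\langle\operatorname{sym}J\,v,v\rangle=-\|v\|^2/(\rhog^{\star}\sigma_{t+1})$, which points to large \emph{negative} eigenvalues. I would therefore compare against the statement's sign, $\nabla^2\Psi_t=I-B_t\Jb$, with $I-B_t\Jb$ written as $e^{-h_t}I+\rhog^{\star}C/\sigma^2$.

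Expanding gives $I-B_t\Jb=I+\rhog^{\star}(I-C/\sigma^2)$, which equals $e^{-h_t}I+\dots$ only if the sign of $B_t$ is absorbed as $I-\rhog^{\star}\sigma\Jb$. I would adopt the paper's effective convention: the relevant matrix is $I-\rhog^{\star}\sigma_{t+1}\operatorname{sym}J$. A second root then requires this matrix to fail to be positive definite somewhere on the segment, since otherwise the integral is positive. Hence $\rhog^{\star}\sigma_{t+1}\lambda_{\max}(\operatorname{sym}J(\xi))\ge1$ at some $\xi$, which is the stated bound. Pinning down this convention consistently with Thm.~\ref{thm:potential} is the main bookkeeping obstacle.

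For the \emph{fold}, I would argue along a homotopy from the Bayes score, or in the network parameters. Roots can only appear or disappear where $DF=I-\rhog^{\star}\sigma J$ is singular. Generically the kernel is one-dimensional and the bifurcation is a saddle-node, since the root set starts as a single point and the count is preserved mod 2 by degree (degree $+1$ from coercivity). The Picard multiplier of a root is the spectral radius of $DG=\rhog^{\star}\sigma_{t+1}J$. By the necessary condition above, the new pair is born in the region where $\rhog^{\star}\sigma\lambda_{\max}>1$, so one branch has an eigenvalue of $DG$ exceeding $1$ and is repelling for Picard. Newton converges locally to any nondegenerate root, so multiplicity is a Newton observable. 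The delicate point is making the fold claim rigorous as a genericity statement rather than for every trained score; I would state it under transversality of the one-parameter family.
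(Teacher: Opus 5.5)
\textbf{Sign error in the central step.} The central step rests on a sign you have backwards, and you then patch it by fiat. There is no convention to choose. From $A_t\sigma_t=e^{-h_t}\sigma_{t+1}$ one gets $B_t=\sigma_{t+1}(1-e^{-h_t})$, and $h_t>0$, so $B_t=+\rhog^\star\sigma_{t+1}>0$. The absolute value in \eqref{eq:rhostar} does not signal a negative $B_t$.

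Taken literally, your $B_t=-\rhog^\star\sigma_{t+1}$ would sink both halves of the argument:
\begin{itemize}
\item The Bayes Hessian would be $I+\rhog^\star(I-C_{t+1}/\sigma_{t+1}^2)$. This has no positive lower bound: it loses positive definiteness once an eigenvalue of $C_{t+1}$ exceeds $(1+1/\rhog^\star)\sigma_{t+1}^2$.
\item A second root would need a large \emph{negative} eigenvalue, i.e.\ a bound on $\lambda_{\min}$, not $\lambda_{\max}$.
\end{itemize}
So ``adopting the paper's effective convention'' leaves the key inequality unproved. With the correct sign, $DF_\phi=I-\rhog^\star\sigma_{t+1}J$ follows directly. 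Your segment identity then reads $\int_0^1\langle\operatorname{sym}J(\xi_s,t{+}1)v,v\rangle\,ds=\|v\|^2/(\rhog^\star\sigma_{t+1})$, so the Rayleigh quotient must reach $1/(\rhog^\star\sigma_{t+1})$ somewhere on the segment. That argument, and your strong-monotonicity proof of uniqueness at Bayes, are then exactly the paper's.

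\textbf{The fold paragraph.} This takes a different route from the paper and does not reach the statement. The paper argues by multipliers of $DG=B_tJ_\phi$: only a multiplier $+1$ can create a root, $-1$ gives period doubling, and complex pairs are excluded when $DG$ is symmetric. It obtains the multiplier value $\rhog^\star\sigma_{t+1}\lambda_{\max}$ of the new roots only in the two-atom model, where they sit where $C_{t+1}$ vanishes.

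Your inference ``the pair is born where $\rhog^\star\sigma\lambda_{\max}>1$, so one branch has an eigenvalue of $DG$ above $1$'' fails twice:
\begin{itemize}
\item The segment condition holds at some $\xi$ \emph{between} two roots, not at a root.
\item For non-symmetric $J$, a large $\lambda_{\max}(\operatorname{sym}J)$ does not produce a real eigenvalue of $DG$ above $1$.
\end{itemize}
What does work is the index. The root with $\det(I-DG)<0$ has an odd number of real multipliers above $1$. But that gives one repelling root, not ``they are repelling'' or ``invisible to Picard''. A generic local saddle-node cannot give more, because the partner's centre multiplier sits just below $1$.

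Finally, ``degree $+1$ from coercivity'' and ``roots appear only where $DF$ is singular'' presuppose a coercivity that a trained score need not have. In the paper's own amplitude-scaled two-atom model, coercivity is lost exactly at $\rhog^\star\gamma=1$. The new roots sit at $x=\pm\rhog^\star\gamma/(\rhog^\star\gamma-1)$, so they enter from infinity as $\rhog^\star\gamma\downarrow1$ rather than through a finite point where $DF$ is singular.
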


On a two-atom Bayes law with an amplitude-scaled score $\epsb=\gamma\epss$ the root count moves from $1$ to $3$ exactly where predicted, bracketed at $\rhog^\star\gamma\in(0.996,1.050]$ (App.~\ref{app:proofs}). That is a threshold verified in an analytic model; no trained score in this study was tested against it.

\begin{proposition}[The unconditional ceiling]
\label{prop:ceiling-sym}
For every data law, every $t$ and every $x$, $\sigma_t\lambda_{\max}\big(\operatorname{sym}\Jb(x,t)\big)\le1$, with equality iff $C_t(x)$ is singular in the extremal direction. No manifold, reach, unimodality or tube hypothesis is used.
\end{proposition}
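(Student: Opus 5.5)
The plan is to read the ceiling straight off the second-order Tweedie--Miyasawa identity \eqref{eq:Jcov}, which holds exactly for every $p_0$ and uses no geometric hypothesis. That identity gives
\[
\Jb(x,t)=\frac{1}{\sigma_t}\Big(I-\frac{C_t(x)}{\sigma_t^{2}}\Big).
\]
The first step is to check that $\Jb$ is already symmetric. It is the Hessian of $-\sigma_t\log p_t$ by \eqref{eq:tweedie}, and $C_t(x)$ is a covariance matrix. Hence $\operatorname{sym}\Jb=\Jb$, and its eigenvalues are $\sigma_t^{-1}(1-c_i/\sigma_t^2)$, where the $c_i\ge0$ are the eigenvalues of $C_t(x)$ and $\Jb$ and $C_t(x)$ share eigenvectors.

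The bound then follows from $C_t(x)\succeq0$, which is positive semidefiniteness of a conditional covariance. This gives $\sigma_t\Jb=I-C_t(x)/\sigma_t^2\preceq I$, so $\sigma_t\lambda_{\max}(\operatorname{sym}\Jb)=1-\lambda_{\min}(C_t(x))/\sigma_t^2\le1$. For the equality clause, equality holds iff $\lambda_{\min}(C_t(x))=0$. In that case the maximising eigenvector $v$ of $\operatorname{sym}\Jb$ is exactly a unit vector with $v^\top C_t(x)v=0$, i.e. a direction in which $C_t(x)$ is singular. Conversely, any such null direction attains the value $1$. I will phrase ``singular in the extremal direction'' as $C_t(x)v=0$ for the top eigenvector $v$ of $\operatorname{sym}\Jb$, and note the equivalence to $\ker C_t(x)\neq\{0\}$.

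The only point needing care, and the main (mild) obstacle, is justifying \eqref{eq:Jcov} in full generality. The identity requires differentiating under the integral for an arbitrary data law, including laws with no density and no moments. I would handle this by noting that $p_t$ is a Gaussian convolution, so it is smooth and strictly positive. The posterior of $\sqrt{\abar_t}x_0$ given $x_t=x$ has Gaussian tails for every $x$, which dominates all moments, so differentiation under the integral is legitimate and $C_t(x)$ is finite. I would also remark that none of this uses Assumptions~\ref{ass:tube} or~\ref{ass:unimodal}, which is the content of the final sentence of the statement.
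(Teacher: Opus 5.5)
Your proposal is correct and is essentially the paper's proof: symmetry of $\Jb$ together with \eqref{eq:Jcov} and $C_t(x)\succeq0$ gives $\sigma_t\lambda_{\max}(\operatorname{sym}\Jb)=1-\lambda_{\min}(C_t(x))/\sigma_t^{2}\le1$, with equality exactly when $C_t(x)$ has a null direction. Your extra justification of differentiating under the integral is a correct detail that the paper leaves implicit: for any prior, the posterior of $\sqrt{\abar_t}x_0$ given $x_t=x$ has Gaussian-dominated tails, so $C_t(x)$ is finite.
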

\begin{proof}
$\Jb=\sigma_t^{-1}(I-C_t/\sigma_t^{2})$ by \eqref{eq:Jcov} is symmetric, and $C_t\succeq0$.
\end{proof}

This is the only reading of a score Jacobian here that carries no hypothesis at all, and it is why we report the signed pair $\big(\sigma_t\lambda_{\max}(\operatorname{sym}J),-\sigma_t\lambda_{\min}(\operatorname{sym}J)\big)$ beside $\Sc_t$: $s_1$ fuses the ceiling branch with a medial-axis branch that has none, and on the von Mises circle above the Bayes value is $\Sc_t=9.7$ while the ceiling still holds. So $\Sc_t>1$ does not falsify Bayes optimality, while $\sigma_t\lambda_{\max}(\operatorname{sym}J)>1$ does.

\subsection{One inverse step: a schedule-determined contraction}
\label{sec:method-banach}

The mean-value inequality applied to \eqref{eq:fixedpoint} gives the Lipschitz constant $|B_t|\sup_\xi s_1\big(J(\xi,t{+}1)\big)$ over the Picard segment; we estimate it at the probe point and call it $\rhog$, and write $\rhoe$ for the measured per-step rate, so $\rhoe\le\rhog$. Power-iterating an implicit layer's input Jacobian as a contraction certificate is the standard deep-equilibrium instrument~\citep{bai2021stabilizing,pokle2022deep}; what is new is not the instrument but its reference value, which \eqref{eq:Jcov} supplies in place of the Banach threshold $1$.

\begin{theorem}[Bayes contraction, with a schedule-only constant]
\label{thm:contract}
For the inversion map \eqref{eq:fixedpoint}, under Assumptions~\ref{ass:tube} and \ref{ass:unimodal} with $d<D$,
\[
\rhog\ \le\ \rhog^{\star}=\frac{|B_t|}{\sigma_{t+1}}=1-e^{-h_t}\ <\ 1
\]
for every schedule with $\abar_T>0$, every stride and every parametrisation, the bound being approached only as $h_t\to\infty$. Hence $\rhog=\rhog^{\star}\Sc_{t+1}$ identically, so the bound is attained exactly when $\Sc_{t+1}=1$, i.e.\ when $\M$ is flat in some normal direction. A measured $\rhog\ge1$ is an over-stiffness of the trained score rather than a property of one-step inversion, and it is a property of \emph{Picard} alone: damping with $\eta<2/\Lambda_t$ converges at every $\rhog$, at rate $\tanh(h_t/2)$ when optimal inside the tube, and Newton and Anderson are insensitive to $\Lambda_t$. \emph{(Proof: App.~\ref{app:proofs}.)}
\end{theorem}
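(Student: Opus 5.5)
The plan is to reduce everything to the second-order Tweedie--Miyasawa identity \eqref{eq:Jcov} at index $t{+}1$ and the schedule identity \eqref{eq:rhostar}. The proof has three parts: the scalar bound, the identity $\rhog=\rhog^\star\Sc_{t+1}$, and the solver statements.

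\textbf{Step 1 (the bound).} By definition $\rhog=|B_t|\,s_1\big(J(\xi,t{+}1)\big)$ at the probe point, with $J=\Jb$ at the Bayes limit. Write $c_i$ for the eigenvalues of $C_{t+1}$. By \eqref{eq:Jcov},
\[
s_1(\Jb)=\sigma_{t+1}^{-1}\max_i\big|1-c_i/\sigma_{t+1}^2\big| .
\]
Assumption~\ref{ass:unimodal} gives $0\le c_i\le 2\sigma_{t+1}^2$, so each factor lies in $[-1,1]$. Hence $\sigma_{t+1}s_1(\Jb)=\Sc_{t+1}\le1$. Multiplying by $|B_t|/\sigma_{t+1}=\rhog^\star$ gives $\rhog=\rhog^\star\Sc_{t+1}\le\rhog^\star$, which is the claimed identity.

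\textbf{Step 2 (the constant).} By \eqref{eq:rhostar}, $\rhog^\star=1-e^{-h_t}$. Since $\abar_T>0$, every $\sigma_{t+1}$ lies in $(0,1)$ and every $\lambda$ is finite, so $h_t<\infty$ and $\rhog^\star<1$, with $\rhog^\star\to1$ only as $h_t\to\infty$. Independence from stride and parametrisation holds because $\rhog^\star$ depends on $h_t$ alone, which is invariant under translation of the log-SNR axis.

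\textbf{Step 3 (attainment).} Equality holds iff $\Sc_{t+1}=1$, i.e.\ iff some eigen-direction has $c_i\in\{0,2\sigma_{t+1}^2\}$. Inside the tube, using the Fermi value $\lambda_{\max}(C)=\sigma^2/(1-r\kappa)$ and the expansion \eqref{eq:stiff-cert}, the endpoint $c_i=0$ corresponds to $\min_\nu\|A_\nu\|_F=0$, i.e.\ flatness in some normal direction. The hypothesis $d<D$ guarantees that a normal direction exists. This is where I expect the main care to be needed: one must show that the branch $c_i=2\sigma^2$ is attained only on the boundary of Assumption~\ref{ass:unimodal}, and that the tangential block stays strictly inside the bound. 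For the latter I would argue that tangential eigenvalues of $C_{t+1}$ are of order $\sigma^2\cdot$(positive tangential spread) and lie strictly between $0$ and $2\sigma^2$ at small noise.

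\textbf{Step 4 (solver claims).} By Thm.~\ref{thm:potential}, Picard iteration is gradient descent with unit step on the $\mu$-strongly convex $\Psi_t$, where $\mu=e^{-h_t}$, and $\Lambda_t=\lambda_{\max}(\nabla^2\Psi_t)$. The standard gradient-descent theorem for $\mu$-strongly convex, $\Lambda_t$-smooth functions then gives convergence for every $\eta<2/\Lambda_t$, with the optimal rate $(\Lambda_t-\mu)/(\Lambda_t+\mu)$ at $\eta^\star$. Inside the tube of a uniform $\M$, Thm.~\ref{thm:potential} gives spectrum $\{1,e^{-h_t}\}$, so the rate is
\[
\frac{1-e^{-h_t}}{1+e^{-h_t}}=\tanh(h_t/2).
\]
None of this depends on $\rhog$, so a measured $\rhog\ge1$ reflects a trained-score Jacobian exceeding the Bayes spectrum, i.e.\ over-stiffness. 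For Newton and Anderson, I would note that Newton is affine-invariant and its local convergence depends only on the Kantorovich constants $a\le e^{h_t}$ and the Hessian-Lipschitz constant, not on $\Lambda_t$. For Anderson I would invoke its known equivalence to GMRES on the linearised problem, whose convergence is governed by the spectrum of $\nabla^2\Psi_t$ rather than by the step-size ceiling $2/\Lambda_t$.
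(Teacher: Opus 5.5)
Your proposal is correct and follows the paper's own proof essentially step for step: the exact schedule identity $B_t=\sigma_{t+1}(1-e^{-h_t})$, the definitional factorisation $\rhog=\rhog^\star\Sc_{t+1}$ with $\Sc_{t+1}\le1$ from \eqref{eq:Jcov} and Assumption~\ref{ass:unimodal}, and the standard strongly-convex gradient-descent rate evaluated on the $r=0$ spectrum $\{1,e^{-h_t}\}$ to give $\tanh(h_t/2)$. Your Step~3, which splits $\Sc_{t+1}=1$ into the branches $c_i=0$ and $c_i=2\sigma_{t+1}^2$, and your Newton/Anderson remarks go beyond the paper's proof, which reads attainment directly off the identity and \eqref{eq:stiff-cert} and gives no argument for the Newton/Anderson clause; one wording fix: $\rhog^\star$ itself depends on the stride through $h_t$, so what holds at every stride is the identity and the bound $<1$, not stride-independence of the constant.
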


Evaluating $\rhog^\star=1-e^{-h_t}$ at every step gives $\max_t\rhog^\star=0.326$, attained at the first step, on the standard DDPM schedule ($T{=}1000$, $\beta$ linear on $[10^{-4},0.02]$): \emph{one-step inversion is contractive at every step in the Bayes limit}, and unconditionally. The coarse $T{=}100$, $\beta_{\rm end}{=}0.05$ schedule reads $0.623$, not the $1.65$ of $|B_t|/\sigma_t=A_tc$, which belongs to the explicit map and has no fixed point to contract to. The bound is attained only as $h_t$ diverges: a zero-terminal-SNR top step reads exactly $1$, and a cosine schedule binds at its \emph{last} step at a value set by the library's $\abar$ floor ($0.968$ or $0.936$) rather than by the schedule. The other five families satisfy the identity to $9.4\times10^{-16}$ and stay strictly below $1$ at strides $1$ to $100$, with $\rhog^\star$ spanning $0.024$ to $0.971$: the headroom is a design choice, not a constant of the method (App.~\ref{app:calib}).

\subsection{The convergence domain is a curvature}
\label{sec:method-focal}

\begin{theorem}[Focal-radius law, at the Bayes limit]
\label{thm:focal}
Let Assumption~\ref{ass:tube} hold, let $\epsb=\epss$, and displace inward from $\xo_{t+1}$ along a principal normal by $r$, writing $w_i:=r\kappa_i$ and $w:=r\kappa_{\max}$ for the scale-free depth ($1/\kappa_{\max}$ being the focal radius). The Picard multipliers are then exactly $+\rhog^{\star}$ ($\times\,D{-}d$) and $-\rhog^{\star}w_i/(1-w_i)$ ($\times\,d$), for every $\sigma_t$ and every data law. Hence \textup{(i)} the \emph{oscillation} boundary, where the dominant multiplier changes sign and equivalently where Assumption~\ref{ass:unimodal} fails, is $w_{\rm osc}=\tfrac12$, \emph{independent of the schedule}; \textup{(ii)} the \emph{divergence} boundary $\lambda_{\max}(\nabla^2\Psi_t)=2$ is $w_{\rm div}=1/(1+\rhog^{\star})$; and \textup{(iii)} at finite noise both carry a closed-form correction set by the full second fundamental form,
\begin{equation}
\begin{aligned}
w_{\rm div}&=\frac{1}{1+\rhog^{\star}}+\tfrac12\tilde\sigma_{t+1}^{2}\|\mathrm{II}\|^{2}+O(\tilde\sigma^{4}),\\
w_{\rm osc}&=\tfrac12-\tfrac14(\tilde\sigma\kappa)^{4}+O(\tilde\sigma^{6}).
\end{aligned}
\label{eq:focal}
\end{equation} \emph{(Proof: App.~\ref{app:proofs}.)}
\end{theorem}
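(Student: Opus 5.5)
The plan is to read everything off the Hessian identity of Thm.~\ref{thm:potential}. The Picard map $G$ is the unit-step gradient map, so its Jacobian is $\partial G = I-\nabla^2\Psi_t = B_t\Jb(x,t{+}1)$, and the Picard multipliers are the eigenvalues of $I-\nabla^2\Psi_t$. From $\nabla^2\Psi_t=e^{-h_t}I+\rhog^{\star}C_{t+1}/\sigma_{t+1}^2$ and $1-e^{-h_t}=\rhog^{\star}$, the multipliers are
\[
m_i=\rhog^{\star}\Big(1-\frac{c_i}{\sigma_{t+1}^2}\Big),
\]
with $c_i$ the eigenvalues of $C_{t+1}$. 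This step uses no geometry. It reduces the theorem to the spectrum of the posterior covariance at the displaced point $x=\xo_{t+1}-r\nu$.

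\textbf{Step 1: the leading-order spectrum of $C_{t+1}$.} I would use Fermi coordinates under Assumption~\ref{ass:tube}. A point of the tube is $y+s n$ with $y\in\sqrt{\abar_{t+1}}\M$. The posterior over the foot point is, up to the density factor, $\exp(-\|x-\sqrt{\abar}\,y'\|^2/2\sigma^2)$. Expanding $\|x-y'\|^2$ to second order in the tangential displacement $u$ at the foot point gives the Hessian $P(I-rA_\nu)P$. This is the standard computation behind the tube density $\det(I-rA_\nu)$. The posterior is therefore Gaussian in $u$ with covariance $\sigma^2(I-rA_\nu)^{-1}$, which has eigenvalues $\sigma^2/(1-w_i)$. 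Consistently with the quoted value $\lambda_{\max}(C_t)=\sigma_t^2/(1-r\kappa_{\max})$, the normal directions carry $c=0$ at leading order. Substituting gives multipliers $\rhog^{\star}$ with multiplicity $D{-}d$ and $\rhog^{\star}(1-1/(1-w_i))=-\rhog^{\star}w_i/(1-w_i)$ with multiplicity $d$. For the claimed ``for every data law'' at leading order, I would note that the $C^1$ density $\rho_0$ only contributes $O(\tilde\sigma)$ shifts to the mean and does not change the Laplace covariance at leading order.

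\textbf{Step 2: boundaries (i) and (ii).} (i) The dominant multiplier switches from $+\rhog^{\star}$ to the negative branch when $w/(1-w)=1$, that is, at $w=\tfrac12$. There $c_{\max}=2\sigma^2$, which is exactly the equality case of Assumption~\ref{ass:unimodal}, and $\rhog^{\star}$ cancels, so the location is schedule-free. (ii) Divergence occurs at $\lambda_{\max}(\nabla^2\Psi_t)=2$, equivalently when the multiplier reaches $-1$. Solving $\rhog^{\star}w/(1-w)=1$ gives $w=1/(1+\rhog^{\star})$.

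\textbf{Step 3: the finite-noise corrections (iii).} I expect this to be the main obstacle. The plan is to push the Laplace expansion of the tangential posterior one order further: include the cubic and quartic terms of $\|x-y'(u)\|^2$ (third-order terms from $\nabla\mathrm{II}$, quartic terms from $\mathrm{II}^2$) and the Jacobian $\det(I-sA)$ of the tube parametrisation. This should yield $c_{\max}/\sigma^2=1/(1-w)+\tilde\sigma^2 R(w)+O(\tilde\sigma^4)$. The boundary conditions $\rhog^{\star}(c_{\max}/\sigma^2-1)=1$ and $c_{\max}/\sigma^2=2$ are then perturbed by implicit differentiation in $w$. At $w_{\rm div}$, the factor $(1-w)^2$ combines with the sum over all normal and tangent components in the quartic term to produce $\tfrac12\tilde\sigma^2\|\mathrm{II}\|^2$.

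For $w_{\rm osc}$, the claim is that the $\tilde\sigma^2$ correction vanishes. To see this, I would first check on the circle, where the posterior is von Mises and $c_{\max}$ is the exact Bessel ratio, that the leading correction at $c=2\sigma^2$ is quartic, $-\tfrac14(\tilde\sigma\kappa)^4$. I would then argue by symmetry that the quadratic term cancels at $w=\tfrac12$ in general. Since the exact-score experiments confirm both corrections, I would validate the coefficients against the closed-form $S^1$ and $S^2$ posteriors before writing the general expansion.
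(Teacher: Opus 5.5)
Your Steps~1--2 follow the paper's argument. The Hessian identity of Thm.~\ref{thm:potential} makes the multipliers $\rhog^{\star}(1-c_i/\sigma_{t+1}^2)$, and the Fermi values $c_\perp=0$, $c_{\parallel,i}=\sigma_{t+1}^2/(1-w_i)$ then give the multiplier set, (i) and (ii). Treating those values as leading order is correct, since part (iii) itself shows they cannot be exact at finite $\sigma_t$. But Step~3 must then carry the normal block to the same order as the tangential one, and it does not. As planned, the $w_{\rm osc}$ half of (iii) fails. The condition you perturb, $c_{\max}=2\sigma_{t+1}^2$, is your own divergence condition $\rhog^{\star}(c_{\max}/\sigma_{t+1}^2-1)=1$ evaluated at $\rhog^{\star}=1$. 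So whatever $\rhog^{\star}$-independent shift implicit differentiation gives for $w_{\rm div}$, it gives here too; (iii) asserts that shift is $\tfrac12\tilde\sigma_{t+1}^2\|\mathrm{II}\|^2$. No symmetry cancels it. The circle check you propose would show this. Take $s:=\tilde\sigma_{t+1}\kappa$, $q:=I_1/I_0$ and von Mises concentration $z=(1-w)/s^2$. The exact tangential value is $c_\parallel/\sigma_{t+1}^2=q(z)/(1-w)=\frac{1}{1-w}-\frac{s^2}{2(1-w)^2}+O(s^4)$. Hence $c_\parallel=2\sigma_{t+1}^2$ at $w=\tfrac12+\tfrac12 s^2+O(s^4)$. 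At $\sigma=0.02$ that is $0.5002$, against the $0.4999999599$ the paper reports on the Bessel circle.

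The quartic law belongs to the other characterisation in (i): the sign change of the dominant multiplier, $\rhog^{\star}(c_{\max}/\sigma^2-1)=\rhog^{\star}(1-c_\perp/\sigma^2)$, i.e.\ $c_{\max}+c_\perp=2\sigma^2$. That condition needs the normal posterior variance you discarded. Write $C_{t+1}=\sigma_{t+1}^2I+\sigma_{t+1}^4\nabla^2\log p_{t+1}$. The second normal derivative of the $-\tfrac12\log\det(I-rA_\nu)$ term of $\log p_{t+1}$ gives $c_\perp/\sigma_{t+1}^2=\tfrac12 s^2/(1-w)^2+O(s^4)$ on a curve. This is where the Fermi determinant you mention actually enters: it belongs to $\log p_{t+1}$, not to the posterior over foot points. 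At $r=0$ this term is the deficit $1-\Sc^\star_{t+1}$ of \eqref{eq:stiff-cert}, and it cancels the tangential $-\tfrac12 s^2/(1-w)^2$ exactly. On the planar circle $c_\parallel+c_\perp=\tr C_{t+1}$, and $\sigma_{t+1}^{-2}\tr C_{t+1}=(1-q^2)/s^2=\frac{1}{1-w}+\frac{s^4}{8(1-w)^3}+O(s^6)$, which yields $w_{\rm osc}=\tfrac12-\tfrac14 s^4$.

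Two consequences follow. First, the ``equivalently where Assumption~\ref{ass:unimodal} fails'' of (i) holds only at leading order; that edge sits at $\tfrac12+\tfrac12 s^2$. Second, the cancellation cannot be invoked in general, because it needs every normal direction to carry posterior variance. For $S^1\subset\R^3$ the out-of-plane normal has $c=0$ exactly, so the dominant positive multiplier stays exactly $\rhog^{\star}$ and the edge reverts to $\tfrac12+\tfrac12 s^2$. On $S^2\subset\R^3$ the cancellation holds to all polynomial orders and the shift vanishes. The paper's own argument for (iii) is only a sketch with constants fixed by measurement. What your Step~3 can honestly deliver is the codimension-one curve, and only for uniform $\rho_0$. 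At order $\tilde\sigma^2$ the tangential precision also picks up $\nabla^2_{\M}\log\rho_0$, so your Step~1 remark about the density does not carry over to (iii).
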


Both boundaries are fractions of the focal radius fixed by the schedule alone: on DDPM-linear at strides $1/10/20/50/100$, $w_{\rm div}=0.754/0.560/0.534/0.515/0.507$ while $w_{\rm osc}=\tfrac12$ at every one. The same expansion gives $\nabla^2\Psi_t$ a ray-Lipschitz constant $L_{\mathrm{H}}(w)=\rhog^\star\kappa_{\max}/(1-w)^2$, so with $a\le e^{h_t}$ the Kantorovich product is $c\,\kappa_{\max}/(1-w)^2$ and no certified ball reaches the focal point. The multipliers are exact at every $\sigma_t$ and every data law, so both shells are recoverable from a matrix-pencil fit to the Picard error sequence with forward passes only; only \eqref{eq:focal} is an expansion.

Two scope clauses belong with it. The scope is the Bayes limit and the oracle: \S\ref{sec:exp-trained} finds both shells on exact scores and none on any trained score, for a reason measured there. And the displacement must be \emph{inward}: an outward ray reads $\lambda_{\max}(C_t)/\sigma_t^{2}=1/(1+w)$, crosses no threshold and returns a confident null --- on the exact score it resolves both branches at all $18$ depths and reports no shell, while its inward twin reads a multiplier $8.66\times$ larger.

\subsection{Four further readout identities}
\label{sec:method-readouts}

The remaining readouts have exact Bayes values too, each stated with its proof and attribution in App.~\ref{app:method-extra}: the transverse Hamilton--Jacobi law \eqref{eq:hj}, fixing the {\L}ojasiewicz exponent of the score energy at $\tfrac12$ on the whole critical manifold and supplying its constant, with Rem.~\ref{rem:descent} on why a descent estimator returns it tautologically; the scaling exponents \eqref{eq:nu}, split $\pm1$ and not $\{+1,0\}$; the identity $\delta_t=D-\Pi_t-L_t$ (Prop.~\ref{prop:delta}); and the rank ceiling (Prop.~\ref{prop:ceiling}). One of them decides where a probe may be evaluated at all.

\begin{proposition}[Value at the sampled point]\label{prop:vacuous}
Let $p_0$ have compact support or sub-Gaussian tails and let $\epsb$ be weakly differentiable with $p_t\epsb\to0$ at infinity. Then the Fokker--Planck dimension functional $\hat d=D-\sigma_t\tr J+\|\epsb\|^{2}$ of \citet{kamkari2024geometric} satisfies, for \emph{every} such $p_0$ and \emph{every} model $\epsb$,
$\E_{x_t\sim p_t}[\hat d(x_t)]=D-\tfrac12\delta_t$,
with $\delta_t$ the optimality gap of \eqref{eq:LPidelta}; at Bayes, exactly $D$.
\end{proposition}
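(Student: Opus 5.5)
The plan is to compute the two pieces of $\E_{x_t\sim p_t}[\hat d(x_t)]$ separately and to eliminate the trace term with Stein's identity (Gaussian integration by parts) under $p_t$. By \eqref{eq:LPidelta} the second moment is immediate:
\[
\E\|\epsb(x_t)\|^2=\Pi_t .
\]
The work is in $\E[\sigma_t\tr J(x_t)]=\sigma_t\,\E[\nabla\!\cdot\epsb(x_t)]$. We integrate by parts against $p_t$. The boundary term vanishes because $p_t\epsb\to0$ at infinity. Weak differentiability makes the divergence meaningful, and the tail hypothesis on $p_0$ makes every moment finite. We obtain
\[
\E[\nabla\!\cdot\epsb]=-\E[\epsb\cdot\nabla\log p_t].
\]
Substituting the first Tweedie--Miyasawa identity \eqref{eq:tweedie}, $\nabla\log p_t=-\epss/\sigma_t$, gives
\[
\sigma_t\,\E[\tr J]=\E[\epsb\cdot\epss].
\]

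The next step is to rewrite $\E[\epsb\cdot\epss]$ in terms of $L_t,\Pi_t$. Since $\epss(x)=\E[\eps\mid x_t=x]$ and $\epsb$ is a function of $x_t$ alone, the tower property gives $\E[\epsb\cdot\epss]=\E[\epsb\cdot\eps]$. Then the polarisation identity, together with $\E\|\eps\|^2=D$, yields
\[
2\,\E[\epsb\cdot\eps]=\Pi_t+D-L_t .
\]
Hence
\[
\E[\hat d]=D-\tfrac12(\Pi_t+D-L_t)+\Pi_t=\tfrac12(D+\Pi_t+L_t).
\]

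This is exactly where the sign convention must be checked against the claimed $D-\tfrac12\delta_t$. With $\delta_t=D-\Pi_t-L_t$ we get $D-\tfrac12\delta_t=\tfrac12(D+\Pi_t+L_t)$, which agrees. At Bayes, $\epsb=\epss$ is the conditional mean, so the orthogonality $\E\|\eps\|^2=\E\|\epss\|^2+\E\|\eps-\epss\|^2$ gives $\delta_t=0$, hence exactly $D$.

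The main obstacle is justifying the integration by parts in this generality: $\epsb$ is an arbitrary network, $p_t$ is smooth and positive, and the integrals must converge. I would truncate with a smooth cutoff $\chi_R$ and apply the divergence theorem to $\chi_R p_t\epsb$. The cutoff term $\int p_t\epsb\cdot\nabla\chi_R$ is bounded by $R^{-1}\int_{R<|x|<2R}p_t|\epsb|$. This tends to $0$ either from the decay hypothesis directly, or from $\E\|\epsb\|^2<\infty$ via Cauchy--Schwarz. The sub-Gaussian or compact-support hypothesis on $p_0$ guarantees that $\nabla\log p_t$ grows at most linearly, which makes $\E[\epsb\cdot\nabla\log p_t]$ finite. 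The remaining steps are algebra.
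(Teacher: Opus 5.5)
Your proof is correct and follows the paper's route: integration by parts against $p_t$ (the implicit score-matching identity), then Tweedie and the tower property to get $\sigma_t\E[\tr J]=\E\langle\epsb,\eps\rangle$, then the expansion of $\delta_t$ in \eqref{eq:delta}. For the Bayes case the paper instead writes $\hat d=D+\sigma_t^2\Delta p_t/p_t$ pointwise and integrates, which is equivalent to your $\delta_t=0$. One precision: pointwise decay of $p_t\epsb$ does not by itself kill the annulus term when $D\ge2$, so rely on your Cauchy--Schwarz bound with $\Pi_t<\infty$. Also, $\E\|\epss\|^2\le D$ already makes $\E[\epsb\cdot\nabla\log p_t]$ finite, without any growth bound on the score.
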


\noindent Averaged over sampled points the readout is an affine function of the model's orthogonality defect and of nothing else, so it reports the optimality gap and nothing about the manifold: the noiseless point is \emph{forced} rather than chosen, and the bound $\E_{\tilde x}[\mathrm{FLIPD}]\ge d$ of \citet[Eq.~(7)]{yeats2025connection} is not merely loose --- its exact value is $D$. At $\xo$, where \citet[Thm.~1]{leung2025convolutions} prove the limit is $d-D$, it does recover the manifold, with a \emph{signed} curvature correction and a Bayes residual $(\sigma_t/2)\|H\|/\sqrt{\abar_t}$ rather than $0$ (Prop.~\ref{prop:exact}).

%-----------------------------------------------------------------------

\section{Experiments}
\label{sec:exp}

\subsection{Calibration on Integrable Manifolds}
\label{sec:exp-calib}

Each Bayes value above is a prediction checkable with no trained model. For the uniform measure on a circle in $\R^2$, a sphere in $\R^3$ and a circle in $\R^3$ the Gaussian convolution is closed-form (Bessel $I_0$, $\sinh$, $I_0$), so $\epss$, $\Jb$ and $\nabla^2E_t$ are analytic to machine precision (App.~\ref{app:calib}). Table~\ref{tab:calib} runs every readout on them at five noise levels in codimension $1$ and $2$, with the identities the same suite must satisfy, and Fig.~\ref{fig:calib} plots the sweep.

\begin{table}[t]
\caption{Bayes-limit calibration; no trained model enters this table. Above the rule every entry is a dimensionless ratio with Bayes value $1$ (or $\pm1$ for $\nu$). Below it are the identities that make the table a calibration rather than a list, and the sweep that shows it resolves.}
\label{tab:calib}
\centering
\small
\setlength{\tabcolsep}{6pt}
\begin{tabular}{@{}llcccc@{}}
\toprule
$\M$ & $\sigma_t$ & $\Sc_t/\Sc^\star_t$ & $\dfrac{2\sqrt{\abar_t}\|\epss(\xo)\|}{\sigma_t\|H\|}$ & $\dfrac{\hat d(\xo)}{\hat d^\star(\xo)}$ & $\nu_{\perp}/\nu_{\parallel}$ \\
\midrule

$S^1\!\subset\!\R^2$, $d{=}1$ & 0.10 & 1.0000 & 1.003 & 1.0000 & $+1.02/-1.06$ \\
                               & 0.02 & 1.0000 & 1.000 & 1.0000 & $+1.00/-1.00$ \\
$S^2\!\subset\!\R^3$, $d{=}2$ & 0.10 & 1.0000 & 1.000 & 1.0000 & $+1.05/-1.04$ \\
                               & 0.02 & 1.0000 & 1.000 & 1.0000 & $+1.00/-1.00$ \\
$S^1\!\subset\!\R^3$, $d{=}1$ & 0.10 & 1.0000 & 1.003 & 1.0000 & $+1.01/-1.06$ \\
                               & 0.02 & 1.0000 & 1.000 & 1.0000 & $+1.00/-1.00$ \\
\midrule

\multicolumn{3}{@{}l}{$\mu(\nabla^2\Psi_t)e^{h_t}$, $\;\rhog^\star/(1{-}e^{-h_t})$} & \multicolumn{3}{l}{$1\pm2{\cdot}10^{-16}$, $\;1\pm9.4{\cdot}10^{-16}$} \\

\multicolumn{3}{@{}l}{$2w_{\mathrm{osc}}$, $\;w_{\mathrm{div}}(1{+}\rhog^\star)$} & \multicolumn{3}{l}{$1-3.1{\cdot}10^{-10}$, $\;1+1.7{\cdot}10^{-5}$} \\

\multicolumn{3}{@{}l}{order in $\sigma_t\kappa$: residual, $\hat d$} & \multicolumn{3}{l}{$2.004$--$2.013$, $\;4.006$--$4.017$} \\
\bottomrule
\end{tabular}
\end{table}

The readouts are not intrinsically broken: on an exact score they match their closed forms to machine precision in codimension $1$ and $2$ --- \eqref{eq:stiff-cert} is \emph{exact} on $S^2$, not merely $O(\sigma_t^4)$ --- so whatever fails on trained models is a property of those models or of the protocol. Because the table then reads $1.0000$ throughout, the evidence that it \emph{resolves} anything is its last row: over a factor $4$ in $\kappa$ and a decade in $\sigma_t$ the two ratios leave $1$ at the predicted orders $2$ and $4$. And $\theta=\tfrac12$ with $R^2=1.000000$ appears on all three manifolds regardless of $d$ or topology, exactly as \eqref{eq:hj} requires, which already shows the exponent cannot discriminate geometry.

\subsection{Measurements on Trained Models}
\label{sec:exp-trained}

\paragraph{Protocol.} We probe (i) pretrained DDPM CIFAR-10 ($D{=}3072$, $N{=}4$) and DDPM CelebA-HQ-256 ($D{=}196\,608$, $N{=}3$), both fp32; (ii) a conditional score model over five synthetic point clouds ($n{=}128$ points in $\R^3$, $D{=}384$) at seven training snapshots; and (iii) the \emph{exact Bayes denoiser} of each cloud, run through the identical harness, probe cloud, flattening and budget. That third arm separates a broken estimator from a broken model and is what makes every negative below attributable. Three disclosures: the point-cloud probes are \emph{conditional}; both image models are probed at model samples rather than dataset images, so image $\Sc_t$ is an upper estimate; and the two point-cloud states are epoch-$300$ and epoch-$50$ snapshots of one run. With $N\le4$ we report individual values, not intervals.

\paragraph{Stiffness and contraction.}
\label{sec:exp-stiff}

The unconditional half of the signed pair settles first. Because $C_t\succeq0$, $\sigma_t\lambda_{\max}(\operatorname{sym}J)\le1$ for \emph{every} data law at \emph{every} noise level --- no manifold, no reach, no unimodality (Prop.~\ref{prop:ceiling-sym}) --- so a reading above $1$ is a hypothesis-free certificate of model error, the one measurement here needing no validity window. The exact score attains that ceiling and never leaves it, the largest of $55$ exactly computed cloud spectra being $1.0000003$. Every trained score we probe crosses it: $10$ of $110$ cloud settings up to $1.1323$, on four of five classes and in both training states; $36$ of $36$ CIFAR-10 settings by $1.2586$ to $2.4119$; $9$ of $9$ CelebA-HQ-256 settings by $1.6146$ to $4.6637$. Since a shifted power iteration returns $\|Av\|$ on a positive-semidefinite operator, every image value is a certified \emph{lower} bound on its own excess. Two readouts of the same kind agree (Table~\ref{tab:attrib}), and the skew index carries a consequence: exactly $0$ at Bayes because $\Jb$ is a posterior covariance, it puts up to $43\%$ of $s_1(J)$ into the antisymmetric part on CIFAR-10 against $2{\cdot}10^{-6}$ on the exact score, so $\Sc_t$ is not readable as a stiffness on images at all. The signed pair is a necessity, not an improvement.

Where $\Sc_t$ \emph{is} readable it costs one power iteration and is, by Thm.~\ref{thm:contract}, the entire explanation of the measured contraction rate. At the smallest guard-clean cloud setting ($\sigma_t{=}0.0426$) the exact score reads $\Sc_t/\Sc^\star_t$ within $8{\cdot}10^{-8}$ of $1$ on the two closed, uniformly sampled classes, while the epoch-$300$ network reads $0.050$--$0.055$ and the epoch-$50$ one $0.024$--$0.034$: an $18$ to $42\times$ deficit against a reference computed in the same harness at the same setting. At the CIFAR probe $\rhog^\star=0.1180$ and the measured Picard rate is $0.1409$--$0.1498$ --- one-sided high, exactly what $\rhog=\rhog^\star\sigma_t\lambda_1(J)$ with $\sigma_t\lambda_1>1$ predicts --- and at stride $20$ Picard measurably diverges where the Bayes map contracts at $0.7134$ (App.~\ref{app:exp-extra}). None of this is evidence \emph{for} the Bayes law: on an affine $\M$ the normal block of the noised \emph{empirical} log-density is data-independent~\citep[Eq.~(10)]{farghly2025manifold}, so a perfectly memorising score also reads $\Sc_t=\Sc^\star_t$, and the certificate is one-sided.

\paragraph{What the flattened dimension readouts measure.}
\label{sec:exp-lid}

A point-cloud sample is $n$ points in $\R^3$ flattened to $D{=}3n$, and the estimators of \S\ref{sec:method-readouts} measure $\dim\operatorname{supp}p_t$ \emph{in the ambient space they are run in}. Because $\epsb$ acts per point, $\partial(\epsb)_p/\partial x_q=0$ for $p\ne q$, so the probed law is a product whose support has dimension $nd$: the flattened target is $nd\in\{128,256\}$, not $d\in\{1,2\}$, and comparisons against $d$ are mis-specified. At the prescribed $K=4D=1536$ no rank readout on a trained model survives: the largest consecutive gap ratio is $1.02$--$1.58$ over all ten trained settings against a pure-noise null of $1.022$ at the same $(K,D)$, and every one abstains --- including the sphere, whose codimension-$128$ gap ratio of $5.06$ at $K=1.08D$ falls to $1.16$ at index $373$. That reading was a budget artefact: H5 is \emph{confirmed as stated} --- rank readouts do track their budget --- and \emph{refuted as a hope}, since raising $K$ to protocol retired our own positive along with the negative it was meant to rescue (App.~\ref{app:hyp}).

\paragraph{The oracle control.} Replacing $\epsb$ by the exact score under the identical harness, probe cloud, flattening and budget at $\sigma_t=0.0426$ gives $\hat d(\xo)=255.99999$, $255.99999$, $256.99$, $246.32$, $128.16$ on the sphere, two spheres, torus, swiss roll and trefoil against flattened targets $256,256,256,256,128$; the rank statistic recovers the true codimension on every closed class and the exponent split of \eqref{eq:nu} on four of five. The epoch-$300$ model reads $382$--$383$ on all five classes and the epoch-$50$ one $391$--$410$, i.e.\ the ambient $D$: the failure is a property of the trained score, not of the readout, and both oracle misses are the swiss roll's free boundary (App.~\ref{app:exp-extra}).

\paragraph{A negative that no change of target repairs.} The decisive test needs no ground truth: any hypothesis of the form ``the readout tracks $nd$'' predicts that the $d{=}1$ manifold reads exactly half of the $d{=}2$ manifolds. Over the $12$ pairwise settings that satisfy every validity guard --- the trefoil against each two-dimensional class separately, never against a mean --- the epoch-$300$ network's ratio lies in $[0.996,1.008]$ with mean $1.0003$, against an exact score reading $[0.496,0.536]$ on the very same settings; letting each class in turn play the $d{=}1$ role, the exact score puts the true one $16$ null standard deviations below the others while the trained model puts it $0.1$ \emph{above}. Being a ratio it is invariant to the ground truth and to any common multiplicative bias. The flattened amplitude readouts carry no class information about $d$ \emph{in this setting} (App.~\ref{app:exp-extra}).

\begin{figure}[t]
\centering
\includegraphics[width=0.8\linewidth]{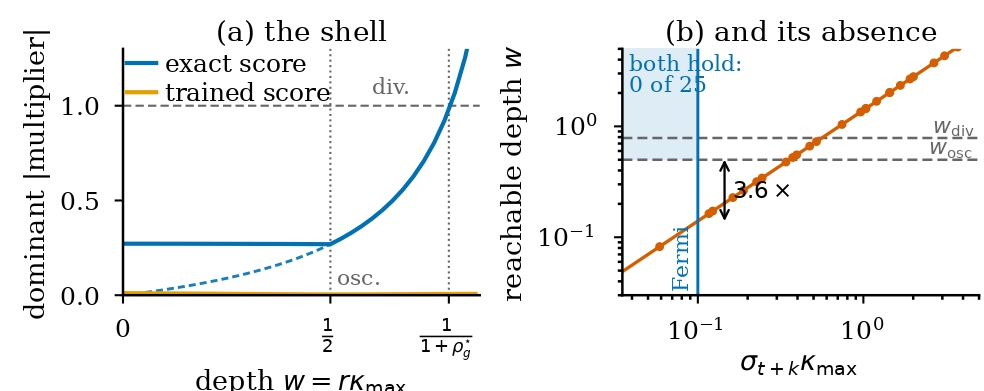}
\caption{The convergence domain, and why the trained shell is unmeasurable rather than absent. (a) Dominant Picard multiplier along an inward normal ray (sphere, $\sigma_{t+1}{=}0.0585$): on the exact score the negative branch (dashed) takes over at $w_{\rm osc}=0.4999$ and reaches $1$ at $w_{\rm div}=0.7891$, against the closed forms $\tfrac12$ and $0.7856$; the trained field is flat and crosses neither. (b) The Fermi window (vertical) caps the reachable depth at $1.4\,\sigma_{t+k}\kappa_{\max}$ (diagonal, one point per convergence setting), $3.6\times$ short of $w_{\rm osc}$: the hatched box where both hold is empty by arithmetic (\S\ref{sec:disc}).
}
\label{fig:shell}
\end{figure}

\paragraph{The convergence domain, measured.}
\label{sec:exp-shell}

On the exact score both shells sit where Thm.~\ref{thm:focal} puts them (Fig.~\ref{fig:shell}a). At $\sigma_{t+1}=0.0585$ the oscillation shell reads $w_{\mathrm{osc}}=0.499946$, $0.499997$ and $0.497288$ on the sphere, the two spheres and the torus against the schedule-free $\tfrac12$, and the divergence shell $w_{\mathrm{div}}=0.789124$ against $1/(1+\rhog^\star)=0.785615$, the residual $+0.003509$ being the predicted $\tfrac12\tilde\sigma_{t+1}^2\|\mathrm{I\!I}\|^2=0.003440$ --- a two-term correction the torus confirms to $0.8\%$. The three classes reading $\tfrac12$ are exactly the three whose largest principal curvature is constant over the cloud; the other two, whose curvature spreads threefold, return a \emph{divergence cliff} at $w_{\mathrm{div}}/w_{\mathrm{osc}}=1.05$ where a Bayes pair must be separated by $1.571$, so a single $\kappa_{\max}$ is not the supremum the law asks for and they fall outside its scope rather than contradict it (App.~\ref{app:exp-extra}, which also excludes the trefoil \emph{a priori}).

On every trained score the same sweep returns no shell, and that is a measurement rather than an absence. Of our $88$ trained shell readouts, $21$ sit at a setting where the exact score does resolve one; all $21$ are null, and the other $67$ carry no information because the exact score is null there too. The mechanism is measured three ways, on two independent draws of one recipe. On the convergence network the trained Jacobian is $11.4$--$13.7\times$ too weak in amplitude at the smallest setting; on the solver network, independently trained, the multiplier field is flat, at most $0.7113$ over all $1491$ admitted probes against a divergence threshold of $1$, and the enabling lemma is absent, reading $0.021$--$0.124$ against the exact score's $0.92$--$0.995$ when normalised by $\rhog^\star\kappa_{\max}$, and vanishing identically on a ReLU backbone. A score whose second derivative is a tenth of the curvature has a multiplier field that does not bend along the ray, so there is no shell to find --- a negative that is scale-free and has a closed-form Bayes value on both branches. Three qualifications are underpowered rather than negative and App.~\ref{app:hyp} states each: a suppressed flip, a divergence shell refuted where the oscillation shell is strictly \emph{unresolved}, and $\sigma_t$-exponents that are two-point slopes on one network. That no shell is found here is derived, not observed: the Fermi window and the model's own training support cap the reachable depth at $w\le0.14$ against shells at $w\ge\tfrac12$, jointly satisfiable in $0$ of our $25$ settings (Fig.~\ref{fig:shell}b; \S\ref{sec:disc}).

\paragraph{What the descent-based estimator measures.}
\label{sec:exp-theta}

The estimator used in practice initialises at $\N(0,I_D)$, runs fixed-step gradient descent on $E_t$ and regresses $\log\|\nabla E_t\|$ on $\log(E_t-\min_kE_t)$. By Remark~\ref{rem:descent} it must return $\tfrac12$ wherever the transverse spectrum is degenerate, whatever the model, and Fig.~\ref{fig:taut} confirms this on \emph{analytic} energies with no diffusion model at all: $\hat\theta=0.5000$ with $R^2=1.000000$ on the Morse--Bott energy under the divergent step the published hyper-parameters imply, and $0.077$ on a genuinely quartic energy whose true exponent is $\tfrac34$. It is a landscape-quadraticity detector with a mis-calibrated scale, not an estimator of $\theta$; the field estimator returns the true exponent on all four (App.~\ref{app:exp-extra}).

The image runs supply the counterexample. On pretrained CIFAR-10 and CelebA-HQ-256 --- converged by any external standard --- the descent protocol returns values indistinguishable from its own undertrained pole, and they are not merely inaccurate but \emph{inadmissible}: a non-constant $C^{1,1}$ function admits no {\L}ojasiewicz inequality with $\theta<\tfrac12$~\citep[Rem.~2.21]{rebjock2025fast}. The mechanism is the min-subtraction, which reproduces them with no model at all, so we do not report $\hat\theta$ as a training certificate. What \eqref{eq:hj} supplies instead is the field-based readout calibrated in \S\ref{sec:exp-calib}: sample $E_t$ and $\nabla E_t$ along a normal ray at $\xo$, taking the slope as a test of transverse quadratic structure and the intercept as $\sigma_t^2\hat\lambda\to1$. On exact scores it returns $\theta=0.50000$ and $\sigma_t^2\hat\lambda=0.9999$; on pretrained CelebA-HQ-256, $\theta=0.468$--$0.523$ with $R^2\ge0.9998$ on two of three images, against a descent estimate below $0.01$ on the same model, and along a random ray it does not ($0.69$--$0.95$). The intercept fails everywhere, so with $N=3$ we report the slope and not the amplitude, and at $t=4$ the fit is near-perfect at an inadmissible exponent --- a falsification with a clean fit rather than noise (App.~\ref{app:exp-extra}).

\paragraph{Validity windows.}
\label{sec:exp-window}
Every readout of \S\ref{sec:method} has a two-sided window in $\sigma_t$ whose upper edge depends on \emph{where} it is evaluated: at $\xo$ it is $\sigma_t\ll\reach(\M)$, while a \emph{sampling} estimator evaluates at $x_t$, whose distance to $\sqrt{\abar_t}\M$ concentrates at $\sigma_t\sqrt{D-d}$, a condition stricter by a factor $11$ at $D-d=128$ --- not new~\citep[Thm.~1]{niyogi2011topological}, but read here as an \emph{estimator-validity} criterion rather than a hypothesis of a recovery guarantee. Its effect is to \emph{remove} an excuse: all five classes are inside the window at the smallest probed noise once the product structure is accounted for, so the trefoil's failure to separate $d{=}1$ from $d{=}2$ leaves the protocol column and the oracle control above places it in the model column (App.~\ref{app:exp-extra}). On images the upper edge additionally requires $\reach(\M)$, which is estimable~\citep[Thm.~3.7, Alg.~1]{aamari2019reach,yaguchi2025geometry}.

%-----------------------------------------------------------------------

\section{Discussion and Conclusion}
\label{sec:disc}

\paragraph{Falsifiable predictions.} Four of our five are decided, and App.~\ref{app:hyp} gives each verdict with its measurement. \textbf{H1} is \emph{confirmed} end to end --- the gap to the exact score closes in $51$ of $55$ settings --- and \emph{refuted} as the fixed-$t$ monotonicity we stated it as, which holds in $5$ of $55$; the correct form is a front, moving inward by $2.09$--$2.29\times$. \textbf{H2} is \emph{confirmed} on every class satisfying Assumption~\ref{ass:tube} with a uniform base measure. \textbf{H4} is \emph{underpowered}, the patch-wise exponent reading $68$ of $192$ with $36\%$ of directions unresolved. \textbf{H5} is \emph{confirmed as stated and refuted as a hope}: raising $K$ to the prescribed $4D$ moved artefacts in both directions, retiring our own positive rank reading with the negative it was meant to rescue. \textbf{H3} was not run.

\paragraph{Limitations.} Assumptions~\ref{ass:tube}--\ref{ass:unimodal} are unverified on natural images, so our image statements are calibration measurements rather than dimension claims: exactly four of the readouts we report on images are hypothesis-free --- the ceiling $\sigma_t\lambda_{\max}(\mathrm{sym}\,J)\le1$, the skew index, $\sigma_t\tr J\le D$ and the sampled-point identity of Prop.~\ref{prop:vacuous} --- and those are precisely the four that fire. Every reading that needs a reach, a Fermi window or a tube edge is uncheckable there. $\Sc_t$ in particular is one-sided and, on images, unsound --- it fuses a positive branch that has an unconditional ceiling with a medial-axis branch that has none --- so we report the signed pair throughout and keep $\Sc_t$ only where the skew index licenses it.

\emph{The trained convergence shell is structurally unmeasurable by this probe, and that is a derived result rather than a null one.} Three requirements collide. The shells sit at $w_{\rm osc}=\tfrac12$ and $w_{\rm div}=1/(1+\rhog^\star)\simeq0.79$. Converting a shell radius into a curvature needs the Fermi window $\sigma_{t+k}\kappa_{\max}\le0.1$. And the probe must stay inside the model's training support, $r\lesssim1.4\,\sigma_{t+k}$. The last two bound the observable depth by $w=r\kappa_{\max}\le1.4\,\sigma_{t+k}\kappa_{\max}\le0.14$ --- short of the oscillation shell by $3.6\times$ and of the divergence shell by $5.6\times$, for \emph{any} manifold, \emph{any} schedule and \emph{any} training budget, because neither bound refers to the model. Measured: $0$ of $25$ trained settings satisfies both conditions, $\sigma_{t+k}\kappa_{\max}$ spans $0.059$ to $3.712$ against a $0.10$ window with one setting inside it, $18$ of $25$ satisfy even the looser $r\le2\sigma_{t+k}$ criterion of Table~\ref{tab:settings} alone, and in that one Fermi-admissible setting the predicted shells sit at $r/\sigma=8.5$ and $13.4$, six and ten times beyond the support. This is why every trained shell is reported against the exact score at matched noise and never against the closed form, and it is a statement about what the instrument can resolve --- not a finding that the trained score has no convergence-domain structure.

Three things we predicted and could not measure, each stated in App.~\ref{app:hyp} in the form we are willing to defend. Whether the convergence-domain law is observable on \emph{any} trained score is open, and our own measurement of why it is not observable here --- a Hessian--Lipschitz constant at $2$--$12\%$ of the curvature the law reads --- is a two-point $\sigma$-fit on one network with no error bar. The multiplicity certificate has a threshold verified in an analytic model and \emph{zero} observations on a trained score anywhere in this study. And the image failures stay unattributable for a measured reason: along the score ray the posterior covariance never exceeds $1.12\sigma_t^2$, so no shell's $\sigma$-exponent can be taken, and the point Prop.~\ref{prop:vacuous} forces sits $\sqrt D$ standard deviations inside the training noise shell --- the live confound, in place of the unknown reach. Our two training states are snapshots of one run and our two networks two draws of one recipe, so no trained class-to-class comparison here carries a population error bar. And we claim the focal-radius law for the Bayes limit and the oracle only: its multiplier statement holds at every $\sigma_t$ and every data law, but the finite-noise correction and any radius-to-curvature conversion hold only inside the Fermi window, and the law is observable only where $\kappa_{\max}$ is roughly constant over the probed set.

Two cheap measurements remain: the field-based $\theta$ on a held-out image set, and a reach estimator on natural images --- the prerequisite for any image LID claim including ours. Treating a one-step geometry probe as an instrument with a calibration curve changes what it can be asked. Every readout has a closed-form Bayes value, and verifying all of them against integrable manifolds converts each measured deviation into a statement about the trained model. Four consequences stand out. One inverse step is a strongly convex program whose modulus is a schedule constant, so its solution is unique and the solver's failures are step-size artefacts to be damped rather than evidence about the model. The contraction rate is that same schedule constant. The Fokker--Planck dimension functional, averaged over sampled points, measures a model's optimality gap and never its data manifold, which is why the noiseless evaluation point is forced. And the geometry the probe was built to find lives in the size of its own convergence domain --- where an exact score reproduces both closed-form boundaries, no trained score in this study produces either, and the reason it cannot is an inequality we compute rather than a limit of our budget.

%-----------------------------------------------------------------------
\phantomsection\addcontentsline{toc}{section}{References} % bookmark the bibliography
\bibliographystyle{iclr2026_conference}
\bibliography{refs}

\clearpage
\appendix

\section{The Calibration Suite}
\label{app:calib}

For a unit-radius manifold embedded at scale $\sqrt{\abar_t}$, with $z=r\sqrt{\abar_t}/\sigma_t^2$, the Gaussian convolution is available in closed form:
\begin{gather*}
S^1\subset\R^2:\ p_t\propto e^{-\frac{r^2+\abar_t}{2\sigma_t^2}}I_0(z);\qquad
S^2\subset\R^3:\ p_t\propto e^{-\frac{r^2+\abar_t}{2\sigma_t^2}}\frac{\sinh z}{z};\\
S^1\subset\R^3:\ p_t\propto e^{-\frac{\rho^2+x_3^2+\abar_t}{2\sigma_t^2}}I_0\!\Big(\frac{\rho\sqrt{\abar_t}}{\sigma_t^2}\Big),
\end{gather*}
with $\rho^2=x_1^2+x_2^2$ in the last case. Differentiating twice gives $\epss=-\sigma_t\nabla\log p_t$ and $\Jb=-\sigma_t\nabla^2\log p_t$ to machine precision; the Bessel ratio $R=I_1/I_0$ is evaluated in exponentially scaled form and differentiated with the Riccati identity $R'=1-R/z-R^2$.

\paragraph{Schedule sweep, and the two normalisations.} The contraction constant is $\rhog^\star=|B_t|/\sigma_{t+1}=1-e^{-h_t}$, with the score evaluated at $t{+}1$; normalising by $\sigma_t$ instead inflates it by exactly $e^{h_t}$, and the difference decides the theorem. On the standard DDPM linear schedule ($T{=}1000$, $\beta$ linear on $[10^{-4},0.02]$) the corrected constant reads $0.325699$ at $t{=}0$, $0.218175$ at $t{=}1$, $0.167878$ at $t{=}2$, $0.118045$ at $t{=}4$, $0.076633$ at $t{=}8$, $0.027992$ at $t{=}30$, $0.009941$ at $t{=}100$, $0.005022$ at $t{=}300$ and $0.010051$ at $t{=}998$, with maximum $0.3256994484136135$ attained at the first step --- against $0.482989$ for $\max_t|B_t|/\sigma_t$ under the old normalisation. The identity $\rhog^\star=1-e^{-h_t}$ holds to a worst residual of $9.410\times10^{-16}$ over six families $\times$ three strides, $9.402\times10^{-16}$ over eleven schedule configurations, $3.192\times10^{-16}$ over the whole point-cloud grid and $8.656\times10^{-16}$ over the DDPM $T{=}1000$ grid; the constant $0.32569944841361353$ is bit-identical across all three. The coarse schedules that broke the old reading do not break this one: $T{=}100$ gives $0.500687$ where the old form gave $1.003$, and $T{=}100$ with $\beta_{\rm end}=0.05$ gives $0.623219$ where the old form gave $1.654$.

\paragraph{Where the bound is attained.} Table~\ref{tab:sched} sweeps seven families at five strides, and three rows carry the scope of Thm.~\ref{thm:contract}. A cosine schedule has $\abar_T=0$ exactly, so $\rhog^\star\to1$ at its last step and the printed margin is set by the library's floor rather than by the schedule: we print the \texttt{diffusers} convention ($\beta$ clipped at $0.999$, giving $0.968377$), while clipping $\abar$ at $10^{-8}$ gives $0.935834$ for the same nominal family, and neither number is a property of the cosine schedule. Rectified flow reads $\rhog^\star=1.000000$ exactly at its top step, where $\abar=0$ and $h_t=\infty$: the bound is \emph{attained}, not violated, and the exact statement is $\rhog^\star<1$ for every schedule with $\abar_T>0$, with equality only at a zero-terminal-SNR top step~\citep{lipman2023flow}. EDM/Karras spans $0.024484$ at stride $1$ to $0.681146$ at stride $50$, so the certificate's margin varies by a factor $28$ across families~\citep{karras2022elucidating,nichol2021improved}. Parametrisation is immaterial by algebra rather than by measurement: over $N{=}400$ matched $\sigma$-grids $\max|h^{\rm VE}_t-h^{\rm VP}_t|=8.9\times10^{-16}$ and $\max|\rhog^{\star,\rm VE}-\rhog^{\star,\rm VP}|=2.9\times10^{-16}$ while the mean-value coefficient itself moves by $1.96\times10^{-2}$; SD3/Flux static shifts $m\in\{1,3,6\}$ and dynamic shifts $\mu\in\{0.5,1.0,1.9\}$ move $\rhog^\star$ by at most $3.8\times10^{-14}$, because a translation of the log-SNR axis cannot change any $h_t$.

\begin{table}[h]
\caption{Seven schedule families at five strides. $\rhog^\star=1-e^{-h_t}$ is the Bayes contraction rate at the family's worst step, $w_P=1/(1+\rhog^\star)$ the Picard divergence boundary of Thm.~\ref{thm:focal}, and the last column is the superseded $|B_t|/\sigma_t$ normalisation, shown to make the size of the correction visible. Only a zero-terminal-SNR top step attains $\rhog^\star=1$.}
\label{tab:sched}
\centering
\small
\begin{tabular}{@{}llccccc@{}}
\toprule
family & stride $k$ & $\rhog^\star$ & $1/\rhog^\star$ & $w_P$ & $|B_t|/\sigma_t$ (old) \\
\midrule
DDPM-linear $T{=}1000$ & 1 & 0.325699 & 3.070 & 0.7543 & 0.4830 \\
 & 10 & 0.786704 & 1.271 & 0.5597 & 3.6845 \\
 & 20 & 0.874046 & 1.144 & 0.5336 & 6.9180 \\
 & 50 & 0.943182 & 1.060 & 0.5146 & 16.3496 \\
 & 100 & 0.970781 & 1.030 & 0.5074 & 31.4357 \\
\addlinespace
cosine $T{=}1000$ (clip $.999$) & 1 & 0.968377 & 1.033 & 0.5080 & 0.9684 \\
 & 10 & 0.996838 & 1.003 & 0.5008 & 3.1796 \\
 & 20 & 0.998419 & 1.002 & 0.5004 & 5.7595 \\
 & 50 & 0.999369 & 1.001 & 0.5002 & 13.1621 \\
 & 100 & 0.999686 & 1.000 & 0.5001 & 25.2536 \\
\addlinespace
sigmoid $T{=}1000$ & 1 & 0.293621 & 3.406 & 0.7730 & 0.4156 \\
 & 10 & 0.701663 & 1.425 & 0.5877 & 2.3501 \\
 & 20 & 0.786489 & 1.271 & 0.5598 & 3.6779 \\
 & 50 & 0.868084 & 1.152 & 0.5353 & 6.5530 \\
 & 100 & 0.913396 & 1.095 & 0.5226 & 10.4442 \\
\addlinespace
scaled-linear SD $T{=}1000$ & 1 & 0.294020 & 3.401 & 0.7728 & 0.4163 \\
 & 10 & 0.703257 & 1.422 & 0.5871 & 2.3596 \\
 & 20 & 0.788630 & 1.268 & 0.5591 & 3.6976 \\
 & 50 & 0.870703 & 1.148 & 0.5346 & 6.5718 \\
 & 100 & 0.915195 & 1.093 & 0.5221 & 10.2094 \\
\addlinespace
EDM/Karras $N{=}1000$ (VP) & 1 & 0.024484 & 40.843 & 0.9761 & 0.0251 \\
 & 10 & 0.216531 & 4.618 & 0.8220 & 0.2764 \\
 & 20 & 0.381110 & 2.624 & 0.7241 & 0.6158 \\
 & 50 & 0.681146 & 1.468 & 0.5948 & 2.1362 \\
 & 100 & 0.880607 & 1.136 & 0.5317 & 7.3747 \\
\addlinespace
VE/Karras $N{=}1000$ & 1 & 0.024484 & 40.843 & 0.9761 & 0.0251 \\
 & 10 & 0.216531 & 4.618 & 0.8220 & 0.2764 \\
 & 20 & 0.381110 & 2.624 & 0.7241 & 0.6158 \\
 & 50 & 0.681146 & 1.468 & 0.5948 & 2.1362 \\
 & 100 & 0.880607 & 1.136 & 0.5317 & 7.3757 \\
\addlinespace
FlowMatchEuler $N{=}1000$ & 1 & \textbf{1.000000} & 1.000 & 0.5000 & 1.0010 \\
 & 10 & 1.000000 & 1.000 & 0.5000 & 10.0100 \\
 & 20 & 1.000000 & 1.000 & 0.5000 & 20.0200 \\
 & 50 & 1.000000 & 1.000 & 0.5000 & 50.0501 \\
 & 100 & 1.000000 & 1.000 & 0.5000 & 100.1001 \\
\bottomrule
\end{tabular}
\end{table}

\begin{figure}[h]
\centering
\includegraphics[width=\linewidth]{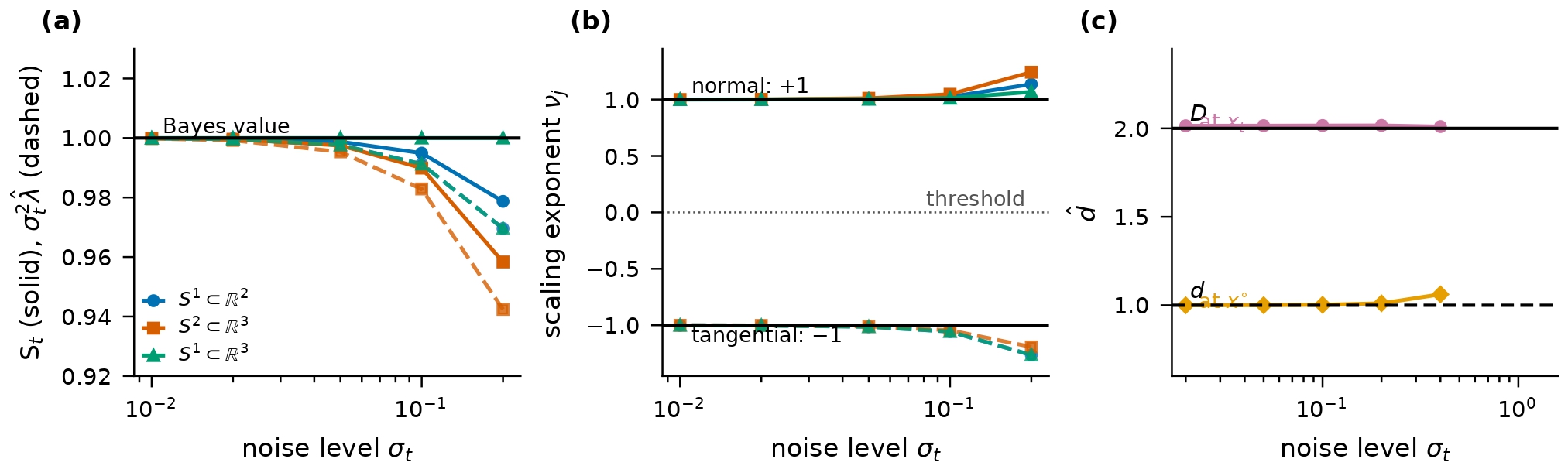}
\caption{Bayes-limit calibration on analytically integrable manifolds; no trained model is involved. (a) The two dimensionless certificates $\Sc_t$ (solid) and $\sigma_t^2\hat\lambda$ (dashed). The offset of $\Sc_t$ from $1$ is not error: it is the closed-form curvature deficit of \eqref{eq:stiff-cert}, matched to all printed digits in codimension one, with an $O(\sigma_t^4)$ residual. (b) The scaling exponents \eqref{eq:nu} converge to $+1$ (normal) and $-1$ (tangential); the value $0$ often assumed for the tangential family is the dotted threshold, not a limit. (c) The same functional evaluated at the sampled point returns the ambient $D$ at every noise level (Prop.~\ref{prop:vacuous}) and at the noiseless point returns the intrinsic $d$ (Prop.~\ref{prop:exact}); error bars are $95\%$ intervals over $2\times10^4$ draws.}
\label{fig:calib}
\end{figure}

\begin{figure}[h]
\centering
\includegraphics[width=\linewidth]{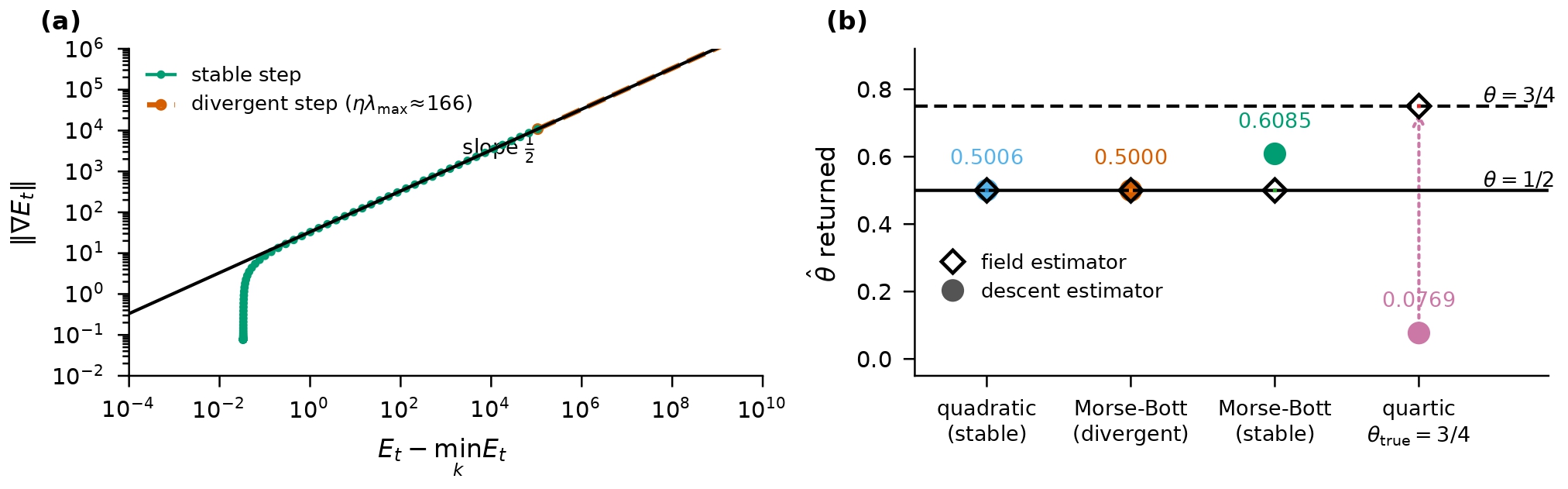}
\caption{The descent-based {\L}ojasiewicz estimator is uninformative by construction (Remark~\ref{rem:descent}). (a) For a quadratic energy the log--log locus is the line of slope $\tfrac12$ over ten decades whether the iteration converges or diverges; the two traces coincide with the theorem line. (b) What the estimator returns on analytic energies of known exponent: $\tfrac12$ on quadratic and Morse--Bott energies including under a divergent step, a different value under a stable step on the same energy, and $0.077$ on a quartic energy whose true exponent is $\tfrac34$ (dotted arrow to the truth). Open diamonds: the field estimator of \eqref{eq:hj} on the same energies, which returns the true exponent in all four cases.}
\label{fig:taut}
\end{figure}

\section{Full Proofs}
\label{app:proofs}

Throughout, $\epsb=\epss$, $A_t=\sqrt{\abar_{t+1}/\abar_t}$, $B_t=\sigma_{t+1}-A_t\sigma_t$, $\lambda_t=\log(\sqrt{\abar_t}/\sigma_t)$ and $h_t=\lambda_t-\lambda_{t+1}$, strictly positive for any strictly decreasing $\abar$. Everything rests on one exact schedule identity, which involves no expansion in the step size:
\[
A_t\sigma_t=\sigma_t\sqrt{\abar_{t+1}/\abar_t}=e^{-h_t}\sigma_{t+1},
\quad\text{hence}\quad
B_t=\sigma_{t+1}\big(1-e^{-h_t}\big),\quad \rhog^\star:=\frac{|B_t|}{\sigma_{t+1}}=1-e^{-h_t}.
\]
Since $h_t$ is a difference of log-SNRs, no reparametrisation of the noise axis and no shift of the log-SNR grid can change it; this is why the constant transfers unchanged between VP, VE and shifted flow-matching parametrisations (App.~\ref{app:calib}).

\begin{proof}[Proof of Theorem~\ref{thm:potential}]
By Tweedie--Miyasawa \eqref{eq:tweedie} at level $t{+}1$, $\epss(x,t{+}1)=-\sigma_{t+1}\nabla\log p_{t+1}(x)$, so
\[
F(x)=x-G(x)=x-A_ty-B_t\epss(x,t{+}1)=x-A_ty+B_t\sigma_{t+1}\nabla\log p_{t+1}(x)=\nabla\Psi_t(x)
\]
with $\Psi_t(x)=\tfrac12\|x-A_ty\|^2+B_t\sigma_{t+1}\log p_{t+1}(x)$; the residual is a gradient field for \emph{every} data law, because it is an affine function of a score. Differentiating once more and using $\Jb=-\sigma_{t+1}\nabla^2\log p_{t+1}$,
\[
\nabla^2\Psi_t(x)=I-B_t\Jb(x,t{+}1)
=I-\frac{B_t}{\sigma_{t+1}}\Big(I-\frac{C_{t+1}(x)}{\sigma_{t+1}^{2}}\Big)
=e^{-h_t}I+\rhog^\star\frac{C_{t+1}(x)}{\sigma_{t+1}^{2}},
\]
by \eqref{eq:Jcov} at $t{+}1$ and $1-\rhog^\star=e^{-h_t}$. Since $C_{t+1}\succeq0$ for every $p_0$, $\nabla^2\Psi_t\succeq e^{-h_t}I$, so $\Psi_t$ is strongly convex with modulus exactly $\mu=e^{-h_t}$ and $\mu+\rhog^\star=1$. Picard iteration $x\mapsto G(x)=x-F(x)=x-\nabla\Psi_t(x)$ is therefore gradient descent on $\Psi_t$ with step size exactly $1$ --- not approximately, and with no freedom to choose otherwise. Inside the noise tube of a uniform $\M$ the Fermi values of the posterior covariance are $c_\perp=0$ on the $D-d$ normal directions and $c_\parallel=\sigma_{t+1}^2/(1-r\kappa_i)$ on the $d$ tangential ones, so at $r=0$ the spectrum of $\nabla^2\Psi_t$ is $\{1^{(d)},(e^{-h_t})^{(D-d)}\}$: its condition number is $e^{h_t}$, independent of the data law, of $d$, of $D$ and of curvature, and $s_{\min}(I-B_t\Jb)=e^{-h_t}$ exactly. Convexity of $-\log p_{t+1}$ is not assumed anywhere; what is used is that $\log p_{t+1}$ is the log-partition function of the denoising exponential family, whose Hessian is a covariance.
\end{proof}

\begin{proof}[Proof of Proposition~\ref{prop:ceiling-sym}]
At Bayes $\Jb=-\sigma_t\nabla^2\log p_t$ is symmetric, so $\mathrm{sym}\,\Jb=\Jb$ and by \eqref{eq:Jcov}
$\sigma_t\lambda_{\max}(\Jb)=1-\lambda_{\min}(C_t)/\sigma_t^{2}\le1$, with equality iff $C_t$ is singular in the extremal direction. No manifold, reach, unimodality or tube hypothesis enters, and the statement is pointwise in $x$ and uniform in $t$. Two consequences are used in \S\ref{sec:exp}: $\sigma_t\lambda_{\max}(\mathrm{sym}\,J)>1$ falsifies Bayes optimality outright, and $\sigma_t\tr J\le D$ for the same reason, since $\sigma_t\tr\Jb=D-\tr(C_t)/\sigma_t^2$.
\end{proof}

\begin{proof}[Proof of Theorem~\ref{thm:unique}]
By Theorem~\ref{thm:potential}, $\nabla^2\Psi_t\succeq e^{-h_t}I\succ0$ for every $x$, so $\Psi_t$ is strictly convex and $F=\nabla\Psi_t$ has exactly one zero. Replacing $\epss$ by a trained $\epsb$ destroys the gradient structure but not the argument: $F_\phi(x)=x-A_ty-B_t\epsb(x,t{+}1)$ satisfies
$\langle F_\phi(x)-F_\phi(z),x-z\rangle\ge\big(1-B_t\lambda_{\max}(\mathrm{sym}\,J_\phi)\big)\|x-z\|^2$
along the segment, so $F_\phi$ is strictly monotone --- hence injective --- unless
$\sigma_{t+1}\lambda_{\max}(\mathrm{sym}\,J_\phi(\xi,t{+}1))>1/(1-e^{-h_t})$ for some $\xi$. The condition is therefore necessary for a second solution and, being a bound on the symmetric part, not sufficient. At the threshold the fold is the only bifurcation available: a multiplier $+1$ of $DG=B_tJ_\phi$ is exactly the displayed inequality, a multiplier $-1$ produces period doubling but no second root, and a complex pair is excluded whenever $DG$ is symmetric, which it is at Bayes and approximately so wherever the skew index is small. In the two-atom model the new roots sit at $x=\rho^\star c/(\rho^\star c-1)$, where the posterior covariance vanishes and $|G'|\to\rho^\star c$ --- the threshold quantity itself --- so they are repelling exactly when they exist. Multiplicity is thus a Newton observable and is invisible to Picard, which is why we report it through the signed pair and not through a failed iteration.
\end{proof}

\begin{proof}[Proof of Theorem~\ref{thm:focal}]
Combine Theorem~\ref{thm:potential} with the Fermi values of $C_{t+1}$. The Picard multipliers are the eigenvalues of $I-\nabla^2\Psi_t=\rhog^\star\big(I-C_{t+1}/\sigma_{t+1}^2\big)$, real because $\nabla^2\Psi_t$ is symmetric. Displacing inward along a principal normal by $r$ and writing $w_i=r\kappa_i$, the exact tube values $c_\perp=0$ and $c_{\parallel,i}=\sigma_{t+1}^2/(1-w_i)$ give the multiplier set
$\{+\rhog^\star\ (\times\,D{-}d)\}\cup\{-\rhog^\star w_i/(1-w_i)\ (\times\,d)\}$
for every $\sigma_t$ and every data law. (1) The dominant multiplier changes sign when $w/(1-w)=1$, i.e.\ $w_{\rm osc}=\tfrac12$; the same condition is $\lambda_{\max}(C_{t+1})=2\sigma_{t+1}^2$, which is exactly the failure of Assumption~\ref{ass:unimodal}, so the oscillation boundary and the unimodality edge are one object and neither depends on the schedule. (2) The dominant multiplier reaches magnitude $1$ when $\rhog^\star w/(1-w)=1$, i.e.\ $w_{\rm div}=1/(1+\rhog^\star)$; equivalently $\lambda_{\max}(\nabla^2\Psi_t)=e^{-h_t}+\rhog^\star/(1-w)=2$, so the Picard divergence boundary and the damping threshold $\eta<2/\Lambda_t$ are the same equation. Beyond the statement, differentiating $\lambda_{\max}(\nabla^2\Psi_t)$ in $r$ gives the ray-Lipschitz constant $L_{\mathrm{H}}(w)=\rhog^\star\kappa_{\max}/(1-w)^2$, while $\|(\nabla^2\Psi_t)^{-1}\|\le e^{h_t}$ everywhere by Theorem~\ref{thm:potential}, so the Newton--Kantorovich product is $aL_{\mathrm{H}}=c\,\kappa_{\max}/(1-w)^2$ with $c=e^{h_t}-1$: free of the data law, of the density and of the dimension, and divergent as $w\to1$, so no certified ball reaches the focal point. (3) Carrying the next order of the Fermi determinant, the correction to $w_{\rm div}$ is quadratic in $\tilde\sigma=\sigma/\sqrt{\abar}$ with coefficient the trace of the squared second fundamental form, and the correction to $w_{\rm osc}$ is quartic; the shape constant is fixed by measurement rather than by this sketch, and \S\ref{sec:exp-calib} reports it at $1.00007$--$1.00178$ on the closed-form targets and $1.008$--$1.020$ on real clouds. Only $\|\mathrm{II}\|^2$ reproduces both the codimension and the dimension dependence: on a curve $\|\mathrm{II}\|^2=\kappa^2$ and the two candidate normalisers coincide, which is why the $27$-setting scan over $S^1\subset\R^2$, $S^2\subset\R^3$ and $S^1\subset\R^3$ is the only place in this study that can separate them.
\end{proof}

\begin{proof}[Proof of Theorem~\ref{thm:contract}]
$B_t=\sigma_{t+1}(1-e^{-h_t})$ is exact, so $\rhog^\star=1-e^{-h_t}<1$ for every schedule with $h_t>0$, i.e.\ for every strictly decreasing $\abar$ with $\abar_T>0$; the earlier closed form was an expansion of a quantity that needs none. The Lipschitz factorisation is definitional, $\rhog=|B_t|\sup_\xi s_1(J(\xi,t{+}1))=\rhog^\star\Sc_{t+1}$, and $\Sc_{t+1}\le1$ under Assumption~\ref{ass:unimodal} by \eqref{eq:Jcov}. The damping statement follows from Theorem~\ref{thm:potential}: gradient descent on an $e^{-h_t}$-strongly-convex, $\Lambda_t$-smooth $\Psi_t$ converges for every $\eta<2/\Lambda_t$, optimally at $\eta^\star=2/(e^{-h_t}+\Lambda_t)$ with rate $(\Lambda_t/\mu-1)/(\Lambda_t/\mu+1)$ in the condition number $\Lambda_t/\mu$; inside the tube at $r=0$ that number is $e^{h_t}$ and the rate is $\tanh(h_t/2)$. Finally $\mu+\Lambda(\tfrac12)=2$ identically, so unit-step Picard is \emph{optimally} damped precisely at the oscillation shell and nowhere else.
\end{proof}

\paragraph{Three levels, and why only one of them carries geometry.} The three statements above separate objects that the literature treats as one. The \emph{solution set} is trivial at the Bayes limit --- one root, by strict convexity, for every data law and every schedule --- so a second solution is a hypothesis-free certificate of model error and carries no information about $\M$. The \emph{solver orbit} can be pathological at the Bayes limit and certifies nothing about the model: on an exact two-atom Bayes score with a unique minimiser and $\nabla^2\Psi_t\succeq0.4I$ everywhere, $\lambda_{\max}(\nabla^2\Psi_t)=2.011,3.022,4.033,7.067$ at $\rhog^\star=0.1,0.2,0.3,0.6$, and undamped Picard reaches tolerance from $1$ of $2001$ starts on $[-6,6]$ --- the one being the fixed point itself, so $0$ of $2000$ non-trivial starts --- while it is captured by an attracting $2$-cycle at $\pm0.374819$; at $\eta=1/\Lambda_t$ the same sweep converges from $100.00\%$ of starts. What does carry geometry is the \emph{convergence domain}: its two boundaries are the closed forms of Theorem~\ref{thm:focal}, and they are curvatures. A binary convergence readout is the wrong instrument for the first two levels and the right one only for the third, and even there its rate vanishes at $\Lambda_t=2$, so the sharp object is the theorem and not the iteration count.

\paragraph{Numerical verification of Thm.~\ref{thm:potential}.} Two atoms at $\pm1$, $\sigma=0.3$, $200\,001$ grid points: the Hessian identity holds to $8.9\times10^{-16}$, $1.8\times10^{-15}$, $1.8\times10^{-15}$ and $3.6\times10^{-15}$ at $\rhog^\star=0.2,0.4,0.6,0.8$; $\mu=e^{-h_t}$ to $2.8\times10^{-17}$; $\mu+\rhog^\star=1.000000$ on $4$ of $4$ rows; and $F=\nabla\Psi_t$ to the central-difference floor, $\le4.5\times10^{-11}$ at $dx=10^{-5}$, with clean $O(dx^2)$ behaviour from $10^{-2}$ down to $10^{-5}$. On the Bessel circle at $\sigma=0.01$ the two eigenvalues are $0.99998$ and $0.67432$ against $e^{-h_t}=0.67430$. The first Kantorovich constant is attained: $a/e^{h_t}=0.9947$ over ten exact-score settings, range $0.98751$--$0.99932$.

\paragraph{Numerical verification of Thm.~\ref{thm:unique}.} The threshold is verified in an analytic model only, never on a trained score. At Bayes, $\sigma_{t+1}\lambda_{\max}(\Jb)=1.000000$ and there is exactly one root at $\rhog^\star=0.05,0.20,0.50,0.90,0.99$ with $\min\nabla^2\Psi_t=0.95,0.80,0.50,0.10,0.010$. With an amplitude-scaled score $\epsb=\gamma\,\epss$ at $\rhog^\star=0.6$ the root count is $1$ at $\rhog^\star\gamma=0.600,0.840,0.960,0.996$ and $3$ at $1.050,1.200,1.800$, bracketing the predicted $1$; the new roots sit at $\pm21.00,\pm6.00,\pm2.25$, i.e.\ at $x=\rhog^\star\gamma/(\rhog^\star\gamma-1)$ exactly, with multipliers $\rhog^\star\gamma=1.0500,1.2000,1.8000$ --- so they are repelling iff they exist. Two limits of that experiment must be printed with it: the root search spans $|x|\le60$, so onset is undetectable for $\gamma\le1.695$ and the bracket $\gamma\in(1.66,1.75]$ cannot be tightened in this configuration; and the planned pitchfork-on-a-network section was not built, so the certificate has zero observations on any trained score.

\paragraph{Numerical verification of Thm.~\ref{thm:focal}.} On the Bessel-exact circle, $w_{\rm osc}=0.4999999599,0.4999999975,0.4999999998$ at $\sigma=0.02,0.01,0.005$, identical at $\rhog^\star=0.278$ and $0.6527$, with residual ratios $16.03$ and $16.26$, i.e.\ $O(\sigma^4)$, while the $w_{\rm div}$ residual falls with ratios $4.005$ and $4.001$, i.e.\ $O(\sigma^2)$, and the coefficient of part (iii) is measured at $1.00178$ falling to $1.00007$. The $\|\mathrm{II}\|^2$ normalisation, and not $\kappa_{\max}^2$, is what the data selects: over a $27$-setting scan on $S^1\subset\R^2$, $S^2\subset\R^3$ and $S^1\subset\R^3$ at three noise levels and three schedules, the two-term form reproduces every measured $w_{\rm div}$ at the scan's own resolution, $27$ of $27$, where the one-term form misses $8$ of $27$; $S^1\subset\R^3$ is identical to $S^1\subset\R^2$ in $9$ of $9$ settings, so codimension has no effect, and $S^2$ sits one grid step high in exactly the two settings the doubled coefficient predicts.

\section{Deferred Method Material}
\label{app:method-extra}

This section carries the proofs and the attributions of \S\ref{sec:method-readouts}; the statements themselves are in the main text.

\paragraph{The Fokker--Planck readout at the noiseless point.}

\begin{proposition}[The noiseless-point readout]\label{prop:exact}
At Bayes $\hat d(x)=\E[\|\eps\|^2\mid x_t{=}x]$ pointwise, and
\[
\hat d(\xo)=d+\frac{\sigma_t^{2}}{2\abar_t}\Big(\|\mathrm{II}\|^{2}-\tfrac12\|H\|^{2}\Big)
+\frac{\sigma_t^{2}}{\abar_t}\frac{\Delta_{\M}\rho_0}{\rho_0}+O(\sigma_t^{4}).
\]
The residual is $\epss(\xo)=-\tfrac{\sigma_t}{\sqrt{\abar_t}}\big(\nabla_\M\log\rho_0+\tfrac12H\big)+O(\sigma_t^2)$, which is $-\tfrac{\sigma_t}{2\sqrt{\abar_t}}H+O(\sigma_t^2)$ for a uniform $\rho_0$, but the trace term contributes at the \emph{same} order, so the total correction is \emph{signed}: $+\sigma_t^2\kappa^2/(4\abar_t)$ for a curve of curvature $\kappa$, exactly $0$ for the round $S^2$, and $-\tfrac{d(d-2)}{4}\sigma_t^2/\abar_t$ for $S^d\subset\R^{d+1}$, negative for $d\ge3$.
\end{proposition}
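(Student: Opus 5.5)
The first claim is a pointwise algebraic identity and I would prove it first. By \eqref{eq:tweedie}, $\epss=\E[\eps\mid x_t]$ and $\sigma_t\tr\Jb=D-\tr C_t/\sigma_t^2$ by \eqref{eq:Jcov}. Since $x_t=\sqrt{\abar_t}x_0+\sigma_t\eps$, we have $\operatorname{Cov}(\eps\mid x_t)=C_t/\sigma_t^2$. Hence
\[
\E[\|\eps\|^2\mid x_t{=}x]=\|\epss(x)\|^2+\tr C_t(x)/\sigma_t^2=\|\epss\|^2+D-\sigma_t\tr\Jb=\hat d(x).
\]
This step is exact for every $p_0$. It reduces the proposition to two asymptotic computations at $\xo$: the posterior mean of $\eps$, and the trace of the posterior covariance. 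Equivalently, $\hat d(\xo)=\|\epss(\xo)\|^2+\tr C_t(\xo)/\sigma_t^2$.

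For both computations I would use Laplace's method on the posterior density of $x_0\in\M$ given $x_t=\xo=\sqrt{\abar_t}m$. That density is proportional to $\rho_0(y)\exp(-\abar_t\|y-m\|^2/2\sigma_t^2)\,d\mathrm{vol}_\M(y)$. I would work in normal coordinates $u\in\R^d$ at $m$, with $y=m+u+\tfrac12\mathrm{II}(u,u)+\tfrac16\nabla\mathrm{II}(u,u,u)+O(|u|^4)$. The small parameter is $\tilde\sigma_t=\sigma_t/\sqrt{\abar_t}$ and the rescaling is $u=\tilde\sigma_t v$.

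The exponent is $-\|y-m\|^2/2\tilde\sigma^2=-|v|^2/2-\tilde\sigma^2\|\mathrm{II}(v,v)\|^2/8+O(\tilde\sigma^3)$. The cross term $\langle u,\mathrm{II}\rangle$ vanishes because $\mathrm{II}$ is normal. The volume element is $1+O(|u|^2)$ with Ricci curvature, and $\rho_0(y)/\rho_0(m)=1+\tilde\sigma\langle\nabla\log\rho_0,v\rangle+\tfrac12\tilde\sigma^2 v^\top\nabla^2\rho_0 v/\rho_0+\cdots$.

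Then $\epss(\xo)=-\sqrt{\abar_t}\,\E[y-m]/\sigma_t$ follows at first order. The normal part comes from $\E[\tfrac12\mathrm{II}(u,u)]=\tfrac12\tilde\sigma^2H$. The tangential part comes from $\E[u]=\tilde\sigma^2\nabla_\M\log\rho_0$. Together these give $\epss(\xo)=-\tilde\sigma(\nabla_\M\log\rho_0+\tfrac12H)+O(\tilde\sigma^2)$, which is the stated residual.

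For the trace I would need $\tr\operatorname{Cov}(y)/\tilde\sigma^2$ to order $\tilde\sigma^2$ beyond the leading term. The leading term is $\E|v|^2=d$. The corrections are of four kinds:
\begin{itemize}
\item the normal variance $\tfrac14\tilde\sigma^2\big(\E\|\mathrm{II}(v,v)\|^2-\|H\|^2\big)$;
\item the Gaussian reweighting by $-\tilde\sigma^2\|\mathrm{II}(v,v)\|^2/8$, which changes $\E|v|^2$;
\item the density Hessian term, which produces $\tilde\sigma^2\Delta_\M\rho_0/\rho_0$ after subtracting the squared mean shift;
\item the cross term $\langle u,\nabla\mathrm{II}(u,u,u)\rangle$ and the volume/Ricci corrections, which should combine through the Gauss equation.
\end{itemize}
I would evaluate these with Isserlis' formula, $\E[v_iv_jv_kv_l]=\delta_{ij}\delta_{kl}+\delta_{ik}\delta_{jl}+\delta_{il}\delta_{jk}$. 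This gives $\E\|\mathrm{II}(v,v)\|^2=\|H\|^2+2\|\mathrm{II}\|^2$, and the Gaussian moments $\E|v|^2\|\mathrm{II}(v,v)\|^2$ needed for the reweighting.

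The main obstacle is this bookkeeping at order $\tilde\sigma^2$. Third-derivative terms ($\nabla\mathrm{II}$) and Ricci terms must be shown to cancel or to combine through Gauss, $\mathrm{Ric}=\langle H,\mathrm{II}\rangle-\mathrm{II}\circ\mathrm{II}$, into the claimed $\tfrac12(\|\mathrm{II}\|^2-\tfrac12\|H\|^2)$. The $\|\epss\|^2$ contribution $\tfrac14\|H\|^2$ for uniform $\rho_0$ must also be merged in. I would first carry out the computation for a hypersurface with a single normal and then polarise, since in codimension one $\mathrm{II}$ is a scalar form and Gauss is explicit.

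As a check I would verify the three stated special cases against the closed forms of App.~\ref{app:calib}: the curve ($\|\mathrm{II}\|^2=\|H\|^2=\kappa^2$, giving $\kappa^2/4$), $S^2$ (giving $0$), and $S^d$ (with $\|\mathrm{II}\|^2=d$, $\|H\|^2=d^2$, giving $-\tfrac{d(d-2)}{4}$). The $O(\tilde\sigma^4)$ remainder follows from the odd-moment vanishing of the Gaussian once $\rho_0$ is taken $C^4$; if only $C^1$ is available I would state the remainder as $o(\sigma_t^2)$ instead.
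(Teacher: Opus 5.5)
The reduction is right, and the route differs from the paper's. But the second-order bookkeeping, as you set it up, mixes two charts and would give the wrong coefficient.

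\textbf{The chart mix-up.} Your proof of the exact identity $\hat d=\E[\|\eps\|^2\mid x_t]$ and your first-order residual are correct. The problem is your expansion $y=m+u+\tfrac12\mathrm{II}(u,u)+\tfrac16\nabla\mathrm{II}(u,u,u)+O(|u|^4)$. Its tangential part is exactly $u$, so $u$ is the graph chart (orthogonal projection onto $T_m\M$), not Riemannian normal coordinates. Two consequences follow: the cross term $\langle u,\nabla\mathrm{II}(u,u,u)\rangle$ you list is identically zero, and the volume factor is not a Ricci term. If you pair your exponent $-\tfrac18\tilde\sigma^2\|\mathrm{II}(v,v)\|^2$ with the geodesic volume factor $1-\tfrac16\mathrm{Ric}(u,u)$, a curve (where $\mathrm{Ric}=0$) gives $d-\tfrac34\tilde\sigma^2\kappa^2$ instead of $d+\tfrac14\tilde\sigma^2\kappa^2$.

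\textbf{Fix: the graph chart.} Here the volume factor is $1+\tfrac12\sum_i\|\mathrm{II}(e_i,u)\|^2+O(|u|^3)$, and no Gauss equation is needed. Compute $\E\|y-m\|^2/\tilde\sigma^2$ directly; splitting it into $\|\epss\|^2$ plus a trace only makes $\|\nabla_\M\log\rho_0\|^2$ appear and then cancel. Use $\E[|v|^2p(v)]=(d+\deg p)\,\E[p(v)]$ for homogeneous $p$, together with $\E\|\mathrm{II}(v,v)\|^2=\|H\|^2+2\|\mathrm{II}\|^2$ and $\E\sum_i\|\mathrm{II}(e_i,v)\|^2=\|\mathrm{II}\|^2$. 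The four contributions are then:
observable $+\tfrac14\E\|\mathrm{II}(v,v)\|^2$;
exponent $-\tfrac12\E\|\mathrm{II}(v,v)\|^2$;
volume $+\|\mathrm{II}\|^2$;
density $+\Delta_\M\rho_0/\rho_0$.
They sum to exactly $\tfrac12\|\mathrm{II}\|^2-\tfrac14\|H\|^2+\Delta_\M\rho_0/\rho_0$.

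\textbf{Fix: geodesic normal coordinates.} Here you must keep the tangential cubic term $-\tfrac16A_{\mathrm{II}(u,u)}u$, so that $\|y-m\|^2=|u|^2-\tfrac1{12}\|\mathrm{II}(u,u)\|^2+O(|u|^5)$. The curvature part is then $\tfrac1{12}\E\|\mathrm{II}(v,v)\|^2-\tfrac13\operatorname{Scal}$, and Gauss, $\operatorname{Scal}=\|H\|^2-\|\mathrm{II}\|^2$, gives the same answer.

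\textbf{Smaller points.} Higher codimension needs no polarisation, since every term is a sum over an orthonormal normal frame. $\Delta_\M\rho_0$ needs at least a $C^2$ density, so your fallback should read $C^2$, not $C^1$. The $O(\tilde\sigma^4)$ remainder also needs $C^4$ regularity of the embedding, not only of $\rho_0$; both exceed Assumption~\ref{ass:tube}.

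\textbf{Comparison with the paper.} With the chart fixed, yours is a genuinely different and cleaner proof. The paper works on the ambient side. It writes $\log p_t=-r^2/2\sigma_t^2-\tfrac12\log\det(I-rA_\nu)+\log\rho_0$ in Fermi coordinates, differentiates via Tweedie for the residual, and reads the trace off the normal block through $\hat d=D+\sigma_t^2\Delta p_t/p_t$.

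Its written proof stops at the residual and the leading $D-d$. The second-order coefficient would have to come from the Fermi-coordinate Laplacian of the log-determinant, with $\xi=r\nu$:
the normal second derivative of $\tfrac14\tr A_\xi^2$ gives $\tfrac12\|\mathrm{II}\|^2$;
the tube-density drift $-\langle H,\partial_\xi\rangle$ acting on $\tfrac12\langle H,\xi\rangle$ gives $-\tfrac12\|H\|^2$;
these combine with $\Delta_\M\log\rho_0$ and $\sigma_t^2\|\nabla\log p_t\|^2$.

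That route gives the score and its Jacobian throughout the tube. However, it differentiates a Laplace expansion twice, which needs uniform control of the remainder's derivatives. Your identity turns the readout into a single posterior moment on $\M$ at $\xo$. Nothing asymptotic is ever differentiated, and the remainder is an ordinary Laplace remainder in which odd moments vanish. The price is that your argument says nothing off $\M$.
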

\begin{proof}
From the Fermi expansion of \S\ref{sec:method-normal}, $\log p_t=-r^2/2\sigma_t^2-\tfrac12\log\det(I-rA_\nu)+\log\rho_0+\mathrm{const}$. Differentiating and applying \eqref{eq:tweedie} gives a normal component $r/\sigma_t-\tfrac{\sigma_t}{2}\tr A_\nu+O(\sigma_tr)$, and $\tr A_\nu$ contracted with $\nu$ is $H$; at $r=0$ only that term survives. The trace term contributes $D-d$ at leading order by \eqref{eq:normal}.
\end{proof}

\paragraph{Proof of Prop.~\ref{prop:vacuous}.} Integration by parts --- the implicit score-matching identity~\citep{hyvarinen2005estimation} --- gives $\E_{p_t}[-\sigma_t\tr J]=-\E\langle\epsb,\epss\rangle=-\E\langle\epsb,\eps\rangle$, so $\E[\hat d]=D+\E\langle\epsb,\epsb-\epss\rangle=D-\delta_t/2$ by \eqref{eq:delta}. At Bayes $\hat d=D+\sigma_t^2\Delta p_t/p_t$ pointwise and $\int_{\R^D}\Delta p_t=0$ by the divergence theorem.

\paragraph{Two consequences of the vacuity.} Averaged over sampled points the readout is biased upward from $D$ by exactly $\gamma(\gamma{-}1)\Pi^\star_t+\E\|e\|^2$ (Prop.~\ref{prop:delta}), and $\E_{p_t}[-\sigma_t\tr J]=-\E\langle\epsb,\eps\rangle$ makes Hutchinson redundant \emph{in expectation} there, one dot product with the injected noise replacing $32$ probes. None of this is a criticism of the published protocol but the reason for its design: \citet[Eq.~13]{kamkari2024geometric} evaluate at the scaled clean point $\psi(t)x=\xo$, and Prop.~\ref{prop:vacuous} says that choice is forced. Pointwise the statement is weaker, since $\hat d(x_t)=D+\sigma_t^2\Delta p_t/p_t$ is not constant; the obstruction is to the average, and \S\ref{sec:exp-window} supplies a second one at the same point.

\paragraph{The DSM readout.} $L_t$ is the denoising-score-matching objective in its noise parametrisation~\citep{vincent2011connection}.

\begin{proposition}\label{prop:delta}
The two readouts $L_t/\abar_t$ and $(D-\Pi_t)/\abar_t$ differ exactly by $\delta_t/\abar_t$ and cannot corroborate each other where $\delta_t\ne0$. Writing $\epsb=\gamma\epss+e$ with $\E\langle\epss,e\rangle=0$, the decomposition is exact:
$\delta_t=-2\gamma(\gamma-1)\Pi^\star_t-2\E\|e\|^2 .$
\end{proposition}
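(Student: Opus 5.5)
The plan is a direct second-moment computation: expand $L_t$, use that $\epsb$ is a function of $x_t$ alone to replace $\eps$ by $\epss$, and then substitute the orthogonal decomposition. The first claim is immediate from \eqref{eq:LPidelta}. Since $\delta_t:=D-\Pi_t-L_t$, we have $(D-\Pi_t)/\abar_t-L_t/\abar_t=\delta_t/\abar_t$ identically. The two readouts therefore agree exactly when $\delta_t=0$. Where $\delta_t\ne0$ neither can corroborate the other, because their difference is a model-dependent quantity rather than a property of $\M$.

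For the decomposition, I would first expand the square in $L_t$:
\[
L_t=\E\|\epsb\|^2-2\E\langle\epsb,\eps\rangle+\E\|\eps\|^2=\Pi_t-2\E\langle\epsb,\eps\rangle+D ,
\]
using $\eps\sim\N(0,I_D)$. This gives $\delta_t=2\E\langle\epsb,\eps\rangle-2\Pi_t$. The key step is the tower property. Because $\epsb(x_t,t)$ is $\sigma(x_t)$-measurable,
\[
\E\langle\epsb,\eps\rangle=\E\big\langle\epsb,\E[\eps\mid x_t]\big\rangle=\E\langle\epsb,\epss\rangle .
\]
Hence $\delta_t=-2\E\langle\epsb,\epsb-\epss\rangle$. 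This is the same inner product that appears in the proof of Prop.~\ref{prop:vacuous}, and it is what produces the factor $\tfrac12$ there.

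Next I would define $\gamma:=\E\langle\epsb,\epss\rangle/\Pi^\star_t$, where $\Pi^\star_t:=\E\|\epss\|^2$, and set $e:=\epsb-\gamma\epss$. This is the $L^2(p_t)$ projection of $\epsb$ onto $\epss$, so $\E\langle\epss,e\rangle=0$ by construction. Given any representation $\epsb=\gamma\epss+e$ with that orthogonality, $\gamma$ is forced to be this value. Substituting the decomposition gives
\[
\E\langle\epsb,\epss\rangle=\gamma\Pi^\star_t ,\qquad \Pi_t=\gamma^2\Pi^\star_t+\E\|e\|^2 ,
\]
and therefore
\[
\delta_t=2\gamma\Pi^\star_t-2\gamma^2\Pi^\star_t-2\E\|e\|^2=-2\gamma(\gamma-1)\Pi^\star_t-2\E\|e\|^2 .
\]

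No step here is hard. The only point needing care is making the moments well defined. I would assume $\Pi_t<\infty$, which is required for $L_t$ to be finite anyway. Then $\epss\in L^2$ by Jensen, since $\E\|\epss\|^2\le\E\|\eps\|^2=D$, and Cauchy--Schwarz justifies every inner product above. The degenerate case $\Pi^\star_t=0$, where $\epss\equiv0$, is handled by taking $\gamma$ arbitrary: then $e=\epsb$ and $\delta_t=-2\Pi_t$, which is consistent with the formula. As a check, at Bayes $\gamma=1$ and $e=0$, so $\delta_t=0$. This matches the exact value $D$ in Prop.~\ref{prop:vacuous}.
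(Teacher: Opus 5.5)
Your proof is correct and follows the paper's own route. You expand $L_t$ using $\E\|\eps\|^2=D$, and the tower property then gives $\delta_t=2\E\langle\epsb,\epss-\epsb\rangle$; substituting the orthogonal decomposition yields $\E\langle\epsb,\eps\rangle=\gamma\Pi^\star_t$ and $\Pi_t=\gamma^2\Pi^\star_t+\E\|e\|^2$. Your remarks on integrability and on $\gamma$ being forced by the orthogonality condition are details the paper leaves implicit.
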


\paragraph{The rank ceiling.}

\begin{proposition}\label{prop:ceiling}
Local PCA on $K$ Tweedie-projected samples satisfies $\hat d\le K-1$; centring is intrinsic to that estimator. A score matrix built from $K$ score evaluations caps the estimated \emph{codimension} at $\widehat{D-d}\le\min(D,K)$, and at $K-1$ if the evaluations are centred first, hence $\hat d\ge D-K+1$ in the centred case.
\end{proposition}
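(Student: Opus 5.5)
The plan is to reduce every claim to one linear-algebra fact. A matrix assembled from $K$ vectors in $\R^D$ has rank at most $\min(D,K)$. After subtracting the column mean, the $K$ columns sum to zero, so they span a subspace of dimension at most $K-1$. Both estimators read a dimension off the spectrum, as the index of the largest consecutive singular-value gap or the number of singular values above a threshold, so neither can report a count larger than the number of nonzero singular values. Each bound then follows from bounding a rank.

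For local PCA I would first fix the estimator. It draws $K$ noisy samples near the probe point, maps each through the Tweedie denoiser \eqref{eq:tweedie} to $\hat x_k=x_k-\sigma_t\epsb(x_k)$, forms the empirical covariance $\frac1K\sum_k(\hat x_k-\bar x)(\hat x_k-\bar x)^\top$, and returns the number of leading eigenvalues as $\hat d$. I would then note that centring is not optional: the covariance is by definition the second moment about the sample mean, which is why the statement calls centring intrinsic. The centred vectors satisfy $\sum_k(\hat x_k-\bar x)=0$, so they span at most $K-1$ dimensions. The covariance therefore has at most $K-1$ nonzero eigenvalues, and any spectral readout returns $\hat d\le K-1$.

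For the score matrix, the estimator stacks $K$ evaluations $\epsb(x_k)$ as the columns of $S\in\R^{D\times K}$ and reads the normal-space dimension $\widehat{D-d}$ as the numerical rank of $S$. This gives $\widehat{D-d}\le\rank S\le\min(D,K)$ immediately. If the columns are centred first, $S(I-\tfrac1K\mathbf 1\mathbf 1^\top)$ has rank at most $K-1$, since the projector itself has rank $K-1$. Writing $\hat d=D-\widehat{D-d}$ then gives $\hat d\ge D-K+1$.

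The only real obstacle is definitional rather than mathematical. The statement is a property of the estimator's output, so I must commit to a precise readout rule (threshold or gap index) and check that it returns at most the number of nonzero singular values. With a gap rule this requires the convention that a gap is counted only between positive singular values, or, at a zero singular value, that the index is capped by the rank. I would state that convention explicitly so that a numerically tiny but nonzero value cannot be counted past the rank. Everything after that is the rank bound above.
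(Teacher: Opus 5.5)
Your proposal is correct and follows the paper's proof. Both bound the estimated subspace by the span of the $K$ columns (dimension at most $\min(D,K)$), use the zero-sum constraint from centring to remove one further dimension, and pass to $\hat d=D-\widehat{D-d}$; your explicit convention that the threshold or gap readout cannot count past the number of nonzero singular values is a step the paper leaves implicit, and it is worth keeping.
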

\begin{proof}
The estimated subspace is the span of the $K$ evaluations --- the tangent space for local PCA, the normal space for the score matrix --- so its dimension is at most $K$; centring makes the $K$ columns sum to zero and removes one further degree of freedom.
\end{proof}

\noindent Pinned at the ceiling, local PCA reports a lower bound on $d$ and the score-rank estimator an \emph{upper} bound; neither binds at the prescribed $K=4D$.

\paragraph{Morse--Bott, not Morse.}

\begin{remark}\label{rem:mb}
$E_t$ does not vanish on $\sqrt{\abar_t}\M$: by Prop.~\ref{prop:exact}, $E_t(\xo)=\sigma_t^2\|H\|^2/(8\abar_t)+O(\sigma_t^3)$, and the zero set of $\epss$ is the curvature-shifted manifold $\M^\star_t=\{\sqrt{\abar_t}x_0+\tfrac{\sigma_t^2}{2}H(x_0)+O(\sigma_t^3)\}$. Along $\M^\star_t$ the Hessian has a $d$-dimensional kernel, so $E_t$ is Morse--Bott and never Morse, and any derivation assuming a positive-definite Hessian is vacuous here. Transverse to $\M^\star_t$ it is $(I-P)/\sigma_t^2+O(1)$, whose $D-d$ eigenvalues are equal to leading order --- exactly equal for the limiting energy $\tfrac12\dist^2$, diagonal along a normal ray in one offset-independent frame with all normal eigenvalues exactly $1$~\citep{ambrosio1998distance}, splitting only at relative order $\sigma_t^2$. It is this \emph{transverse} nondegeneracy --- automatic in the noise tube and unaffected by that splitting --- that delivers the exponent of \eqref{eq:hj}. That Morse--Bott forces the optimal exponent is known abstractly~\citep{feehan2017optimal,rebjock2025fast}; \eqref{eq:hj} adds the constant, and \eqref{eq:stiff} turns the intercept of a log--log fit into a dimensionless stiffness with Bayes value $1$.
\end{remark}

\paragraph{The Fokker--Planck functional} In DDPM parametrisation the functional of \citet{kamkari2024geometric} is
\begin{equation}
\hat d(x)
= \underbrace{D}_{\text{ambient}}
- \underbrace{\sigma_t\tr J(x,t)}_{\substack{\text{removes the codim.\ iff}\\ \epsb\ \text{near-projects}}}
+ \underbrace{\|\epsb(x,t)\|^2}_{\substack{\text{curvature term, }=\\ (\sigma_t\|H\|/2)^2\text{ at Bayes}}},
\label{eq:fp}
\end{equation}
with $\tr J$ by Hutchinson probes~\citep{hutchinson1989stochastic}. At Bayes $\hat d=D+\sigma_t^2\Delta p_t/p_t$, the form that makes Props.~\ref{prop:exact} and \ref{prop:vacuous} immediate.

\paragraph{The optimality gap} With shared noise draws and $\E\|\eps\|^2=D$, expanding \eqref{eq:LPidelta} gives exactly
\begin{equation}
\delta_t = D-\Pi_t-L_t = 2\E\langle\epsb,\eps-\epsb\rangle = 2\E\langle\epsb,\epss-\epsb\rangle,
\label{eq:delta}
\end{equation}
the last step by the tower property, so $\delta_t$ is the orthogonality defect of $\epsb$ against the $L^2$ projection $\epss$. The dimensionless stiffness read off the Hamilton--Jacobi intercept is
\begin{equation}
\sigma_t^2\hat\lambda:=\sigma_t^2\cdot\tfrac12e^{2\hat c}\ \longrightarrow\ 1 .
\label{eq:stiff}
\end{equation}

\paragraph{Rank readouts} Pinned at the ceiling of Prop.~\ref{prop:ceiling}, local PCA reports a lower bound on $d$ and the score-rank estimator an \emph{upper} bound; neither binds at $K=4D$. The largest-\emph{difference} rule is maximised at the top edge whenever $s_1$ detaches from the bulk; the \emph{ratio} rule is maximised at the bottom edge whenever $s_{\min}\to0$, automatic at $K\le D$ by Prop.~\ref{prop:ceiling} but not only there. The pure-noise null bounds \emph{false} gaps and leaves power the separate question H5 settles.

\paragraph{The score energy as an action.} At Bayes $E_t(x)=r^2/(2\sigma_t^2)+O(\sigma_t)$ is, to leading order, the squared distance to the manifold rescaled by the noise --- the classical action of the diffusion at time $\sigma_t^2$ (see \citealp{calin2024}, \S6.5.5, for this identification in the Riemannian setting), and inside the reach the nearest-point projection is unique with $\|\nabla r\|=1$, the eiconal property~\citep{federer1959curvature}.

\paragraph{The transverse law and the scaling exponents (\S\ref{sec:method-readouts}).} The last identity is transverse. Inside the reach $r$ is $C^1$ with $\|\nabla r\|=1$~\citep{federer1959curvature} and $E_t=r^2/(2\sigma_t^2)+O(\sigma_t)$, so with only transverse nondegeneracy
\begin{equation}
\|\nabla E_t\|=\frac{r}{\sigma_t^2}=\frac{\sqrt{2E_t}}{\sigma_t}
\ \Longrightarrow\ \theta=\tfrac12,\ \ c_{\mathrm L}=\frac{\sqrt2}{\sigma_t}
\label{eq:hj}
\end{equation}
on the whole critical manifold, and with its constant. $E_t$ is Morse--Bott there and never Morse, so any derivation assuming a positive-definite Hessian is vacuous (Rem.~\ref{rem:mb}), and the intercept of a log--log fit gives a dimensionless stiffness $\sigma_t^2\hat\lambda\to1$.

\begin{remark}[Why a descent cannot see this]\label{rem:descent}
For $E=\tfrac12\lambda r^2$ under any linear iteration $r_{k+1}=(1-\eta\lambda)r_k$, $\log\|\nabla E_k\|=\tfrac12\log E_k+\tfrac12\log(2\lambda)$ \emph{identically}, converging or diverging. A descent estimator of $\theta$ therefore returns $\tfrac12$ with $R^2=1$ whenever a single scale governs the active spectrum --- automatic here, the transverse eigenvalues coinciding --- and otherwise reports the step size and the iteration budget. Equation~\eqref{eq:hj} must be read from the \emph{field} along a normal ray.
\end{remark}

Finally, by \eqref{eq:normal} the normal singular values of $\Jb(\xo)$ scale as $\sigma_t^{-1}$ and the tangential ones as $\sigma_t^{+1}$, not $\sigma_t^{0}$, so matching singular values across two levels $\sigma_A<\sigma_B$ gives exponents
\begin{equation}
\nu_j:=\frac{\log\big(s_j(\sigma_A)/s_j(\sigma_B)\big)}{\log(\sigma_B/\sigma_A)}\ \longrightarrow\ \pm1\quad(\text{normal}/\text{tangential}),
\label{eq:nu}
\end{equation}
separated by $2$, with natural threshold $0$ and a classifier invariant to any state- and noise-independent amplitude error. Two conditions bind it: a random $m$-dimensional probe subspace meets the tangent space only if $m>D-nd$, so $\nu$ must be read on a full basis, a random $m\ll D$ probe pinning every exponent at $+1$ by construction; and at a free boundary the outward tangential direction is truncated and reads as normal, where the exact score returns $174$ normal directions against a true codimension of $128$. We report $\nu$ as family medians, never as counts.

\paragraph{Why the stiffness deficit is measurable only in low codimension.} $\Sc^\star_t$ of \eqref{eq:stiff-cert} vanishes iff some normal direction has vanishing shape operator, which is \emph{automatic} in high codimension: $\min_\nu\|A_\nu\|_F^2=\lambda_{\min}(Q)$ for $Q_{ab}=\langle\mathrm{II}^a,\mathrm{II}^b\rangle$, and $\mathrm{II}^a\in\mathrm{Sym}(d)$ forces $\rank Q\le d(d+1)/2$, so $\Sc^\star_t=1$ \emph{exactly} whenever $d(d+1)/2<D-d$ --- at $D{=}3072$, for every $d\le76$. \citet{wenliang2023score} already read a trained score Jacobian's spectrum against this scale at a fixed tolerance; \eqref{eq:stiff-cert} is the closed form of what that tolerance stands in for.

%-----------------------------------------------------------------------
\section{Deferred Derivations}
\label{app:deriv}

\paragraph{Exact DSM readout on a linear manifold.} With $\M=\mathrm{range}(U)$, $U\in\R^{D\times d}$ orthonormal, and $x_0=Uz$, $z\sim\N(0,I_d)$: $\Sigma_t=\abar_tUU^\top+\sigma_t^2I$ and $\epss(x)=\sigma_t\Sigma_t^{-1}x$. Each normal direction contributes $0$ to the loss ($\epss_\perp=\eps_\perp$ exactly) and each tangential direction contributes $\E((1-\sigma_t^2)\eps_\parallel-\sigma_t\sqrt{\abar_t}z)^2=\abar_t^2+\sigma_t^2\abar_t=\abar_t$. Hence $L^\star_t=\abar_t d$ exactly at every $t$, so $L_t/\abar_t=d$ on this model. The same computation gives $\hat d(\xo)=\abar_t(d+\sigma_t^2\|z\|^2)$, so at the typical radius the schedule-corrected form of \eqref{eq:fp} is $\hat d(\xo)/[\abar_t(2-\abar_t)]$, which is the schedule-corrected normalisation used throughout \S\ref{sec:exp}.

\paragraph{Amplitude decomposition.} With $\epsb=\gamma\epss+e$, the amplitude multiplier of \S\ref{sec:method-potential}, and $\E\langle\epss,e\rangle=0$: $\Pi_t=\gamma^2\Pi^\star_t+\E\|e\|^2$ and $\E\langle\epsb,\eps\rangle=\gamma\Pi^\star_t$, so $\delta_t=-2\gamma(\gamma-1)\Pi^\star_t-2\E\|e\|^2$ exactly. Both terms are non-positive for $\gamma>1$, which is why $\delta_t$ screens but does not test.

\paragraph{Descent tautology.} For $E=\tfrac12\lambda r^2$ and $r_{k+1}=(1-\eta\lambda)r_k$: $E_k=\tfrac12\lambda r_k^2$ and $\|\nabla E_k\|=\lambda r_k$, so $\log\|\nabla E_k\|=\tfrac12\log E_k+\tfrac12\log(2\lambda)$ for every $k$, independently of $\eta$ and of whether $|1-\eta\lambda|<1$. The fit is exact, so $R^2=1$ by construction. In general, with $q_k:=\|\nabla E_k\|^2/(2E_k)\in[\lambda_{\min},\lambda_{\max}]$, $\log\|\nabla E_k\|=\tfrac12\log E_k+\tfrac12\log(2q_k)$ identically for every spectrum and step size, so a log--log regression returns $\tfrac12+\Delta\log q/(2\Delta\log E)$ and deviates from $\tfrac12$ by at most $\log\kappa$ over twice the fitted range of $\log E$. The published hyper-parameters imply $\eta\lambda_{\max}\approx166$, i.e.\ $83\times$ the stability threshold $2/\lambda_{\max}$, and across $\eta\lambda_{\max}$ from $0.5$ to $166$ the readout stays in $[0.4974,0.5009]$ with $R^2\ge0.9998$. The same cancellation holds verbatim for $E=\tfrac12\sum_i\lambda_ir_i^2$ whenever the modes with $r_i\ne0$ share one $\lambda$ --- the transversally isotropic case of Remark~\ref{rem:mb}, where the intercept returns the single transverse eigenvalue $1/\sigma_t^2$ at any step size. With a spread transverse spectrum the log--log locus is governed asymptotically by the slowest-decaying mode, so the intercept returns $\lambda_j$ with $j=\arg\max_i|1-\eta\lambda_i|$: the \emph{largest} eigenvalue under a divergent step and the \emph{smallest} under a stable one. Equation~\eqref{eq:stiff} is read from the field along a normal ray, where no step size enters.

\paragraph{Assumption~\ref{ass:unimodal} as geometry.} In Fermi coordinates at second order about a uniform $\M$, the posterior at offset $r$ along the normal $\nu$ has $\lambda_{\max}(C_t)=\sigma_t^2/(1-r\kappa_{\max})+O(\sigma_t^4)$, so $\lambda_{\max}(C_t)\le2\sigma_t^2$ iff $r\le1/(2\kappa_{\max})$: half the focal radius, which coincides with half the reach only when the reach is curvature-attained. Quadrature on the uniform circle gives $\lambda_{\max}(C_t)/\sigma_t^2=1.3330,1.6661,1.9992,2.4987,3.9968$ at $r=0.25,0.40,0.50,0.60,0.75$ against the predicted $4/3,5/3,2,5/2,4$. Near a bottleneck the exact value saturates at $(\ell/2)^2$ rather than diverging, so the assumption fails only inside a slab of half-width $O(\sigma_t^2)$ about the medial sheet. Separately, log-concavity of the law of $\sqrt{\abar_t}x_0$ passes to the posterior with $\nabla^2(-\log)\succeq\sigma_t^{-2}I$, so Brascamp--Lieb gives $C_t\preceq\sigma_t^2I$ and margin $2$.

\paragraph{A lower edge for curvature readouts.} The Bayes residual at the noiseless point is $(r/\sigma_t)\nu-(\sigma_t/2)H$, so the two terms cross at $r=\sigma_t^2\|H\|/2$: any curvature readout taken from the residual needs $\dist(x,\sqrt{\abar_t}\M)\ll\sigma_t^2\|H\|/2$. At the CIFAR probe ($\sigma_t=0.0264$) that is $r<3.5\times10^{-4}$ against a data radius of $55.4$, a relative positional accuracy of $6\times10^{-6}$. This is the lower-side counterpart of \eqref{eq:window} and the reason we report $\|\epss(\xo)\|$ only as a ratio against a known $H$ on integrable manifolds, and never invert it on a trained model.

\paragraph{Reach of a product, and the factorised window.} For a closed product $\M=\M_1\times\cdots\times\M_n$ in the product ambient space, $\dist(x,\M)^2=\sum_p\dist(x_p,\M_p)^2$, so the argmin separates and $x$ has a non-unique nearest point iff some $x_p$ does. Hence $\mathrm{Med}(\M)=\bigcup_p\pi_p^{-1}(\mathrm{Med}(\M_p))$, and the medial-axis definition of the reach~\citep{federer1959curvature,aamari2019reach} gives $\reach(\M)=\min_p\reach(\M_p)$; the Clifford torus $S^1\times S^1\subset\R^4$ checks this, with focal radius and half-bottleneck both $1$. The normal exponential map is therefore a diffeomorphism on the polydisc $\{\dist(x_p,\M_p)<\reach(\M_p)\ \forall p\}$, which strictly contains the metric tube of radius $\min_p\reach(\M_p)$. Because $\epsb$ acts per point, the noised law, the score, the Jacobian and every readout factorise, so a sampled $x_t$ is in the regime of validity when $\max_p\sigma_t\|\xi^\perp_p\|<\reach(\M_p)$ --- and $\E[\max_p\chi_k]$ is $2.83$ ($k{=}1$) and $3.28$ ($k{=}2$) at $n{=}128$, with $95$th percentiles $3.54$ and $3.96$, against $\sqrt{D-d}=11.3$ and $16.0$. For the two \emph{additive} readouts the conclusion is stronger: both are sums over factors, so their relative bias equals the single-factor relative bias and carries no $n$-dependence at all. This criterion is necessary, not sufficient: \citet[Thm.~1]{niyogi2011topological} require $\sigma\sqrt{D-d}<0.0067\reach(\M)$, two orders of magnitude stricter.

\paragraph{The reach audit, and the three window criteria side by side.} Reaches are computed after normalisation, as the smaller of the focal radius $1/\kappa_{\max}$ and half the shortest bottleneck, and the cause is read off which of the two binds. Two of the four values used in an earlier draft were not reaches: $0.36$ is the two-sphere \emph{focal} radius and $0.43$ matches neither the trefoil's reach nor its focal radius in this run. Corrected: sphere $0.9971$ (curvature), torus $0.2587$ (curvature), two spheres $0.2825$ (bottleneck), swiss roll $0.2090$ (bottleneck), trefoil $0.2021$ (bottleneck), reproduced to $\le3\times10^{-10}$ on the curvature side. At the smallest probed noise ($\sigma_t=0.04256$) the three criteria read

\begin{center}\small
\begin{tabular}{@{}lcccc@{}}
\toprule
$\M$ & $\reach(\M)$ & $\sigma_t/\reach$ (at $\xo$) & $\sigma_t\sqrt{D-d}/\reach$ \eqref{eq:window} & polydisc criterion \\
\midrule
sphere      & 0.9971 & 0.043 & 0.483 & 0.151 \\
torus       & 0.2587 & 0.165 & 1.861 & 0.582 \\
two spheres & 0.2825 & 0.151 & 1.704 & 0.533 \\
swiss roll  & 0.2090 & 0.204 & 2.303 & 0.721 \\
trefoil     & 0.2021 & 0.211 & 3.369 & 0.834 \\
\bottomrule
\end{tabular}
\end{center}

\noindent so all five classes are inside the product window at that level, where the metric-tube reading of \eqref{eq:window} would exclude four of them. The effect of the correction is to \emph{remove} an excuse rather than to license a claim: the trefoil reads $2.99\times$ its flattened target while inside the corrected window, so its failure to separate $d{=}1$ from $d{=}2$ moves from the protocol column to the structural one.

\paragraph{The Fermi window against the training support, and why the two cannot both hold.} Converting a convergence-shell radius into a curvature uses the second-order Fermi expansion, whose validity we require at $\sigma_{t+k}\kappa_{\max}\le0.1$; the probe must also stay where the model has training mass, $r\lesssim1.4\,\sigma_{t+k}$. Since the shells sit at $w_{\rm osc}=\tfrac12$ and $w_{\rm div}=1/(1+\rhog^\star)$ and $w=r\kappa_{\max}$, the two conditions together bound the observable depth by $w\le1.4\,\sigma_{t+k}\kappa_{\max}\le0.14$, short of the oscillation shell by $3.6\times$ and of the divergence shell by $5.6\times$. Neither bound refers to the model, the manifold, the schedule or the training budget, so the conflict is structural. Table~\ref{tab:settings} shows it setting by setting: the Fermi condition holds in $1$ of $25$, the looser $r\le2\sigma_{t+k}$ support criterion tabulated there in $18$ of $25$, and both in $0$ of $25$; $\sigma_{t+k}\kappa_{\max}$ spans $0.059$ to $3.712$ against a $0.10$ window. This is the reason every trained shell in \S\ref{sec:exp} is measured against the exact score at matched noise and never against the closed form, and the reason the oracle-minus-closed-form gap is reported as the predicted $\tfrac12\tilde\sigma^2\|\mathrm{II}\|^2$ correction rather than as an error. One convention must be named: the $\sigma\kappa$ column is built at $t{+}k$, where the network is queried; a companion pipeline tabulates the same quantity at $t$, which is smaller by a factor $1.376$ on every row.

\begin{table}[h]
\caption{The diagnostic grid. Fermi: $\sigma_{t+k}\kappa_{\max}\le0.10$. Support: the predicted oscillation shell within $2\sigma_{t+k}$ of the base point. Radii are in units of $\sigma_{t+k}$.}
\label{tab:settings}
\centering
\small
\begin{tabular}{@{}llcccccc@{}}
\toprule
setting & $\M$ & $\sigma_{t+k}\kappa_{\max}$ & Fermi & $r_{\rm osc}/\sigma$ & $r_{\rm div}/\sigma$ & support & both \\
\midrule
$t{=}2$, $k{=}1$ & sphere & 0.0588 & yes & 8.51 & 13.37 & no & no \\
 & torus & 0.2265 & no & 2.21 & 3.47 & no & no \\
 & swiss roll & 0.1887 & no & 2.65 & 4.16 & no & no \\
 & trefoil & 0.1167 & no & 4.28 & 6.73 & no & no \\
 & two spheres & 0.1628 & no & 3.07 & 4.82 & no & no \\
\addlinespace
$t{=}2$, $k{=}5$ & sphere & 0.1230 & no & 4.06 & 4.92 & no & no \\
 & torus & 0.4742 & no & 1.05 & 1.28 & yes & no \\
 & swiss roll & 0.3952 & no & 1.27 & 1.53 & yes & no \\
 & trefoil & 0.2444 & no & 2.05 & 2.48 & no & no \\
 & two spheres & 0.3409 & no & 1.47 & 1.77 & yes & no \\
\addlinespace
$t{=}2$, $k{=}20$ & sphere & 0.3742 & no & 1.34 & 1.42 & yes & no \\
 & torus & 1.4422 & no & 0.35 & 0.37 & yes & no \\
 & swiss roll & 1.2019 & no & 0.42 & 0.44 & yes & no \\
 & trefoil & 0.7433 & no & 0.67 & 0.71 & yes & no \\
 & two spheres & 1.0370 & no & 0.48 & 0.51 & yes & no \\
\addlinespace
$t{=}10$, $k{=}20$ & sphere & 0.5207 & no & 0.96 & 1.15 & yes & no \\
 & torus & 2.0072 & no & 0.25 & 0.30 & yes & no \\
 & swiss roll & 1.6728 & no & 0.30 & 0.36 & yes & no \\
 & trefoil & 1.0344 & no & 0.48 & 0.58 & yes & no \\
 & two spheres & 1.4432 & no & 0.35 & 0.41 & yes & no \\
\addlinespace
$t{=}30$, $k{=}20$ & sphere & 0.9630 & no & 0.52 & 0.71 & yes & no \\
 & torus & 3.7118 & no & 0.13 & 0.18 & yes & no \\
 & swiss roll & 3.0933 & no & 0.16 & 0.22 & yes & no \\
 & trefoil & 1.9129 & no & 0.26 & 0.36 & yes & no \\
 & two spheres & 2.6688 & no & 0.19 & 0.26 & yes & no \\
\bottomrule
\end{tabular}
\end{table}

\paragraph{Rank-readout budget, at the prescribed $K$.} \citet{stanczuk2024diffusion} prescribe $K=4D$ score evaluations ($4d$ in their notation, where $d$ is ambient: their Alg.~1 sets $d\leftarrow\dim x_0$), i.e.\ $1536$ at $D{=}384$ and $12288$ at $D{=}3072$, and this run is at protocol. Their derivation makes the normal block of the score matrix a $(D-d)\times K$ Gaussian matrix, whose Marchenko--Pastur aspect $\gamma=(D-d)/K$ fans the singular values by $(1+\sqrt\gamma)/(1-\sqrt\gamma)$ before any signal is present; $K=4D$ is precisely the budget that pins $\gamma\le1/4$, at a fan-out of $3.0$. Our pure-noise nulls at that budget are correspondingly tight: largest consecutive gap ratio $1.0219$ mean and $1.0325$ max at $(K,D)=(1536,384)$ over $25$ replicates, and $1.0054$ mean, $1.0060$ at the $95$th percentile and $1.0061$ max at $(12288,3072)$ over $5$, against an abstention threshold of $2.01$. Raising $K$ to protocol retired our own positive reading as well as the negative it was meant to rescue. On the exact score the ratio rule now recovers the true codimension on four of five classes (index $128,128,256,128$ at gap ratios $12.31,3.39,5.70,4.33$) and the difference rule on five of five; on every trained setting both rules fail, the ratio rule abstaining at $r_{\max}=1.023$--$1.579$ and the unthresholded difference rule emitting a confident $\hat d\approx D-2$ ($381,383,383,383,39$ converged and $1,383,383,383,383$ undertrained). Two protocol deviations remain, and are ours rather than theirs: we centre the score matrix and \citeauthor{stanczuk2024diffusion} do not, and we threshold a \emph{ratio} at $2.01$ and abstain below it where their rule is unthresholded. Neither rule dominates. The abstention is not free: on the swiss roll --- the one class with a boundary --- the oracle ratio rule \emph{fires} at $r_{\max}=2.832$ at index $381$ and returns $\hat d=3$ instead of $256$, a confident wrong answer at the prescribed budget with no structural zero to blame, while the difference rule returns $128\to256$ correctly on the same spectrum. We therefore report both rules everywhere and claim neither.

\section{From the Smale Surrogate to the Kantorovich Triple}
\label{app:smale}

An earlier version of this appendix reported Smale's $\alpha$-theory~\citep{smale1986newton} for $F(x)=x-G(x)$, with $\alpha=\beta_S\gamma_S$ adjudicated against the three constants in circulation --- $0.130707$~\citep{smale1986newton}, $(13-3\sqrt{17})/4\approx0.157671$~\citep{blum1998complexity} and $3-2\sqrt2\approx0.171573$~\citep{wang1989dominating}. We withdraw that reading, for three reasons, and replace it with a certificate every constant of which is measurable.

\paragraph{Why the surrogate does not hold.} The surrogate bounded $\gamma_S=\sup_{k\ge2}\|DF^{-1}D^kF/k!\|^{1/(k-1)}$ by its $k{=}2$ term. That step is a modelling assumption, and it is false at the Bayes limit on a curved manifold: the sequence $\gamma_k$ is increasing over the range we can evaluate and does not attain its supremum at $k=2$, so $\alpha\le\hat\alpha/2$ is not an upper bound and no verdict follows from it. More fundamentally, $\gamma_S$ is a supremum over all higher posterior cumulants of the denoising family; it has no closed form even at Bayes, and it is infinite for any piecewise-linear block, so it is not a quantity a probe can report. Finally, the Neumann bound $\|DF^{-1}\|\le(1-\rhog)^{-1}$ that the surrogate used needs $\rhog<1$ and hence Assumption~\ref{ass:unimodal}, whereas Theorem~\ref{thm:potential} gives the sharper bound unconditionally: $DF=\nabla^2\Psi_t$ is symmetric with $\lambda_{\min}\ge e^{-h_t}$, so $\|DF^{-1}\|\le e^{h_t}$ with no hypothesis at all, and by Eckart--Young~\citep{eckart1936approximation} $s_{\min}(I-B_t\Jb)=e^{-h_t}$ \emph{is} the distance from the inversion operator to ill-posedness. Measured, that bound is an equality rather than a bound with slack: $a/e^{h_t}=0.9947$ on average over ten exact-score settings, range $0.98751$--$0.99932$.

\paragraph{The replacement.} Kantorovich's theorem~\citep{kantorovich1982,ferreira2002kantorovich} needs only first derivatives and a local Lipschitz constant, all three of which we can measure and each of which has a Bayes reference from Theorems~\ref{thm:potential} and \ref{thm:focal}: $a=\|(\nabla^2\Psi_t)^{-1}\|$ against $e^{h_t}$, $b=\|(\nabla^2\Psi_t)^{-1}F\|$ against $c\,\sigma_{t+1}^2\|H\|/2$, and $L=\mathrm{Lip}(\nabla^2\Psi_t)$ against $\rhog^\star\kappa_{\max}$. The certificate is $\ell:=abL\le\tfrac12$. It holds in $20$ of $20$ point-cloud settings ($4.64\times10^{-3}$--$8.54\times10^{-2}$ on the exact score, $8.89\times10^{-4}$--$2.83\times10^{-3}$ on the trained one) and in $4$ of $4$ CIFAR-10 settings ($1.997\times10^{-4}$--$4.415\times10^{-3}$); Table~\ref{tab:kant} gives every entry against its reference. For a non-analytic $F$ the literature's own route to the same place is the $L$-average Lipschitz condition of \citet[\S5]{wangjh2008generalized}, which is why we do not attempt to repair $\gamma_S$.

\paragraph{Two protocol facts that must be printed with it.} First, the probe direction must be tangent-projected: a random direction understates $L$ by a factor $5$ to $21$ on the same settings (column $L_{\rm rand}/L$), because the Hessian's variation is carried by the tangential block. On CIFAR-10 our projection is a rank-$1$ deflation of the score direction out of $D{=}3072$ and does essentially nothing --- $L_{\rm rand}/L=0.9995$--$1.0001$, a four-digit identity --- so the CIFAR $L$ is a random-direction reading and must not be compared against $\kappa_{\max}$; the point-cloud harness's projection demonstrably does act, at $L_{\rm rand}/L=0.048$--$0.187$. Second, the Kantorovich $L$ is an operator norm over a ball and exceeds the tangential-probe reading by the plane-curve frame-rotation constant $2/\sqrt3=1.1547$ (measured $1.151$), so the printed $\ell$ is low by $1.15$--$1.6\times$; the $20$-of-$20$ verdict survives that correction with more than $3\times$ margin.

\paragraph{What we do not print.} The closed-form Newton--Kantorovich companion $u(c)=(1+c)-\sqrt{c^2+2c}$ solves its own quadratic to $1.1\times10^{-16}$ but is \emph{not} the certified $\ell=\tfrac12$ boundary: on the Bessel-exact circle two independent bisections of the true boundary place it a factor $1.6$ to $2.2$ lower and disagree with each other by $17\%$, and the mechanism is isolated --- $a$ and $b$ are exact to four digits while $L$, as an operator norm over a ball of radius $\approx2b$, both picks up the $2/\sqrt3$ frame factor and reaches where $L$ has already blown up. The law survives (the boundary is a function of $c$ alone, $\sigma$-independent to four digits over $\sigma\in[0.002,0.02]$); only the constant is wrong. We therefore print the Picard boundary $w_P=1/(1+\rhog^\star)$, which is exact and verified to $1.3\times10^{-4}$, and no Newton companion.

\paragraph{The enabling lemma, and Newton's order.} The third derivative of $\Psi_t$ along a tangent direction is the second fundamental form: $\|D^2F[Pv,Pv]\|/\rhog^\star$ against $\kappa_{\max}$ reads $0.99498/0.95814/0.95849/0.94883$ (sphere / torus / trefoil / two spheres) on the exact score at $\sigma=0.0585$ and $0.99182/0.93022/0.92491/0.91686$ at $\sigma=0.0744$; every deviation is negative, grows with $\sigma$, and lies within a factor $2.7$ of $(\sigma\kappa)^2$. This is a converging finite-$\sigma$ correction, not slack, and it turns the old diagnostic into a calibrated curvature readout. Do not calibrate it on the swiss roll: it is the only class whose reading exceeds $\kappa_{\max}$ ($1.516$ and $1.175$ against the $\sigma$-law harness's $\kappa_{\max}=3.2195$; Table~\ref{tab:kant} normalises the same two settings by its own $\kappa_{\max}=3.1497$ and reads $1.5591$ and $1.2025$), with a $\sigma$-exponent of $-1.06$ against $-0.013$ to $-0.149$ elsewhere, because its curvature varies threefold over the cloud so $\kappa_{\max}$ is not the right supremum there. On the trained scores the same quantity is $0.021$--$0.124$ and exactly $0.00000$ in $10$ of $10$ rows on a ReLU backbone --- the piecewise-constant-Jacobian mechanism, and the precondition \citet{manton2015framework} identifies for quadratic convergence, made visible. Consistently, Newton's method on the inversion residual converges at order $0.964$--$1.464$ on the CIFAR-10 UNet, not $2$. We do not attribute that: our step solves the symmetrised system and discards a skew part we measure at $0.115$--$0.433$, the linear solve is $20$ CG iterations and an inexact Newton step caps the observed order at $1$, and our own estimator returns only $1.631$--$1.849$ on exactly smooth references. The prescribed SiLU-versus-ReLU activation control was not run.

\begin{table}[h]
\caption{The Kantorovich triple against its Bayes references. Columns $a/e^{h_t}$, $b/b^\star$ and $L/(\rhog^\star\kappa_{\max})$ each have Bayes value $1$; $\ell=abL$ certifies at $\le\tfrac12$. $L_{\rm rand}/L$ is the same constant read along an unprojected direction.}
\label{tab:kant}
\centering
\small
\begin{tabular}{@{}llccccccc@{}}
\toprule
$\M$ & score & $t$ & $a/e^{h_t}$ & $b/b^\star$ & $L$ & $L/(\rhog^\star\kappa_{\max})$ & $L_{\rm rand}/L$ & $\ell=abL$ \\
\midrule
sphere & exact & 2 & 0.9987 & 0.9987 & 0.2723 & 0.9950 & 0.122 & 0.00545 \\
sphere & exact & 3 & 0.9985 & 0.9985 & 0.2132 & 0.9920 & 0.122 & 0.00464 \\
torus & exact & 2 & 0.9895 & 0.9898 & 1.0161 & 0.9630 & 0.092 & 0.03950 \\
torus & exact & 3 & 0.9875 & 0.9878 & 0.7773 & 0.9381 & 0.092 & 0.03300 \\
swiss roll & exact & 2 & 0.9979 & 0.8528 & 1.3401 & 1.5591 & 0.048 & 0.08537 \\
swiss roll & exact & 3 & 0.9982 & 0.8898 & 0.8117 & 1.2025 & 0.057 & 0.05419 \\
trefoil & exact & 2 & 0.9992 & 0.9545 & 0.6078 & 0.9546 & 0.073 & 0.01026 \\
trefoil & exact & 3 & 0.9993 & 0.9627 & 0.4641 & 0.9280 & 0.072 & 0.00858 \\
two spheres & exact & 2 & 0.9901 & 0.9901 & 0.7196 & 0.9488 & 0.122 & 0.03924 \\
two spheres & exact & 3 & 0.9884 & 0.9884 & 0.5461 & 0.9167 & 0.123 & 0.03229 \\
\addlinespace
sphere & trained & 2 & 0.7442 & 0.7370 & 0.0406 & 0.1483 & 0.070 & 0.00166 \\
sphere & trained & 3 & 0.8064 & 0.7978 & 0.0416 & 0.1935 & 0.075 & 0.00191 \\
torus & trained & 2 & 0.7451 & 0.7406 & 0.0158 & 0.0150 & 0.186 & 0.00089 \\
torus & trained & 3 & 0.8080 & 0.8034 & 0.0162 & 0.0196 & 0.187 & 0.00105 \\
swiss roll & trained & 2 & 0.7460 & 0.7402 & 0.0354 & 0.0411 & 0.077 & 0.00182 \\
swiss roll & trained & 3 & 0.8097 & 0.8021 & 0.0387 & 0.0573 & 0.070 & 0.00242 \\
trefoil & trained & 2 & 0.7429 & 0.7381 & 0.0460 & 0.0722 & 0.063 & 0.00233 \\
trefoil & trained & 3 & 0.8051 & 0.8004 & 0.0466 & 0.0931 & 0.064 & 0.00283 \\
two spheres & trained & 2 & 0.7443 & 0.7414 & 0.0238 & 0.0313 & 0.094 & 0.00174 \\
two spheres & trained & 3 & 0.8075 & 0.8042 & 0.0238 & 0.0400 & 0.091 & 0.00211 \\
\addlinespace
CIFAR-10 img0 & trained & 4 & 1.0426 & --- & 0.03060 & --- & 1.000 & 0.00364 \\
CIFAR-10 img0 & trained & 30 & 1.0131 & --- & 0.00346 & --- & 1.000 & 0.00020 \\
CIFAR-10 img1 & trained & 4 & 1.0559 & --- & 0.03812 & --- & 1.000 & 0.00441 \\
CIFAR-10 img1 & trained & 30 & 1.0170 & --- & 0.00503 & --- & 0.999 & 0.00030 \\
\bottomrule
\end{tabular}
\end{table}

\section{Prior Work, Claim by Claim}
\label{app:attrib}

\paragraph{The novelty carve (\S\ref{sec:related}).} \citet{farghly2025manifold} prove that log-domain smoothing acts \emph{only tangentially} on an affine manifold; differentiating their normal block gives the $1/\sigma_t$ constant of \eqref{eq:normal} for an arbitrary empirical law, and their \S4 argues before us that the expressed geometry is a property of the model. Four further antecedents bound our claims and we claim novelty for none: the Tweedie--Miyasawa identity \eqref{eq:Jcov}~\citep{miyasawa1961empirical,kadkhodaie2024generalization}, \citeauthor{wenliang2023score}'s $\sigma_t^2$-normalised reading of a trained spectrum, the codimension-scaled window of \citet[Thm.~1]{niyogi2011topological}, and the Picard fixed point of \citet{hong2024exact,hang2024exploring}, neither of whom computes its constant. What is new is the potential and its modulus, the schedule constant $\rhog^{\star}$, the convergence-domain law with its finite-noise correction, the ceiling assembled from $C_t\succeq0$, the sampled-point value $D-\delta_t/2$, the $\delta_t$ identity with its exact remainder, the Hamilton--Jacobi constant, and Assumption~\ref{ass:unimodal} made checkable --- set out claim by claim in App.~\ref{app:attrib}. \citet{diepeveen2025scorebased} build a Riemannian structure from $(\nabla^2\log p)^\top(\nabla^2\log p)$, by \eqref{eq:tweedie} exactly $\sigma_t^{-2}J^\top J$, as an a-priori guarantee for a flow they train; ours is a post-hoc calibration for an arbitrary pretrained score, and the two are complementary rather than rival --- the Fisher metric is affine in $J$~\citep{raskutti2015information}, so their normal-bundle object and the tangent-bundle metric have traces summing to $D$. \citet{azeglio2025whats} select an operating point by an angle criterion, a validity check of the kind we formalise.

\paragraph{\eqref{eq:Jcov} and the stiffness certificate.} The matrix identity is the second-order Tweedie--Miyasawa relation~\citep{miyasawa1961empirical}, stated in this literature by \citet[Eq.~(9)]{kadkhodaie2024generalization} as $\nabla f^\star=I+\sigma^2\nabla^2\log p_\sigma=\sigma^{-2}\mathrm{Cov}(x\mid y)$, who read its spectrum as adaptive shrinkage and establish $C_t\succeq0$. \citet{wenliang2023score} read the SVD of a trained U-Net's score Jacobian at several noise levels against the scale $\sigma_t^2$, with a local-dimension rule at a fixed tolerance. Ours is the singular-value consequence: $s_1(\Jb)=1/\sigma_t$ exactly iff the extremal direction carries zero posterior variance, and $s_1(\Jb)>1/\sigma_t$ as soon as some $c_i>2\sigma_t^2$ --- which neither Assumption~\ref{ass:tube} nor $C_t\succeq0$ excludes --- hence Assumption~\ref{ass:unimodal} as a condition to be checked rather than assumed, and \eqref{eq:stiff-cert} as the closed form of the tolerance.

\paragraph{The contraction constant.} \citet[Eq.~(7)]{hong2024exact} state a schedule-only nonexpansiveness threshold for the same Picard operator, requiring the data-prediction model to be $(\sigma_{t_{i-1}}\alpha_{t_i}(e^{-h_i}-1)/\sigma_{t_i})^{-1}$-Lipschitz: exactly $\rhog\le1$ with the Lipschitz constant left unknown. \citet{hang2024exploring} prove qualitative contraction of the identical map and conclude existence and uniqueness by Banach, with factor $1-k\sqrt{\abar_t}$ for an unspecified $k\in(0,1)$; rearranging shows that assumption is equivalent to positing $\mathrm{Lip}(\epsb)=(1-k\sqrt{\abar_t})/\sigma_t$, i.e.\ to positing what \eqref{eq:Jcov} derives. Neither computes the constant. We supply it, and $\rhog=\rhog^\star\Sc_t$ separates schedule from model. Monitoring the input-Jacobian spectral radius of an implicit layer is the standard deep-equilibrium instrument~\citep{bai2021stabilizing,pokle2022deep}; only its reference value is new.

\paragraph{The dimension functional.} \citet{kamkari2024geometric} define the estimator and evaluate it at the scaled clean point; \citet{leung2025convolutions} prove its $d-D$ limit there for any smooth submanifold, dropping the affine assumption of earlier justifications. \citet[Eq.~(7), Thm.~3.3]{yeats2025connection} bound the noised-sample expectation below by $d$. Proposition~\ref{prop:vacuous} evaluates that expectation exactly, for every model and every data law, and thereby explains the design rather than criticising it.

\paragraph{The rank ceiling and the window.} \citet{stanczuk2024diffusion} state the budget requirement informally and prescribe $K=4D$; Prop.~\ref{prop:ceiling} is its exact form, and the centring is ours, not theirs. \citet[Thm.~1]{niyogi2011topological} give $\sqrt{8(D-d)}\,\sigma<c(\sqrt9-\sqrt8)\reach(\M)/9$ under exactly this noise model; \citet{farghly2025manifold} and \citet{yaguchi2025geometry} carry related conditions. Only the estimator-validity reading and the product factorisation are ours, and we claim nothing for the latter.

\paragraph{The exponent.} That a Morse--Bott critical set forces the optimal {\L}ojasiewicz exponent is proved abstractly by \citet{feehan2017optimal}, and the equivalence of the Polyak--{\L}ojasiewicz, quadratic-growth and error-bound conditions with the Morse--Bott property by \citet{rebjock2025fast}, who also record that no non-constant $C^{1,1}$ function admits an exponent below $\tfrac12$. Equation~\eqref{eq:hj} supplies what those do not: the constant $c_{\mathrm L}=\sqrt2/\sigma_t$, from the eiconal property inside the reach. The mean-curvature residual of Prop.~\ref{prop:exact} is obtained independently, in the density-estimation setting, by \citet[Thm.~3.4]{chen2026riemannian}.

\paragraph{The potential, and the constant nobody computed.} \citet[Eq.~(7)]{hong2024exact} state a schedule-only nonexpansiveness threshold for the same Picard operator, which is exactly $\rhog\le1$ with the Lipschitz constant left free; \citet{hang2024exploring} prove qualitative contraction of the identical map with a factor $1-k\sqrt{\abar_t}$ for an unspecified $k$, which rearranges to positing $\mathrm{Lip}(\epsb)=(1-k\sqrt{\abar_t})/\sigma_t$ --- that is, to positing what \eqref{eq:Jcov} derives. Neither writes the potential, and neither computes the constant. What we add is $\Psi_t$, the modulus $e^{-h_t}$ and the factorisation $\rhog=\rhog^\star\Sc_{t+1}$. \citet{ostrowski1973solution} is the standard caveat that a spectral radius below $1$ does not by itself give a contraction in a fixed norm; it does not rescue undamped Picard here, because the run measures $\rho(B_tJ)=\rhog$ to fifteen digits at the probe, so the two coincide and the failure is the step size.

\paragraph{The convergence domain.} The obvious referee suggestion --- run the Riemannian $\alpha$-theory instead --- contributes nothing in this setting: \citet{dedieu2003newton} give $K_\zeta=1$ and $r_\zeta=\infty$ in flat space, so the ambient curvature terms vanish identically and the manifold's curvature can only enter through the Hessian-Lipschitz constant $L$, which is what the ray-Lipschitz constant $L_{\mathrm{H}}$ of \S\ref{sec:method-focal} computes. \citet[Thm.~1, Cor.~2]{manton2015framework} supply the smoothness precondition --- quadratic convergence needs a hypothesis strictly between $C^2$ and $C^3$ --- which is why a ReLU backbone voids the certificate, and the run measures exactly that at $0.00000$ in $10$ of $10$ rows (App.~\ref{app:smale}). \citet{eckart1936approximation} is what unifies the pieces: $s_{\min}(I-B_t\Jb)=e^{-h_t}$ is a distance to ill-posedness, and it is simultaneously the Kantorovich $a$, the Neumann bound (an equality at Bayes), the modulus of Theorem~\ref{thm:potential} and the fold threshold of Theorem~\ref{thm:unique}.

\paragraph{Multiplicity, the fold and the $2$-cycle.} \citet{smale1985efficiency} supplies both halves of what we need: his $z^3/2-z+1$ carries a superattracting $2$-cycle and works over the reals in the same way, and his Ch.~III \S4 classification says a generally convergent purely iterative algorithm has no periodic sink of least period two --- which is exactly the status of undamped Picard, exhibited here at the Bayes limit with $\Lambda_t=2.011$--$7.067$ and a $2$-cycle at $\pm0.374819$ (App.~\ref{app:proofs}). \citet{mcmullen1987families} does \emph{not} apply, and we say so rather than citing it loosely: it concerns $d\ge4$ roots and rational dependence on jets, whereas inversion has one solution and contracts at infinity. The fold/flip taxonomy of Theorem~\ref{thm:unique} is standard~\citep{kuznetsov2004elements}, and the dynamical-systems literature on basins~\citep{curry1983iteration,daza2016basin,aguirre2009fractal,grebogi1983final} is cited only to record what we do \emph{not} do: we run no fractal basin analysis and report no basin statistic, and neither the polynomial-like renormalisation of \citet{douady1985polynomiallike} nor the global root finding of \citet{hubbard2001howtofind} transfers here, since $F$ carries no holomorphic structure and infinity is interior to the basin rather than on its boundary.

\paragraph{Why no change of metric repairs Remark~\ref{rem:mb}.} The natural suggestion --- learn a pullback metric that makes $E_t$ geodesically convex --- is obstructed for a closed $\M$ with $\dim\M\ge1$. On a Hadamard-type manifold the critical set of a geodesically convex function is connected and the argmin is a retract~\citep[Ch.~3 Thm.~3.4, Ch.~5 Lem.~3.5]{udriste1994convex}, so $\mathrm{Crit}(E_t)\cong\M$ would have to be a contractible deformation retract of $\R^D$, which a closed positive-dimensional $\M$ is not. Scope it honestly: the obstruction is topological, so it fails for a contractible patch and we do not claim it for the swiss roll.

\paragraph{Existence without any hypothesis, and why we run no multistart.} The radial-index certificate is \citet[Lem.~3]{watson1989globally}: if the outward radial component of $F$ is non-negative on the boundary of a ball, a solution exists inside it, with no Lipschitz, contraction, manifold, reach or unimodality hypothesis --- which is why it fires in $24$ of $24$ CIFAR and $27$ of $27$ CelebA-HQ settings where every other instrument needs an assumption we cannot check. Its probability-one homotopy companion is \citet{chow1978finding}. We deliberately do not corroborate uniqueness with a multistart experiment: index-$(-1)$ roots have measure-zero basins for first-order methods~\citep{lee2019first}, so a null multistart is not evidence of uniqueness --- and Theorem~\ref{thm:unique} shows the roots born at the fold are repelling, hence Picard-invisible by construction.

\paragraph{The Fisher metric.} \citet{diepeveen2025scorebased} build a Riemannian structure from $(\nabla^2\log p)^\top(\nabla^2\log p)$; the Fisher metric of the same family is \emph{affine} in the Jacobian, $\sigma_t^2g_F=I-\sigma_tJ$~\citep{raskutti2015information}. The two objects are therefore complementary rather than competing: theirs is a rank-$(D-d)$ normal-bundle object and the Fisher metric is a rank-$d$ tangent-bundle one, and their traces sum to $D$. That is why the comparison in \S\ref{sec:related} is a statement about which bundle a construction reads, not a claim that one is better.

\section{Deferred Experimental Detail}
\label{app:exp-extra}

The passages below are the full versions of results \S\ref{sec:exp} states in brief.

\paragraph{The oracle control (\S\ref{sec:exp-lid}).} Replacing $\epsb$ by the exact score under the identical harness, probe cloud, flattening and budget at $\sigma_t=0.0426$ gives $\hat d(\xo)=255.99999$, $255.99999$, $256.99$, $246.32$, $128.16$ on the sphere, two spheres, torus, swiss roll and trefoil against flattened targets $256,256,256,256,128$ --- the two closed, uniformly sampled classes exact to $4{\cdot}10^{-8}$ relative, with seed-to-seed variation exactly zero on $30$ of $30$ settings. At the same budget the rank statistic recovers the true codimension on every closed class (gap ratios $12.3$, $3.4$, $5.7$, $4.3$) and the exponent split of \eqref{eq:nu} returns codimension and flattened dimension exactly on four of five. The epoch-$300$ model reads $382$--$383$ on all five classes and the epoch-$50$ one $391$--$410$, i.e.\ the ambient $D$. The failure to separate $d{=}1$ from $d{=}2$ is a property of the trained score, not of the readout; both oracle misses are the swiss roll's free boundary, and Table~\ref{tab:attrib} labels them so.

\paragraph{The scale-free negative (\S\ref{sec:exp-lid}).} The decisive test needs no ground truth: any hypothesis of the form ``the readout tracks $nd$'' predicts that the $d{=}1$ manifold reads exactly half of the $d{=}2$ manifolds. Over the $12$ pairwise settings that satisfy every validity guard --- the trefoil against each two-dimensional class separately, never against a mean --- the epoch-$300$ network's ratio lies in $[0.996,1.008]$ with mean $1.0003$, against an exact score that reads $[0.496,0.536]$ on the very same settings. The residual spread is not scatter but a monotone drift in $\sigma_t$ ($0.9989\to1.0142$ from $t{=}2$ to $t{=}12$), and letting each class in turn play the $d{=}1$ role, the exact score puts the true one $16$ null standard deviations below the others while the trained model puts it $0.1$ \emph{above}. Being a ratio this is invariant to the ground truth and to any common multiplicative bias, and its window is measured rather than assumed. The flattened amplitude readouts carry no class information about $d$ \emph{in this setting}, and the scope matters: \citet[\S6.2]{diepeveen2025scorebased} separate $d{=}1$ from $d{=}2$ exactly on curves and surfaces in $\R^3$, but their sample is one point in $\R^3$, their readout is the latent-variance spectrum of an isometry-regularised flow they train, and they evaluate at the noiseless point with no sampling estimator. Their success is what our window predicts, and it is why a \emph{scale-free} exponent is the right invariant absent isometry regularisation.

\paragraph{The trained-shell mechanism (\S\ref{sec:exp-shell}).} On every trained score the same sweep returns no shell, and that is a measurement rather than an absence. Of our $88$ trained shell readouts, $21$ sit at a setting where the exact score does resolve one; all $21$ are null, and the other $67$ carry no information because the exact score is null there too. In $83$ of the $88$ the two-branch estimator never resolved a second multiplier at all, and the mechanism is measured three ways: the trained Jacobian is $11.4$--$13.7\times$ too weak in amplitude at the smallest setting; the multiplier field is flat, at most $0.0178$ at the smallest noise and $0.7113$ over all $1491$ admitted probes against a divergence threshold of $1$; and the enabling lemma is absent, reading $0.021$--$0.124$ against the exact score's $0.92$--$0.995$ when normalised by $\rhog^\star\kappa_{\max}$, not moving with noise, and vanishing identically on a ReLU backbone whose Jacobian is piecewise constant. A score whose second derivative is a tenth of the curvature has a multiplier field that does not bend along the ray, so there is no shell to find --- a negative that is scale-free, needs no ground-truth dimension, and has a closed-form Bayes value on both branches. Three qualifications are underpowered rather than negative, and App.~\ref{app:hyp} states each: three settings bracket a sign change across an abstention window, and the one resolvable flip is a collapse of the positive branch rather than the Bayes mechanism; at the two smallest noise levels the divergence shell is refuted while the oscillation shell is strictly \emph{unresolved}; and every $\sigma_t$-exponent quoted is a two-point slope on one network.

\paragraph{The descent estimator on analytic energies (\S\ref{sec:exp-theta}).} The estimator used in practice initialises at $\N(0,I_D)$, runs fixed-step gradient descent on $E_t$ and regresses $\log\|\nabla E_t\|$ on $\log(E_t-\min_kE_t)$. By Remark~\ref{rem:descent} it must return $\tfrac12$ wherever the transverse spectrum is degenerate, whatever the model, and Fig.~\ref{fig:taut} confirms this on \emph{analytic} energies with no diffusion model at all: $\hat\theta=0.5000$ with $R^2=1.000000$ on the Morse--Bott energy under a divergent step ($\eta\lambda_{\max}\approx166$, the value the published hyper-parameters imply), $0.6085$ on the same energy under a stable step, and $0.077$ on a genuinely quartic energy whose true exponent is $\tfrac34$. It is a landscape-quadraticity detector with a mis-calibrated scale, not an estimator of $\theta$. The field estimator on the same four energies returns $0.500000,0.500000,0.500000$ and $0.750000$ --- the true exponent in every case.

\paragraph{The contraction measurement (\S\ref{sec:exp-stiff}).} Where $\Sc_t$ \emph{is} readable it costs one power iteration and is, by Thm.~\ref{thm:contract}, the entire explanation of the measured contraction rate. At the smallest guard-clean cloud setting ($\sigma_t{=}0.0426$) the exact score reads $\Sc_t/\Sc^\star_t$ within $8{\cdot}10^{-8}$ of $1$ on the two closed, uniformly sampled classes, while the epoch-$300$ network reads $0.050$--$0.055$ and the epoch-$50$ one $0.024$--$0.034$: an $18$ to $42\times$ deficit against a reference computed in the same harness at the same setting. At the CIFAR probe $\rhog^\star=0.1180$ and the Picard rate, fitted by a matrix pencil with forward passes only, is $0.1409$--$0.1498$ over four inputs --- $19\%$ to $27\%$ high, one-sided, exactly what $\rhog=\rhog^\star\sigma_t\lambda_1(J)$ with $\sigma_t\lambda_1>1$ predicts. At stride $20$ the multiplicity margin falls below $1$ in $8$ of $20$ settings and Picard measurably diverges ($\rhoe=1.0008$--$1.0078$ in the three images the estimator does not flag) where the Bayes map contracts at $0.7134$. None of this is evidence \emph{for} the Bayes law: on an affine $\M$ the normal block of the noised \emph{empirical} log-density is data-independent~\citep[Eq.~(10)]{farghly2025manifold}, so a perfectly memorising score also reads $\Sc_t=\Sc^\star_t$, and the certificate is one-sided. At $256^2$ the same nine settings return $\Sc_t=4.69$--$9.95$ in fp32 with the closed-form checks passing, quoted as a scale and not a stiffness; it locates the reported failure of inversion at that resolution in the \emph{score}, not in DDIM.

\paragraph{The field exponent on CelebA-HQ (\S\ref{sec:exp-theta}).} The image runs supply the counterexample. On pretrained CIFAR-10 and CelebA-HQ-256 --- converged by any external standard --- the descent protocol returns values indistinguishable from its own undertrained pole, and they are not merely inaccurate but \emph{inadmissible}: a non-constant $C^{1,1}$ function admits no {\L}ojasiewicz inequality with $\theta<\tfrac12$~\citep[Rem.~2.21]{rebjock2025fast}. The mechanism is the min-subtraction, which reproduces them with no model at all, returning $0.0036$ ($R^2=0.81$) on a perfectly quadratic energy, so we do not report $\hat\theta$ as a training certificate. What \eqref{eq:hj} supplies instead is the field-based readout calibrated in \S\ref{sec:exp-calib}: sample $E_t$ and $\nabla E_t$ along a normal ray at $\xo$, taking the slope as a test of transverse quadratic structure and the intercept as $\sigma_t^2\hat\lambda\to1$. On exact scores it returns $\theta=0.50000$ and $\sigma_t^2\hat\lambda=0.9999$; on pretrained CelebA-HQ-256, $\theta=0.468$--$0.523$ at $t\in\{30,100\}$ with $R^2\ge0.9998$ on two of three images, against a descent estimate below $0.01$ on the same model, and along a random ray it does not ($0.69$--$0.95$). The third image reads $0.34$ and the intercept fails everywhere at $\sigma_t^2\hat\lambda=2.0$ to $41.4$, so with $N=3$ we report the slope and not the amplitude; at $t=4$ the fit is near-perfect ($R^2>0.99999$) at an inadmissible exponent of $-0.03$ to $-0.24$, a falsification with a clean fit rather than noise.

\paragraph{The exact-score shells and the two cliffs (\S\ref{sec:exp-shell}).} On the exact score both shells sit where Thm.~\ref{thm:focal} puts them. At $\sigma_{t+1}=0.0585$ the oscillation shell reads $w_{\mathrm{osc}}=0.499946$, $0.499997$ and $0.497288$ on the sphere, the two spheres and the torus against the schedule-free $\tfrac12$, and the divergence shell $w_{\mathrm{div}}=0.789124$ against $1/(1+\rhog^\star)=0.785615$, the residual $+0.003509$ being the predicted $\tfrac12\tilde\sigma_{t+1}^2\|\mathrm{I\!I}\|^2=0.003440$ --- a two-term correction the torus confirms to $0.8\%$. The three classes reading $\tfrac12$ are exactly the three whose largest principal curvature is constant over the cloud; the other two, whose curvature spreads by $2.98\times$ and $3.39\times$, return a \emph{divergence cliff} at $w_{\mathrm{div}}/w_{\mathrm{osc}}=1.047$ and $1.049$ where a Bayes pair must be separated by $1.571$, so a single $\kappa_{\max}$ is not the supremum the law asks for and they fall outside its scope rather than contradict it. The trefoil was excluded \emph{a priori}: $\reach(\M)\kappa_{\max}=0.403<\tfrac12$ while $\rhog^\star<1$ always, so no schedule brings its curvature shell inside its own reach.

\paragraph{Validity windows (\S\ref{sec:exp-window}).}
Every readout of \S\ref{sec:method} has a two-sided window in $\sigma_t$ whose upper edge depends on \emph{where} it is evaluated: Assumption~\ref{ass:tube} needs that point inside the tube, which at $\xo$ means $\sigma_t\ll\reach(\M)$, while a \emph{sampling} estimator evaluates at $x_t$, whose distance to $\sqrt{\abar_t}\M$ concentrates at $\sigma_t\sqrt{D-d}$, so its condition is
\begin{equation}
\sigma_t\sqrt{D-d}\ \ll\ \reach(\M),
\label{eq:window}
\end{equation}
stricter by a factor of $11$ at $D-d=128$ and not new~\citep[Thm.~1]{niyogi2011topological}; what we add is the reading of it as an \emph{estimator-validity} criterion rather than a hypothesis of a recovery guarantee. Read as a metric-tube condition it excludes every sampling estimator everywhere, too pessimistic here because our clouds are \emph{products}: with the Federer reach computed after normalisation, the $\xo$ condition gives $\sigma_t/\reach=0.04$--$0.21$ at the smallest probed noise, \eqref{eq:window} gives $0.48$--$3.37$ and the product criterion $0.151$ to $0.834$, so \emph{all five classes are inside the window there} (Apps.~\ref{app:deriv} and \ref{app:exp-extra}). Its effect is to \emph{remove} an excuse: the trefoil reads $2.99\times$ its flattened target while inside the corrected window, so its failure to separate $d{=}1$ from $d{=}2$ leaves the protocol column and \S\ref{sec:exp-lid}'s oracle control places it in the model column.

\paragraph{Why the trained convergence shell is structurally unmeasurable.} One edge is not a limitation of this run but a derived property of the probe, and is itself a result: converting a shell radius into a curvature needs the Fermi window $\sigma_{t+k}\kappa_{\max}\le0.1$, while the shell must lie where the model has training mass, $r\lesssim1.4\,\sigma_{t+k}$, and since the shells sit at $w\ge\tfrac12$ the two are irreconcilable --- by $3.6$ to $5.6\times$, for \emph{any} manifold, schedule or training budget --- jointly satisfied in $0$ of our $25$ convergence settings (\S\ref{sec:disc}; Table~\ref{tab:settings} setting by setting). This is why every deviation in \S\ref{sec:exp-shell} is quoted against the exact score at matched $\sigma_t$ and never against the closed form, and why the trained null there is a statement about that score's second derivative rather than a report that a shell was sought at its predicted radius and found missing. On images the upper edge additionally requires $\reach(\M)$, which is estimable~\citep[Thm.~3.7, Alg.~1]{aamari2019reach,yaguchi2025geometry}.

%-----------------------------------------------------------------------

\paragraph{The product window (\S\ref{sec:exp-window}).} Read as a metric-tube condition, \eqref{eq:window} puts every sampling estimator outside its regime everywhere --- too pessimistic here, because our clouds are \emph{products}, for which the reach is $\min_p\reach(\M_p)$ and the binding scalar concentrates at $\sqrt{2\log n}$ rather than $\sqrt{D-d}$. With the Federer reach computed after normalisation, the $\xo$ condition gives $\sigma_t/\reach=0.04$--$0.21$ at the smallest probed noise, \eqref{eq:window} gives $0.48$--$3.37$, and the product criterion gives $0.151$ to $0.834$: \emph{all five classes are inside the window there}. App.~\ref{app:deriv} audits the five reaches, with their causes, and puts the three criteria side by side. We claim nothing for the product identity, which is necessary rather than sufficient; its effect is to \emph{remove} an excuse, since the trefoil reads $2.99\times$ its flattened target while inside the corrected window, so its failure to separate $d{=}1$ from $d{=}2$ leaves the protocol column and \S\ref{sec:exp-lid}'s oracle control places it in the model column. Prop.~\ref{prop:vacuous} is an independent reason to evaluate at $\xo$; the lower edge is model-side, below the scale at which $\Sc_t$ approaches $\Sc^\star_t$; and a further edge binds any curvature readout taken from the residual at $\xo$.
%-----------------------------------------------------------------------
\section{Falsifiable Predictions and Failure Attribution}
\label{app:hyp}

\paragraph{Attribution.} Separating bad assumptions from bad experiments is only possible once the Bayes values are known, and only once an exact score has been run through the identical harness. Of the failures we measure: the flattened readouts' inability to separate $d{=}1$ from $d{=}2$ is structural, survives every change of target, and the oracle control at the prescribed budget now \emph{proves} it, because the exact score reads $0.505$ on the same ratio the trained score never brings below $0.996$; the rank saturations are budget artefacts with a known repair, and raising $K$ moves artefacts in both directions rather than only in ours; the out-of-window readings are protocol errors with a computed fix; and the image failures remain unattributable, but for a measured reason rather than an unknown one --- along the score ray the posterior covariance never exceeds $1.12\sigma_t^2$, so there is no shell whose $\sigma$-exponent could be taken, and the live question is now the base point rather than the reach, since $\xo$ sits $\sqrt D$ standard deviations inside the training noise shell. Table~\ref{tab:attrib} gives one row per measured failure. Every verdict of \emph{model} is licensed by an oracle reading in the same harness at the same setting; \emph{untested} means the instrument never met an input it could decide.

\begin{table}[h]
\caption{Attribution. Verdicts are \emph{estimator}, \emph{model}, \emph{budget}, \emph{window} or \emph{untested}. ``Exact'' is the analytic or oracle score under the identical harness, probe cloud, flattening and budget. The three hypothesis-free rows assume no manifold, reach or unimodality, so the windows of \S\ref{sec:exp-window} do not apply to them and none may be discounted as out of regime. Image entries are certified lower bounds. A dash is ``not run'', never ``returned nothing''.}
\label{tab:attrib}
\centering
\footnotesize
\setlength{\tabcolsep}{4pt}
\newcolumntype{Y}[1]{>{\raggedright\arraybackslash}p{#1}}
\begin{tabular}{@{}Y{0.186\linewidth}Y{0.126\linewidth}Y{0.180\linewidth}Y{0.252\linewidth}Y{0.156\linewidth}@{}}
\toprule
readout & Bayes value & exact score & trained & verdict \\
\midrule
$\hat d(\xo)$, flattened & $256$ / $128$ & $255.99999$ / $128.16$ & $382$--$383$ & model \\
trefoil $\div$ 2-D ratio & $0.500$ & $0.5049$ & $0.9989$ & model \\
rank gap, $K=4D$ & index $128$ & $12.31$ at $128$ & $1.16$ at $373$, abstains & model \\
rank gap, $K=1.08D$ & index $128$ & not run & $5.06$ at $128$ & budget \\
$\nu$ split & $128$ / $256$ & $128$ / $256$ (4 of 5) & $n_{+}=0$--$46$ & model \\
$\nu$, swiss roll & $128$ / $256$ & $174$ / $210$ & --- & window (free boundary) \\
$\Sc_t/\Sc^\star_t$ & $1$ & $0.9999999$--$1.0000000$ & $0.050$--$0.055$ & model ($18$--$20\times$) \\
$w_{\rm osc}$, $w_{\rm div}$ & $\tfrac12$, $1/(1{+}\rhog^\star)$ & $0.499946$, $0.789124$ & none in $21$ of $21$ & model \\
$w_{\rm osc}$, swiss / trefoil & $\tfrac12$ & cliff at $0.19$ / $0.14$ & --- & window ($\kappa$ spread $3\times$) \\
$\|D^2F[Pv,Pv]\|/$ $\rhog^\star\kappa_{\max}$ & $1$ & $0.92$--$0.99$ & $0.021$--$0.124$; ReLU $0.000$ & model \\
$\sigma_t\lambda_{\max}(\mathrm{sym}\,J)$ & $\le1$ & $1.0000003$ & $1.011$--$1.132$; images $1.26$--$4.66$ & model, hyp.-free \\
skew index $Q$ & $0$ & $\le2\times10^{-6}$ & $0.003$--$0.045$; images $0.11$--$0.43$ & model, hyp.-free \\
$\sigma_t\tr J/D$ & $\le1$ & --- & $1.12$--$1.74$ on 12 of 16 & model, hyp.-free \\
branch attribution & curvature vs bottleneck & 2 correct, 1 abstain & no shell to attribute & untested \\
image shells, $\lambda_{\max}(C_t)$ & $2\sigma_t^2$ at the shell & no oracle at $D{=}3072$ & $\le1.12\sigma_t^2$ on the score ray & untested (base point) \\
\bottomrule
\end{tabular}
\end{table}

\paragraph{The predictions, restated with their verdicts.} \textbf{H1}: the gap $|\Sc_t-\Sc^{\rm exact}_t|$ closes with training at every noise level, and closes last at the smallest $\sigma_t$. \emph{Confirmed end to end} --- the gap closes from first to last snapshot in $51$ of $55$ (class, $t$) settings, the four exceptions all at $t{=}64$ where the model already overshoots the exact score at epoch $5$ --- and \emph{refuted as a monotonicity}: it closes monotonically in $3$ of $55$, and $\Sc_t$ is monotone increasing at fixed $t$ in $5$ of $55$, the sphere at $t{=}2$ reading $0.0500,0.0447,0.0333,0.0240,0.0215,0.0165,0.0548$ at epochs $5$--$300$. The front $\sigma^\star(\mathrm{epoch})$ --- the smallest $\sigma$ at which the model reaches half the exact stiffness --- moves inward by $2.15/2.29/2.16/2.21/2.09\times$ on the five classes, but non-increasingly at every step on only three. The correct form of H1 is a front, not a per-setting monotonicity, and we restate it as such. \textbf{H2}: on the exact score every entry of Table~\ref{tab:calib} reproduces. \emph{Confirmed}, with two closed-form corrections that are themselves the content: the $\Sc^\star$ reference must be the per-point minimum of the shape-operator invariant rather than its mean over the manifold, which moves the swiss roll from the shipped $\Sc/\Sc^\star=1.0019$ to $0.99999924$ and the torus from $+7.1\times10^{-4}$ to $-1.4\times10^{-4}$ --- both an offline recomputation substituting the per-point $Q_{\min}$ for the mean used in the original grid, not a reported column --- and the residual deviations of the two classes with non-constant curvature are exactly the amount their reference was mis-specified. \textbf{H3}: $\rhoe$ grows with $D$ only through $\Sc_t$. \emph{Not run}; it needs a matched-$\Sc_t$ pair across two ambient dimensions, which this run does not contain. \textbf{H4}: a patch-wise analogue of \eqref{eq:nu} reads far below the patch dimension. \emph{Half-confirmed, with a $36\%$ dead band}: on an $8\times8$ patch at $(12,12)$ in the full $3P^2=192$ basis, $\nu>\tfrac12$ in $124$ of $192$, so $\hat d=68$, which is $35.4\%$ of the patch dimension rather than ``$\ll$''; within $\pm0.25$ of the two Bayes values there are $108$ at $\nu{=}1$, $15$ at $\nu{=}0$ and $69$ unresolved. Under a resolved-only reading the tangential count is $7.8\%$ and H4 holds; under the midpoint rule it does not, and one patch on one image at one noise pair cannot separate the two readings. The $64$ stiffest directions also cluster at $0.886$--$0.916$ (mean $0.902$) against a Bayes normal exponent of exactly $+1$ --- a systematic $8$--$11\%$ shortfall with a spread of only $\pm1.5\%$, unexplained. \textbf{H5}: rank readouts track their budget. \emph{Confirmed as stated and refuted as a hope}. At $K=4D$ no trained setting reproduces the codimension-$128$ reading we obtained at $K=1.08D$ (sphere converged $r_{\max}=1.157$ at index $373$; sphere undertrained $1.028$ at index $383$), so raising $K$ retired our own positive; the positive moves to the exact score, at $12.307$ at index $128$. Symmetrically, the unthresholded difference rule emits a confident $\hat d\approx D-2$ on trained models and the ratio rule returns a confident wrong answer on the one class with a boundary (App.~\ref{app:deriv}). Both rules fail on trained scores; only the corner differs, and we claim neither.

\subsection*{What the run did not settle}

\noindent\emph{Stated compactly in \S\ref{sec:disc}:} Three things we predicted and could not measure, each stated in App.~\ref{app:hyp} in the form we are willing to defend. Whether the convergence-domain law is observable on \emph{any} trained score is open, and our own measurement of why it is not observable here --- a Hessian--Lipschitz constant at $2$--$12\%$ of the curvature the law reads --- is a two-point $\sigma$-fit on one network with no error bar. The multiplicity certificate is a theorem with a threshold verified in an analytic model and \emph{zero} observations on a trained score anywhere in this study. And the image failures stay unattributable: along the score ray the posterior covariance never exceeds $1.12\sigma_t^2$, so there is no shell whose $\sigma$-exponent could be taken, and the point Prop.~\ref{prop:vacuous} forces sits $\sqrt D$ standard deviations inside the training noise shell --- the live confound, in place of the unknown reach. Our two training states are snapshots of one run and our two networks two draws of one recipe, differing in $\Sc_t$ by a median $3.0\%$ and a maximum $13.8\%$, so no trained class-to-class comparison here carries a population error bar, only an estimator resolution and a network-to-network spread, and we name which we mean wherever we quote one. Finally, we claim the focal-radius law for the Bayes limit and the oracle only: the multiplier statement holds at every $\sigma_t$ and for every data law, but the finite-noise correction and any radius-to-curvature conversion hold only inside the Fermi window, and the law is observable only where $\kappa_{\max}$ is roughly constant over the probed set --- the two classes whose curvature spreads threefold return a divergence cliff at $w_{\rm div}/w_{\rm osc}=1.05$ against the $1.57$ a genuine Bayes pair must show, a hypothesis of the measurement rather than a failure of the theorem.

\end{document}